\newtheorem{lemma}{Lemma}
\newtheorem{theorem}{Theorem}
\theoremstyle{definition}
\newtheorem{definition}{Definition}
\newtheorem{remark}{Remark}
\title{\bf \LARGE Fast active learning for pure exploration \\ in reinforcement learning}
\author{\name Pierre M\'enard \email pierre.menard@inria.fr \\
	\addr Inria Lille, SequeL team
	\AND
	\name Omar Darwiche Domingues \email omar.darwiche-domingues@inria.fr \\
	\addr Inria Lille, SequeL team
	\AND
	\name Anders Jonsson \email anders.jonsson@upf.edu \\
	\addr Universitat Pompeu Fabra
	\AND
	\name Emilie Kaufmann \email emilie.kaufmann@univ-lille.fr \\
	\addr CNRS \& ULille (CRIStAL), Inria Lille, SequeL team
	\AND
	\name \'Edouard Leurent \email edouard.leurent@inria.fr \\
	\addr Renault \& Inria Lille, SequeL team
	\AND
	\name Michal Valko \email valkom@deepmind.com \\
	\addr DeepMind Paris
}
\begin{document}

\maketitle

\doparttoc 
\faketableofcontents 

\begin{abstract}\noindent
Realistic environments often provide agents with very limited feedback.
When the environment is initially unknown, the feedback, in the beginning, can be completely absent,
and the agents may first choose to devote all their effort on \emph{exploring efficiently.}
The exploration remains a challenge while it has been addressed with many hand-tuned heuristics with different levels
of generality on one side, and a few theoretically-backed exploration strategies on the other.
Many of them are incarnated by \emph{intrinsic motivation} and in particular \emph{explorations bonuses}.
A common rule of thumb for exploration bonuses is to use $1/\sqrt{n}$ bonus that is added to the empirical estimates of the reward,
where $n$ is a number of times this particular state (or a state-action pair) was visited.
We show that, surprisingly, for a pure-exploration objective of \emph{reward-free exploration},
bonuses that scale with $1/n$ bring faster learning rates, improving the known upper bounds with respect to the dependence on the horizon $H$.
Furthermore, we show that with an improved analysis of the stopping time, we can improve by a factor $H$ the sample complexity
in the \emph{best-policy identification} setting, which is another pure-exploration objective,
where the environment provides rewards but the agent is not penalized for its behavior during the
exploration phase.
\end{abstract}

\section{Introduction}
\label{sec:intro}

In reinforcement learning (RL), an agent learns how to act by interacting with an environment, which provides feedback in the form of reward signals. The agent's objective is to maximize the sum of rewards. In this work, we study how to explore efficiently. In particular  we wish to compute near-optimal policies using the least possible amount of interactions with the environment (in the form of observed transitions). In general, we may be either interested in the performance of the agent during the learning phase or we may only care for the performance of the learned policy. In the first setting, we can measure the performance of the agent by its \emph{cumulative regret} which is the difference between the total reward collected by an optimal policy and the total reward collected by the agent during the learning. Therefore, the agent is encouraged to \emph{explore} new policies but also \emph{exploit} its current knowledge  \citep{bartlett2009regal,jaksch2010near}. Another performance measure related to the regret consists in counting the number of times during the learning that the value of the policy used by the agent is $\epsilon$ far from the optimal one. The minimization of this count is formalized in the PAC-MDP setting introduced by \citet{kakade2013on}, see also \citet{dann2015sample} and \citet{dann2017unifying}.
The second setting and our central focus in this paper is called \emph{pure-exploration} where the agent is free to make mistakes during the learning and explore more vigorously \citep{fiechter1994efficient, kearns1998finite-sample,even2006action}. We provide results for  two pure-exploration settings when the environment is an
episodic \emph{Markov decision process} (MDP): the \emph{reward-free exploration} (RFE) and the \emph{best-policy identification} (BPI).

\paragraph{Best-policy identification} In BPI, an agent interacts with the MDP, observing \emph{transitions} and \emph{rewards}, to output an $\epsilon$-optimal policy with probability at least $1-\delta$ \citep{fiechter1994efficient}.
Most of the work on  BPI assumes that the agent has access to a \emph{generative model} (oracle, \citealp{kearns1998finite-sample}). Having an oracle access means that the agent  can simulate a transition from any sate-action pair. In particular, \citet{azar2013minimax} show that the optimal rate of the sample complexity, defined in this case as the number $n$ of oracle calls for getting an $\epsilon$-optimal policy with probability at least $1-\delta$ is of order\footref{note:empirical-qvi}
$\tcO\left(H^4SA \log(1/\delta)/\epsilon^2\right)$ where $S$ is the size of the state space, $A$ is the size of the action space, and $H$ is the horizon (see Table~\ref{tab:related_work} and also \citealp{agarwal2020model}, \citealp{sidford2018near}). The $\tcO$ notation hides terms that are poly-log in $H,S,A,\epsilon,$ and $\log(1/\delta)$.

 Even if the oracle access is reasonable in some situations (games, physics simulators, \dots), we focus on the  more challenging and practical setting where the agent has only access to a \emph{forward model}, meaning that the agent can only sample \emph{trajectories} from some predefined initial state. In this setting, the sample complexity $\tau$ is the number of trajectories that are necessary to output an $\epsilon$-optimal policy with probability at least $1-\delta$ (which leads to $n = H\tau$ sampled transitions). A straightforward but indirect approach to BPI, suggested for example by \cite{jin2018is}, is to run a regret-minimizing algorithm (for instance, \UCBVI of \citealp{azar2017minimax}) for a sufficiently large number $K$ of episodes, and output a policy $\widehat{\pi}$ that is chosen uniformly at random among the $K$ policies executed by the agent. Unfortunately, this indirect approach is sub-optimal with respect to the error probability $\delta$. Indeed, the resulting sample complexity scales with $1/\delta^2$, instead of the expected $\log(1/\delta)$, as can be seen in Table~\ref{tab:related_work} and Section~\ref{sec:BPI}. Recently, \citet{kaufmann2020adaptive} proposed \RFUCRL, which adapts an episodic version of a \UCRL-type algorithm \citep{jaksch2010near} to best-policy identification. In essence, they replace the random choice of the predicted policy by a data-dependent choice. This algorithm enjoys the correct dependence on $\delta$ prescribed by the lower bound of \citealp{dann2015sample}, but suffers a sub-optimal dependence on $S$, the size of the state space, when $\epsilon$ is small, as well as a sub-optimal dependence on the horizon $H$ (Table~\ref{tab:related_work}).

As an answer to the above sub-optimalities, we propose  \OurAlgorithmBPI, a new algorithm with a sample complexity of  $\tcO\left(SAH^3 \log(1/\delta)/\epsilon^2\right)$, which is optimal in terms of $S, A, \epsilon,$ and $\delta$, according to the lower bound of \citet{dann2015sample}. Moreover, we believe that the dependence on $H$ cannot be improved when the transitions are non-stationary, see the discussion below.
 \OurAlgorithmBPI  is based on \UCBVI  of \citet{azar2017minimax}. It relies on a non-trivial upper bound on the \emph{simple regret} of a \UCBVI-type algorithm (Lemma~\ref{lem:control_gap_BPI}), similar as \citet{dann2019policy}, that shaves the extra $S$ factor of \RFUCRL while keeping the right dependence on $\delta$. The main feature of this upper bound is that it can be computed in the empirical MDP and therefore is accessible to the agent.

\paragraph{Reward-free exploration} Efficient exploration is especially difficult when the reward signals are sparse, as the agent needs to interact with the environment while receiving almost no feedback. To address such situations, we also study \emph{reward-free exploration} introduced by \citet{jin2020reward-free}, where the interaction with the environment is split into two phases: (i) an \emph{exploration} phase, in which the agent learns the transition model $\widehat{p}$ of the MDP by interacting with the environment for a given number of episodes (still with a forward model); and (ii) a \emph{planning} phase, in which the agent receives a reward function $r$ and computes the optimal policy for the MDP parameterized by $(r, \widehat{p}\,)$. Given an accuracy parameter $\epsilon$, we measure the performance of the agent by the number of trajectories required to compute a policy in the planning phase, that is $\epsilon$-optimal \emph{for any given reward function} $r$ with probability at least $1-\delta$.

Our interest in RFE has two major reasons. First, in some applications, it is necessary to compute good policies for a wide range of reward functions. In such case, RFE allows to satisfy this need with only a single exploration phase. Second, RFE gives us good strategies for exploring the environment especially when the reward signal is very sparse or unknown.

One approach to pure exploration is to rely on known \emph{cumulative-regret minimization} methods and their guarantees. This path is taken by  \RFExplore of \citet{jin2020reward-free}. More precisely, \RFExplore  builds upon the \EULER algorithm by \cite{zanette2019tighter} by running one instance of this algorithm for each state $s$ and each episode step $h$ with a reward function incentivizing the visit of state $s$ in step $h$. The leading term in their sample complexity bound scales with   $\tcO\left(S^2 A H^5\log(1/\delta)/\epsilon^2\right)$ for MDPs with $S$ states, $A$ actions, and horizon $H$, which is sub-optimal in $H$ (Table~\ref{tab:related_work}).
\cite{kaufmann2020adaptive} propose \RFUCRL, an alternative algorithm that is reminiscent of the original algorithm proposed by \citet{fiechter1994efficient} for BPI with an improved sample complexity of $\tcO\left(S A H^4(\log(1/\delta)+S)/\epsilon^2\right)$. The main idea behind the algorithm of \citet{fiechter1994efficient} is to build upper confidence bounds on the estimation error of the value function of \emph{any} policy under \emph{any} reward function, and then act greedily with respect to these upper bounds to minimize the estimation error. Using a similar approach, \cite{wang2020on} study reward-free exploration with a particular linear function approximation, providing an algorithm with a sample complexity of order $d^3 H^6\log(1/\delta)/\epsilon^2$, where $d$ is the dimension of the feature space. Finally, \cite{zhang2020task-agnostic} study a setting in which there are only $N$ possible reward functions in the planning phase, for which they provide an algorithm whose sample complexity is $\tcO\left(H^5 S A\log(N)\log(1/\delta)/\epsilon^2\right)$.

In this work, we present \OurAlgorithmRF with sample complexity of
${\tcO \left(S A H^3(\log(1/\delta)+S)/\epsilon^2\right)}$, which improves the bound of \cite{kaufmann2020adaptive} by a factor of $H$. As discussed below, we believe that \OurAlgorithmRF has an \emph{optimal} dependence on $H$ for general, non-stationary transitions.

 A standard path to get such improved dependence is via confidence bonuses built using an \emph{empirical Bernstein inequality} \citep{azar2012on,azar2017minimax,zanette2019tighter}  to make appear a variance term and then to sharply upper-bound these variance terms with a \emph{Bellman type equation for the variances} (see Appendix~\ref{app:Bellman_variance} or \citealp{azar2012on}). However, this standard path  is far less clear for RFE  as the agent does not observe the rewards and therefore cannot compute the empirical variance of the values!
Therefore, one of our main technical contributions is to tackle this challenge by introducing a \emph{new empirical Bernstein inequality} derived from a control of the transition probabilities (Appendix~\ref{app:Bernstein}) and applying the Bellman-type equation for the variances to construct exploration bonuses that do not require a computation of empirical variances. Surprisingly, the bonuses used in \OurAlgorithmRF scale with $1/n(s,a)$ where $n(s,a)$ is the number of times the state-action pair $(s,a)$ was visited, instead of the usual $1/\sqrt{n(s,a)}$ bonus.

\begin{remark}[{\bf on the optimality of \OurAlgorithmBPI \& \OurAlgorithmRF}]
	\label{remark:optimality}
	As summarized in Table \ref{tab:related_work}, the algorithms we propose for BPI and RFE are sub-optimal \emph{only by a factor of $H$}. However, we would like to highlight that the lower bounds in both settings are proved for \emph{stationary transitions}, that is, when the transition probabilities are the same in every stage of the episode, whereas our upper bounds hold for \emph{non-stationary transitions}, i.e., when these probabilities can be different in each stage. In the non-stationary setting, we might expect these lower bounds to have an extra factor of $H,$\footnote{\cite{jin2018is} prove that this is indeed the case for \emph{regret} lower bounds.} in which case our algorithms would be optimal. Notice that in Table \ref{tab:related_work}, the upper bounds are for the non-stationary case, whereas the available lower bounds are for the stationary case.
\end{remark}

\paragraph{Contributions} To sum up, we highlight our major contributions.

\begin{itemize}
\item BPI: we provide \OurAlgorithmBPI,  with a sample complexity of $\tcO\left(H^3SA\log(1/\delta) \right)$ when $\epsilon$ is small enough. Up to a factor of $H$ and poly-log terms, it matches  the lower bound of \citet{dann2015sample} and improves the dependence either on $H$, $1/\delta$ or $S$ with respect to previous work.
\item RFE:  we provide \OurAlgorithmRF with a sample complexity of  ${\tcO\left(H^3SA(\log(1/\delta)+S)/\varepsilon^2\right)}$. Up to a factor of $H$ and poly-log terms, our rate matches simultaneously the lower bound $\Omega(H^2S^2A/\epsilon^2)$ by \citet{jin2020reward-free}, effective when $\delta$ is fixed and $\epsilon$ goes to zero, and the lower bound $\Omega(H^2SA \log(1/\delta)/\epsilon^2)$ by \citet{dann2015sample}, effective when $\epsilon$ is fixed and $\delta$ goes to zero.
 \item Due to the absence of the rewards in RFE, known techniques to get the optimal dependence in the horizon $H$ \citep{azar2012on,zanette2019tighter} do not apply. We therefore develop a new analysis that relies on the use of exploration bonuses scaling with $1/n$ instead of the standard $1/\sqrt{n}$.
\end{itemize}

\begin{table}[h]
	\centering
	{\footnotesize
	\renewcommand{\arraystretch}{2}

	\begin{tabular}{@{}llll@{}}
		\toprule
 {\bf Algorithm}           	    & {\bf Setting}  &  \vtop{\hbox{\strut {\bf Upper bound}}\hbox{\strut (\textit{non-stationary case})}} & \vtop{\hbox{\strut {\bf Lower bound}}\hbox{\strut (\textit{stationary case})}} \\
 \hline\hline
 Empirical \QVI \citep{azar2012on}%
 \tablefootnote{\label{note:empirical-qvi}\citet{azar2012on}, express both the upper and the lower bounds in the \textbf{total number of calls to the generative model (instead of trajectories)} and prove them for the $\gamma$-discounted infinite horizon. They are of order $SA(1-\gamma)^{-3}\epsilon^{-2}\log(1/\delta)$. We translate them to the episodic setting by replacing $(1-\gamma)^{-1}$ by the horizon $H$.
 In particular, the term $1/(1-\gamma)^3$ translates to $H^3$ and we include \textbf{an extra $H$ factor in the upper bound} due to the non-stationary transitions, i.e., when the transition probabilities depend on the stage $h \in [H]$.
 }
  &   gen.\,model   &  $\frac{H^4SA}{\epsilon^2}\log\left(\frac{1}{\delta}\right)$         & $\frac{H^3SA}{\epsilon^2}\log\left(\frac{1}{\delta}\right)$  \\\midrule
 \UCBVI+\;random recomm.\tablefootnote{See the last paragraph of Section~\ref{ss:bpiucrl} for a discussion.}
  & BPI & $\frac{H^3 SA}{\epsilon^2} \frac{\log(1/\delta)}{\delta^2}$ &
 \multirow{3}{*}{$\frac{H^2SA}{\epsilon^2}\log\left(\frac{1}{\delta}\right)$ }
 \\
 \BPIUCRL \citep{kaufmann2020adaptive}   & BPI        &   $\frac{H^4SA}{\epsilon^2}\left( \log\left(\frac{1}{\delta}\right) + S\right)$          &             \\
\rowcolor{Gray}
\OurAlgorithmBPI \emph{(this work)}	& BPI   & $\frac{H^3SA}{\epsilon^2}\log\left(\frac{1}{\delta}\right)$            &     \\\midrule

\UCBZero \citep{zhang2020task-agnostic}	& RFE, $N$ tasks & $\frac{H^5 SA\log(N)}{\epsilon^2}\log\left(\frac{1}{\delta}\right)$            &   $\frac{H^2 SA\log(N)}{\epsilon^2}$           \\ \midrule

\RFExplore \citep{jin2020reward-free}	& RFE    &  $\frac{H^7S^2A}{\epsilon}\log^3(\frac{1}{\delta}) +\frac{H^5S^2A}{\epsilon^2}\log\left(\frac{1}{\delta}\right)$           &
\multirow{3}{*}{ $\frac{H^2 SA}{\epsilon^2}\left(\log\left(\frac{1}{\delta}\right) + S \right)$\tablefootnote{We combined the $\Omega\left(H^2S^2/\epsilon^2\right)$ result of
	\citet{jin2020reward-free} and the $\Omega\left(H^2SA\log(1/\delta)/\epsilon^2\right)$ result of \citet{dann2015sample}.} }
       \\
\RFUCRL	\citep{kaufmann2020adaptive} &   RFE                                       &   $\frac{H^4SA}{\epsilon^2}\left( \log\left(\frac{1}{\delta}\right) + S \right)$           &             \\
\rowcolor{Gray} \OurAlgorithmRF \emph{(this work)}				&  RFE                                           &   $\frac{H^3 SA}{\epsilon^2}\left(\log\left(\frac{1}{\delta}\right) + S \right)$          &          \\ \bottomrule
	\end{tabular}
	}

	\caption{Best-policy identification (BPI) and reward-free exploration (RFE) algorithms with their respective upper bounds on the sample complexity, expressed in terms of the number of trajectories required by the algorithms.\footref{note:empirical-qvi}
	The factors and terms that are poly-log in $S,A,H,\epsilon,$ and $\log(1/\delta)$ are omitted.
\label{tab:related_work}
  }

\end{table}

\section{Setting}
\label{sec:setting}

 We consider a finite episodic MDP $\left(\cS, \cA, H, \{p_h\}_{h\in[H]},\{r_h\}_{h\in[H]}\right)$,  where $\cS$ is the set of states, $\cA$ is the set of actions, $H$ is the number of steps in one episode, $p_h(s'|s,a)$ is the probability transition from state~$s$ to state~$s'$ by taking the action $a$ at step $h,$ and $r_h(s,a)\in[0,1]$ is the bounded deterministic reward received after taking the action $a$ in state $s$ at step $h$. Note that we consider the general case of rewards and transition functions that are possibly non-stationary, i.e., that are allowed to depend on the decision step $h$ in the episode. We denote by $S$ and $A$ the number of states and actions, respectively.

\paragraph{Learning problem} The agent, to which the transitions are \emph{unknown}, interacts with the environment in episodes of length $H$, with a \emph{fixed} initial state $s_1$.\footnote{As explained by \citet{fiechter1994efficient} and \citet{kaufmann2020adaptive}, if the first state is sampled randomly as $s_1\sim p_0,$ we can simply add an artificial first state $s_0$ such that for  any action $a$, the transition probability is defined as the distribution $p_0(s_0,a) \triangleq p_0.$} At each step $h\in[H]$, the agent observes a state $s_h\in\cS$, takes an action $a_h\in\cA$ and  makes a transition to a new state $s_{h+1}$ according to the probability distribution $p_h(s_h,a_h)$. In  BPI, the agent receives a deterministic reward $r_h(s_h,a_h)$ at each step $h$, for \emph{fixed} reward functions $r \triangleq \{r_h\}_{h\in[H]}$, and it is required to output an $\epsilon$-optimal policy \emph{with respect to $r$}. In RFE, no rewards are observed during exploration, and the agent is required to output an estimate of the transition probabilities which can be used afterwards to compute an $\epsilon$-optimal policy for \emph{any reward function}.

\paragraph{Policy \& value functions} A \emph{deterministic} policy $\pi$ is a collection of functions $\pi_h : \cS \mapsto \cA$ for all $h\in [H]$, where every $\pi_h$  maps each state to a \emph{single} action. The value functions of $\pi$, denoted by $V_h^\pi$, are defined as 
\begin{align*}
	V_h^\pi(s) \triangleq \E\left[ \sum_{h'=h}^H r_{h'}(s_{h'}, a_{h'}) \,\middle|\, s_h = s  \right], 
	\quad \text{where} \quad a_{h'} \triangleq \pi_{h'}(s_{h'}) \quad \text{and} \quad s_{h'+1}\sim p_{h'}(s_{h'}, a_{h'})
\end{align*}
for $h \in [H]$. The optimal value functions are defined as $V_h^\star(s) \triangleq \max_{\pi} V_h^\pi(s)$. Both $V_h^\pi$ and $V_h^\star$ satisfy the Bellman equations \citep{puterman1994markov}, that are expressed using the Q-value functions $Q_h$ and $Q_h^\star$ in the following way,
\begin{align*}
	 V_h^\pi(s) &= \pi_h Q_h^\pi (s),
	\; \text{ with } \quad 
	Q_h^{\pi}(s,a) \triangleq r_h(s,a) + p_h V_{h+1}^\pi(s,a), \quad \text{and}
\\
 V_h^\star(s) &= \max_a Q_h^\star (s, a),
\; \text{ with } \quad 
Q_h^\star(s,a)\triangleq r_h(s,a) + p_h V_{h+1}^\star(s,a),
\end{align*}
where by definition, $V_{H+1}^\star \triangleq V_{H+1}^\pi \triangleq 0$. Furthermore, $p_{h} f(s, a) \triangleq \E_{s' \sim p_h(\cdot | s, a)} \left[f(s')\right]$   denotes the expectation operator with respect to the transition probabilities $p_h$ and
$(\pi_h g)(s)  \triangleq \pi_h g(s) \triangleq  g(s,\pi_h(s))$ denotes the composition with the policy~$\pi$ at step $h$.

\paragraph{Empirical MDP} Let $(s_h^i, a_h^i, s_{h+1}^i)$ be the state, the action, and the next state observed by an algorithm at step $h$ of episode $i$.  For any step $h\in [H]$ and any state-action pair $(s,a) \in \cS \times \cA$, we let $n_h^{t}(s,a) \triangleq \sum_{i=1}^{t} \ind{\left\{(s_h^i,a_h^i) = (s,a)\right\}}$ be the number of times the state action-pair $(s,a)$ was visited in step $h$ in the first $t$ episodes and  $n_h^{t}(s,a,s') \triangleq \sum_{i=1}^{t} \ind{\big\{(s_h^i,a_h^i,s_{h+1}^i) = (s,a,s')\big\}}$. These definitions permit us to define the empirical transitions as
\[\hp^{\,t}_h(s' | s,a) \triangleq \frac{n_h^t(s,a,s')}{n_h^t(s,a)} \text{ if } n_h^t(s,a)>0 \quad \text{and} \quad  \hp^{\,t}_h(s' | s,a) \triangleq \frac{1}{S} \quad \text{otherwise}. \]
Based on the empirical transitions and on exploration bonuses, we  introduce various data-dependent quantities that are useful for designing algorithms for either the BPI or the RFE objective. While the former allows the agent to access the reward function $r$ during exploration, the latter does not. Therefore, in all data-dependent quantities introduced in Section~\ref{sec:reward_free} to design a RFE algorithm, we  always materialize a possible dependency on $r$. In particular, we denote by $\hV^{t,\pi}_h(s ; r)$ and $\hQ^{t,\pi}_h(s ,a ; r)$ the value  and the action-value functions of a policy $\pi$ in the MDP with transition kernels $\hp^{\,t}$ and reward function $r$. In Section~\ref{sec:BPI}, in which the reward function is fixed, we  drop the dependency on $r$ and use the simpler notation 
$\hV^{t,\pi}_h(s )$ and $\hQ^{t,\pi}_h(s ,a )$.
\section{Reward-free exploration}
\label{sec:reward_free}

In this section, we consider reward-free exploration (RFE)  where the agent \emph{does not observe the rewards} during the exploration phase. 
Again, as the value functions defined in Section~\ref{sec:setting} depend on a reward function $r$, we sometimes use the notation $V_h(s ; r)$ and $Q_h (s,a ; r)$ instead of $V_h(s)$ and $Q_h(s,a)$.

\paragraph{Reward-free exploration} In RFE , the agent interacts with the MDP in the following way. At the beginning of the episode $t$, the agent decides to follow a policy~$\pi^t$, called the sampling rule, based only on the data collected up to episode $t-1$. Then, a \emph{reward-free} episode $z^t \triangleq (s_1^{t},a_1^{t},s_2^{t},a_2^{t},\dots,s_H^{t},a_H^{t})$  is generated starting from the the initial state $s_1^t\triangleq s_1$ by taking actions $a_{h}^t = \pi_{h}^t(s_{h}^t)$ and, for $h>1$, observing next-states according to $s_h^t \sim p_h(s_{h-1}^{t}, a_{h-1}^{t})$.
This new trajectory is added to the dataset $\cD_{t} \triangleq \cD_{t-1} \cup \{z_{t}\}$.
At the end of each episode, the agent can decide to stop collecting data, according to a \emph{random} stopping time $\tau$ and outputs an empirical transition kernel $\hp$ built with the dataset $\cD_\tau$.

Any RFE agent is therefore made of a triple $((\pi^{t})_{t\in\N^\star},\tau,\hp\,)$. Our goal is to design an agent that is  $(\varepsilon,\delta)$-PAC, \emph{probably approximately correct}, according to the following definition, for which the number of exploration episodes $\tau$, i.e., the \emph{sample complexity}, is as small as possible.

\begin{definition}[PAC algorithm for RFE] \label{def:RFE} An algorithm is $(\varepsilon,\delta)$-PAC for reward-free exploration if
\[\P\left(\text{for any reward function\ } r, V_1^{\star}(s_1 ; r) - V_1^{\hpi^{\star}_r}(s_1 ; r) \leq \epsilon\right) \geq 1 - \delta,\]
where $\hpi^{\star}_r$ is the optimal policy in the empirical MDP whose transitions are given by the transition kernel $\hp$ returned by the algorithm and whose reward function is $r$.

\end{definition}

\subsection{\OurAlgorithmRF}

In this section, we present  the \OurAlgorithmRF algorithm along with a high-probability bound on its sample complexity.  \OurAlgorithmRF relies on upper bounds on the estimation error between the true value functions and their empirical counterparts.
We start with the motivation for the design choices for \OurAlgorithmRF.  We then introduce quantities to which we engrave our choices
and which we subsequently use in the definition algorithm.  In the algorithmic template
we proceed as \cite{fiechter1994efficient} and \citet{kaufmann2020adaptive}  by upper bounding the estimation-error for all the policies \emph{with the striking difference that we only upper bound it at the initial state}. We finish this part by providing more intuition and discussion, in particular, we provide
\emph{technical insights into what \OurAlgorithmRF is optimizing} and then \emph{explain our $1/n$ versus $1/\sqrt{n}$ exploration bonuses},
the reasons for choosing them and the challenge with analysing them.

\paragraph{Estimation error} Given a policy $\pi$ and an arbitrary reward function $r$, we define the estimation error as the absolute difference between the Q-value of $\pi$ in the empirical MDP and its Q-value in the true MDP. Precisely, after episode $t$, for all $(s,a,h)$,  we define
\begin{align*}
  \he_h^{\,t,\pi} (s,a ; r) \triangleq \big|\hQ_h^{t,\pi}(s,a ; r) - Q_h^\pi(s,a ; r)\big|.
\end{align*}
To control the approximation error of the value of \emph{any policy} for \emph{any reward function} starting from the initial state $s_1$, we introduce the functions $W_h^t(s,a)$ defined inductively by  $W_{H+1}^{t}(s,a) \triangleq 0$ for all $(s,a) \in \cS \times \cA$ and for all $h\in[H]$ and $(s,a) \in \cS\times \cA$,
\begin{equation}\label{eq:def_W}
W_h^{t}(s,a) \triangleq \min\left(\!H, 15H^2 \frac{\beta(n_h^t(s,a),\delta)}{n_h^t(s,a)} +\left(1+\frac{1}{H}\right)\sum_{s'}\hp_h^{\;t}(s'|s,a) \max_{a'} W_{h+1}^{t}(s',a')\!\right),
\end{equation}
where $\beta(n_h^t(s,a),\delta)$ is a threshold that depends on how we build the confidence sets for the transitions probabilities. Notice that the $W_h^t$ are all \emph{independent} of the reward function $r$. As shown in the next lemma with  proof in Appendix~\ref{app:proofs_RF}, the function $W_1^t(s_1,a)$ can be used to upper bound the estimation error of any policy under any reward function in the initial state $s_1$.

\begin{restatable}{lemma}{lemmaWRF}
  \label{lem:upper_bound_error_W} With probability at least $1-\delta$, for any episode $t$, policy $\pi$, and  reward function $r$,
  \[
  \he_1^{\,t,\pi}(s_1, \pi_1(s_1) ; r) \leq  3e \sqrt{\max_{a\in\cA} W_1^{t} (s_1,a)} + \max_{a\in\cA} W_1^{t}(s_1,a).
  \]
In particular, the above holds on the event $\cF$ defined in Appendix~\ref{app:concentration}.
\end{restatable}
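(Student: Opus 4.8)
The plan is to relate the true estimation error $\he_1^{\,t,\pi}(s_1,\pi_1(s_1);r)$ to the auxiliary quantity $W_1^t(s_1,a)$ by a backward induction over $h$, using a suitable empirical Bernstein-type confidence bound on the transition kernels. First I would fix a good event $\cF$ (the one from Appendix~\ref{app:concentration}) on which, for every $(s,a,h,t)$, the empirical kernel $\hp_h^{\,t}(\cdot|s,a)$ is close to $p_h(\cdot|s,a)$; the key is to use the \emph{new} empirical Bernstein inequality announced in the introduction, which controls $|(\hp_h^{\,t} - p_h)f(s,a)|$ for a fixed function $f$ by something like $\sqrt{\mathrm{Var}_{\hp_h^{\,t}}(f)\,\beta/n} + H\beta/n$, and then absorb the variance term via the Bellman-type equation for variances. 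On this event, I would expand $\he_h^{\,t,\pi}(s,a;r) = |\hQ_h^{t,\pi}(s,a;r) - Q_h^\pi(s,a;r)|$ using the Bellman equations: since the rewards are identical in both MDPs, the difference telescopes into a one-step transition discrepancy term plus a propagated term $\hp_h^{\,t}(\he_{h+1}^{\,t,\pi})(s,a;r)$, yielding a recursion of the form $\he_h^{\,t,\pi} \le (\text{Bernstein bonus}) + \hp_h^{\,t}\,\he_{h+1}^{\,t,\pi}$.

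The second and central step is to upgrade this linear-in-$\he$ recursion to a self-bounding one that matches the definition~\eqref{eq:def_W} of $W_h^t$. The point of the $1/n$ bonus is that the Bernstein bonus $\sqrt{\mathrm{Var}(V_{h+1}^\pi)\,\beta/n}$ is dominated — after an AM-GM / Cauchy–Schwarz split — by $\tfrac{1}{H}\mathrm{Var}(\cdot) + H\beta/n$, and then one uses the law of total variance (the Bellman equation for variances, Appendix~\ref{app:Bellman_variance}) to bound $\sum_h \hp_{1:h}\mathrm{Var}_h$ by $O(H^2)$, so that the accumulated variance contributions collapse into the $15H^2\beta(n_h^t(s,a),\delta)/n_h^t(s,a)$ term and the $(1+1/H)$ inflation factor in~\eqref{eq:def_W}. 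Running this backward from $h=H+1$ with $W_{H+1}^t = 0$, and taking a max over actions at each stage to handle the fact that $W_h^t$ must dominate $\he_h^{\,t,\pi}$ \emph{uniformly over $\pi$} (the policy only enters through which action is selected, and $\max_{a'}W_{h+1}^t(s',a')$ upper-bounds every $\he_{h+1}^{\,t,\pi}(s',a';r)$ regardless of $\pi_{h+1}$), I get that $\he_h^{\,t,\pi}(s,a;r)$ is controlled by something comparable to $W_h^t(s,a)$, up to the square-root cross term that appears because the Bernstein bonus involves $\sqrt{\mathrm{Var}(V^\pi)}$ while $W$ accumulates the variance proxy itself. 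Carefully tracking that cross term through the induction — typically via an inequality of the shape ``if $x \le a + \sqrt{ax} + bx$ with $b<1$ then $x \lesssim$ (something)'' — is what produces the final bound $\he_1^{\,t,\pi}(s_1,\pi_1(s_1);r) \le 3e\sqrt{\max_a W_1^t(s_1,a)} + \max_a W_1^t(s_1,a)$, with the constant $3e$ emerging from iterating the $(1+1/H)^h \le e$ inflation $H$ times together with the constant in the AM-GM split.

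Finally, I would note that the min with $H$ in~\eqref{eq:def_W} is consistent because $\he_h^{\,t,\pi} \le H$ trivially (both Q-values lie in $[0,H]$), and that the clipping does not break the induction since replacing $W$ by $\min(H,W)$ only decreases the right-hand side while the trivial bound covers the truncated case. The whole argument is carried out deterministically on the event $\cF$, so the probability statement is exactly $\P(\cF) \ge 1-\delta$, which is where the concentration work of Appendix~\ref{app:concentration} is invoked.

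\textbf{Main obstacle.} I expect the hard part to be the passage from the linear recursion $\he_h \le \text{bonus} + \hp_h^{\,t}\he_{h+1}$ to the self-bounded recursion matching $W_h^t$: one must handle the $\sqrt{\mathrm{Var}(V_{h+1}^\pi)}$ term without ever observing rewards (hence without empirical variances), which forces the use of the transition-based Bernstein inequality, and then one must control the propagation of the resulting square-root cross terms through $H$ stages so that they reassemble into exactly the stated $3e\sqrt{W_1^t} + W_1^t$ form rather than blowing up by extra factors of $H$ — this is precisely the technical novelty the introduction flags, and getting the constants and the $(1+1/H)$ factor to line up is the delicate accounting.
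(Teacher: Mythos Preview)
Your high-level ingredients are right --- work on the event $\cF$, expand via the Bellman equations, apply the KL-based empirical Bernstein bound (Lemma~\ref{lem:Bernstein_via_kl}), invoke the Bellman equation for variances, and accumulate the $(1+1/H)^h \le e$ factors --- but the mechanism you propose for handling the variance term is the wrong one, and as written it would not close.

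Concretely, you suggest using AM--GM at each step of the recursion to replace $\sqrt{\Var_{\hp_h^{\,t}}(\hV_{h+1}^{t,\pi})\,\beta/n}$ by $\tfrac{1}{H}\Var(\cdot) + H\beta/n$, and then appealing to the law of total variance to say that the accumulated $\tfrac{1}{H}\Var$ terms ``collapse'' into the $15H^2\beta/n$ bonus. But they do not collapse there: the law of total variance gives $\sum_h \hp_h^{\,t,\pi}(s,a)\Var_{\hp_h^{\,t}}(\hV_{h+1}^{t,\pi}) \le H^2$, so after the $1/H$ scaling you are left with an additive contribution of order $H$ at $s_1$. This is a constant that does not vanish as the counts grow, so the resulting bound on $\he_1^{\,t,\pi}$ would never drop below $\epsilon$ and the stopping rule would be vacuous. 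The self-bounding heuristic ``$x\le a+\sqrt{ax}+bx$'' you mention does not apply either, since the variance is of $\hV_{h+1}^{t,\pi}$, not of $\he_{h+1}^{\,t,\pi}$.

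What the paper actually does is keep the $\sqrt{\Var}$ term intact through the recursion: it defines an intermediate quantity $Z_h^{t,\pi}$ carrying both the $\sqrt{\cdot}$ bonus and the $1/n$ bonus, proves $\he_h^{\,t,\pi}\le Z_h^{t,\pi}$ by induction, and then splits $Z_h^{t,\pi}\le Y_h^{t,\pi}+W_h^{t,\pi}$ where $Y$ collects only the square-root terms and $W^{t,\pi}$ only the $1/n$ terms. The law of total variance is then applied \emph{once}, at $h=1$, via Cauchy--Schwarz:
\[
\pi_1 Y_1^{t,\pi}(s_1;r) \;\le\; 3e\sqrt{\sum_{h,s,a}\hp_h^{\,t,\pi}(s,a)\frac{\Var_{\hp_h^{\,t}}(\hV_{h+1}^{t,\pi})}{H^2}}\;\sqrt{\sum_{h,s,a}\hp_h^{\,t,\pi}(s,a)\Big(\tfrac{H^2\beta}{n}\wedge 1\Big)}.
\]
The first factor is at most $1$ by the law of total variance (the total variance of the return is at most $H^2$), and the second factor is shown to be at most $\sqrt{\pi_1 W_1^{t,\pi}(s_1)}$ via a short clipping argument. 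This is where the $3e\sqrt{W_1^t}$ term comes from; the additive $W_1^t$ is simply the $W^{t,\pi}$ piece of the split, and one finishes by observing $W_h^{t,\pi}\le W_h^t$ because $W_h^t$ maximizes over the next action. So the square root in the statement is produced by a single global Cauchy--Schwarz step at the initial state, not by a local AM--GM inside the induction.
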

\noindent
With all the above definitions, we are now ready to outline our \OurAlgorithmRF algorithm.
\begin{framed}
\begin{itemize}[leftmargin=9em]
 \item[\bf  sampling rule:] the policy $\pi^{t+1}$ is the greedy policy with respect to $W_h^{t},$
  \[\forall s \in \cS, \forall h \in [H], \ \ \pi^{t+1}_h(s) = \argmax_{a\in\cA} W_h^{t}(s,a)\]
  \vspace{-2em}
 \item[\bf  stopping rule:] $\tau  = \inf \left\{ t\in \N : 3e \sqrt{\pi_1^{t+1} W_1^{t} (s_1)} + \pi_1^{t+1} W_1^{t}(s_1)  \leq \varepsilon/2 \right\}$
 \item[\bf  prediction rule:] output the empirical transition kernel $\hp = \hp^{\,\tau}$
\end{itemize}
\end{framed}
\noindent
Next, we provide a bound on the sample complexity of \OurAlgorithmRF with a proof in Appendix~\ref{app:proofs_RF}.
\begin{restatable}{theorem}{theoremRF}\label{thm:sample_complexity_RF} For $\delta\in(0,1)$, $\epsilon\in (0,1]$, \OurAlgorithmRF with threshold $\beta(n,\delta) \triangleq \log\left(3SAH/\delta\right) + S\log\left(8e(n+1)\right)$
is $(\epsilon,\delta)$-PAC for reward-free exploration. Moreover, \OurAlgorithmRF stops after $\tau$ steps where, with probability  at least $1-\delta$,
\begin{align*}
 \tau \leq \frac{H^3SA}{\epsilon^2} \left( \log(3SAH/\delta) +S\right) C_1+1
\end{align*}
and where $C_1 \triangleq 5587 e^6 \log\left(e^{18} \left( \log(3SAH/\delta) +S\right) H^3SA/\epsilon\right)^2$.
\end{restatable}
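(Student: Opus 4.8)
The plan is to establish the $(\epsilon,\delta)$-PAC guarantee first and then the sample-complexity bound, both resting on Lemma~\ref{lem:upper_bound_error_W}. For correctness: on the event $\cF$ (which has probability at least $1-\delta$), Lemma~\ref{lem:upper_bound_error_W} tells us that for any policy $\pi$ and any reward $r$, the estimation error at the root satisfies $\he_1^{\,\tau,\pi}(s_1,\pi_1(s_1);r) \le 3e\sqrt{\max_a W_1^\tau(s_1,a)} + \max_a W_1^\tau(s_1,a)$. Since the sampling rule makes $\pi^{t+1}$ greedy w.r.t.\ $W_h^t$, we have $\pi_1^{\tau+1}W_1^\tau(s_1) = \max_a W_1^\tau(s_1,a)$, so the stopping rule guarantees this bound is at most $\epsilon/2$ at time $\tau$. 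A standard simulation-lemma-style argument then converts a uniform bound on $\he_1$ into the PAC guarantee: writing $\hpi^\star_r$ for the optimal policy in $(\hp^\tau,r)$ and $\pi^\star_r$ for the optimal policy in $(p,r)$, one has $V_1^{\star}(s_1;r) - V_1^{\hpi^\star_r}(s_1;r) \le [V_1^{\pi^\star_r}(s_1;r) - \hV_1^{\tau,\pi^\star_r}(s_1;r)] + [\hV_1^{\tau,\hpi^\star_r}(s_1;r) - V_1^{\hpi^\star_r}(s_1;r)] \le 2\max_{\pi,r}\he_1^{\,\tau,\pi}(s_1,\pi_1(s_1);r) \le \epsilon$, using optimality of $\hpi^\star_r$ in the empirical MDP to drop the middle term. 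This is the easy half.

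\textbf{Sample complexity.} The core is to show that if $t$ is large enough — roughly $t \gtrsim H^3SA(\log(3SAH/\delta)+S)/\epsilon^2$ up to the logarithmic constant $C_1$ — then on $\cF$ we must have $\pi_1^{t+1}W_1^t(s_1) \le \epsilon^2/C'$ for a suitable constant, forcing $3e\sqrt{\pi_1^{t+1}W_1^t(s_1)} + \pi_1^{t+1}W_1^t(s_1)\le \epsilon/2$ and hence $\tau\le t$. So I would upper-bound $\max_a W_1^t(s_1,a)$. Unrolling the recursion \eqref{eq:def_W} along the greedy policy $\pi^{t+1}$ gives, before the $\min(H,\cdot)$ truncation bites, a bound of the form $\max_a W_1^t(s_1,a) \le \sum_{h=1}^H (1+1/H)^{h-1}\, \E^{\pi^{t+1}}\!\big[15H^2\beta(n_h^t(s_h,a_h),\delta)/n_h^t(s_h,a_h) \mid s_1\big]$, where the expectation is over trajectories drawn in the \emph{true} MDP under $\pi^{t+1}$ — here one needs the event $\cF$ again to replace $\hp^t$-expectations by $p$-expectations with the extra $(1+1/H)$ slack absorbing the transition error, which is exactly the point of the $W$-recursion's inflation factor. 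The prefactor $(1+1/H)^{h-1}\le e$. So everything reduces to controlling $\sum_t \E^{\pi^{t+1}}[\sum_h \beta(n_h^t,\delta)/n_h^t]$.

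\textbf{The main obstacle} is the pigeonhole / summation argument for the $1/n$ bonuses. The key inequality is that $\sum_{t=1}^{T}\sum_{h=1}^H \E^{\pi^{t+1}}\big[\ind{n_h^t(s_h,a_h)>0}/n_h^t(s_h,a_h)\big]$ is only of order $HSA\log(T)$ — not $HSA\sqrt{T}$ as one would get with $1/\sqrt{n}$ bonuses — because $\sum_{t}1/n_h^t(s,a)\ind{\cdot} \lesssim \log(n_h^T(s,a))$ for each fixed $(s,a,h)$ by the harmonic-sum bound, after handling the martingale difference between the realized $\ind{(s_h^{t+1},a_h^{t+1})=(s,a)}/n_h^t(s,a)$ and its conditional expectation $\E^{\pi^{t+1}}[\cdot]$ (a Freedman- or self-normalized-type concentration, presumably also folded into $\cF$). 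Combining: on $\cF$, for all $T$, $\sum_{t=1}^{T}\max_a W_1^t(s_1,a) \lesssim e\cdot 15H^3\cdot \beta(T,\delta)\cdot SA\log(T)$. Since $W_1^t(s_1,\cdot)$ need not be monotone in $t$, I would instead argue by contradiction: if $\tau > T$ then $\max_a W_1^t(s_1,a) > \epsilon^2/C'$ for all $t\le T$ (else we'd have stopped), so $T\epsilon^2/C' < \sum_{t\le T}\max_a W_1^t(s_1,a)\lesssim H^3 SA\,\beta(T,\delta)\log T$, which is impossible once $T$ exceeds the claimed bound $\frac{H^3SA}{\epsilon^2}(\log(3SAH/\delta)+S)\,C_1$ — here one plugs in $\beta(T,\delta)=\log(3SAH/\delta)+S\log(8e(T+1))$, notes $\beta(T,\delta)\log T$ is poly-logarithmic in $T$, and checks that the explicit constant $C_1 = 5587e^6\log(e^{18}(\log(3SAH/\delta)+S)H^3SA/\epsilon)^2$ is large enough to close the loop (the $\log^2$ shape of $C_1$ comes precisely from the product of the $\log T$ pigeonhole factor and the $S\log T$ inside $\beta$, each evaluated at $T$ of the claimed order). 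The bookkeeping of all the numerical constants through this contradiction step — making sure $C_1$ as written actually suffices — is the fiddly part; everything else is structural.
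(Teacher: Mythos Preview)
Your plan is essentially the paper's proof: correctness via Lemma~\ref{lem:upper_bound_error_W} plus the simulation-lemma decomposition, then for sample complexity unroll $W_1^t$ along $\pi^{t+1}$ in the true MDP, sum the resulting $1/n$ bonuses by a harmonic/pigeonhole bound, and invert. Two technical points where the paper differs from your sketch are worth knowing. First, the $(1+1/H)$ factor in \eqref{eq:def_W} is \emph{not} what absorbs the $\hat p\to p$ switch; that factor is already consumed in the proof of Lemma~\ref{lem:upper_bound_error_W}. To pass from $\hat p$ to $p$ the paper applies the Bernstein-via-KL inequality (Lemma~\ref{lem:Bernstein_via_kl}) to $(\hat p_h^{\,t}-p_h)\pi_{h+1}^{t+1}W_{h+1}^t$, uses $\Var_{p_h}(W)\le H\,p_hW$, and gets an extra $(2/H)$ of slack, arriving at a recursion with prefactor $(1+3/H)$ and constant $21H^2$ in place of $15H^2$. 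Second, the paper does not run a Freedman argument on the realized counts; instead it uses the event $\cE^{\mathrm{cnt}}\subset\cF$ (which says $n_h^t\ge\tfrac12\bar n_h^t-\beta^{\mathrm{cnt}}(\delta)$) to replace $n_h^t$ by the pseudo-counts $\bar n_h^t=\sum_{\ell\le t}p_h^\ell(s,a)$, after which the sum $\sum_t p_h^{t+1}(s,a)/(\bar n_h^t\vee1)$ is bounded deterministically by $4\log(\bar n_h^{T+1}+1)$ via Lemma~\ref{lem:sum_1_over_n}. Finally, rather than lower-bounding each $W_1^t$ by $c\epsilon^2$, the paper sums the stopping inequality, applies Cauchy--Schwarz to $\sum_t\sqrt{W_1^t}$, and inverts the resulting quadratic in $\sqrt{T}$ (Lemma~\ref{lem:reverse_inequality}); your contradiction route would work too up to constants.
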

\noindent
As a consequence, the sample complexity of \OurAlgorithmRF is of order $\tcO\left(H^3SA(\log(1/\delta)+S)\right)$ and matches the lower bound of $\Omega(H^2S^2A/\epsilon^2)$ of \citet{jin2020reward-free} up to a factor of $H$ and poly-log terms. This lower bound is informative in the regime where $\delta$ is considered as fixed and $\epsilon$ tends to zero. Moreover, up to a factor $H$, our result
also matches the lower bound of $\Omega\left(H^2SA\log(1/\delta)/\epsilon^2\right)$ given by \citet{dann2015sample} which is informative in the regime where $\epsilon$ is fixed and $\delta$ tends to $0$.  In Remark~\ref{remark:optimality}, we discuss the optimality of \OurAlgorithmRF with respect to $H$. As we see in the next section, the quadratic dependence on $S$ can be avoided for BPI.

\paragraph{What is \OurAlgorithmRF optimizing?} Contrary to \RFUCRL of \citet{kaufmann2020adaptive}, \OurAlgorithmRF does \emph{not} build upper bounds on all estimation errors $\he_h^{\,t,\pi}(s,a ; r)$ for all $h\in [H]$ but \emph{only for the one at the initial state} $\he_1^{\,t,\pi}(s_1,\pi_1(s_1) ; r)$. Moreover, the upper bound is not $W_1^t(s_1,a)$ itself, but a function of this quantity, as can be seen in Lemma~\ref{lem:upper_bound_error_W}. Hence, if \OurAlgorithmRF actually follows the optimism-in-the-face-of-uncertainty principle, what quantities are  $W_h^t$ upper bounding?
To answer this question and provide an intuition on the sampling rule of \OurAlgorithmRF,
fix a policy $\pi$ and let $P^\pi$ be the probability distribution governing a random trajectory $(s_1,a_1,s_1,a_2, \ldots,s_H,a_H)\sim P^\pi$ in the MDP. Next, let  $\hP^{t,\pi}$ be the probability distribution of a trajectory $(s_1^t,a_1^t,s_1^t,a_2^t, \ldots,s_H^t,a_H^t)\sim \hP^{t,\pi}$ in the empirical MDP built using the dataset $\cD_t$ at episode $t$. Assuming that all the state action pairs have been visited at least once at time $t$, using the  chain rule (see~\citealp{garivier2019explore}) we can compute the Kullback-Leibler divergence between these two probability distributions as
\[
  \KL(\hP^{t,\pi},P^\pi) = \sum_{h=1}^H \sum_{s,a} \hp_h^{\,t,\pi}(s,a) \KL\left(\hp_h^{\,t,\pi}(s,a),p_h(s,a)\right),
\]
where $\hp_h^{\;t,\pi}(s,a)$ is the probability to reach state-action $(s,a)$ at step $h$ under policy $\pi$ in the empirical MDP in episode $t$.
Notice now that the bonus of the form $\beta(n,\delta)/n$ used to define $W^t$ is  \emph{by design chosen to be an upper-confidence bound} on the Kullback-Leibler divergence between the empirical transition probability and the transition probability. Indeed, in Appendix~\ref{app:concentration} we show that with high probability, for all $(s,a) \in \cS \times \cA$ and $h \in [H]$,
\[
\KL\left(\hp_h^{\,t}(s,a),p_h(s,a)\right) \leq \frac{\beta(n_h^t(s,a),\delta)}{n_h^t(s,a)}\cdot
\]
Therefore, omitting the clipping to $H$ in~\eqref{eq:def_W}, we have that
\[
\max_{\pi}\KL(\hP^{t,\pi},P^\pi) \lesssim \frac{ \pi_1^{t+1} W_1^t(s_1)}{H^2}.
\]
Therefore, \OurAlgorithmRF can be interpreted as an algorithm minimizing an upper-confidence bound on the loss of  $\max_{\pi}\KL(\hP^{t,\pi},P^\pi)$, which requires bonuses of the form $\beta(n,\delta)/n$ instead of $\sqrt{\beta(n,\delta)/n}$. Notice that this loss is of the same flavor as the one introduced by \citet{hazan2019provably}.

\paragraph{Bonuses of $1/n$ versus $1/\sqrt{n}$} Our approach differs from the bonuses typically used in regret minimization (e.g., \citealp{azar2017minimax}) and in prior work in reward-free exploration \citep{kaufmann2020adaptive,zhang2020task-agnostic}, which uses bonuses proportional to $\sqrt{1/n(s,a)}$.  Intuitively, since $1/n$-bonuses decay faster with $n$, our algorithm is more exploratory: once a state-action pair $(s,a)$ has been visited, the bonus associated to it will be more strongly reduced than if we used $\sqrt{1/n}$-bonuses and the algorithm tends to visit other state-action pairs before returning to $(s, a)$ again. Technically, this might seem very surprising. Indeed, if we want to estimate the mean $\mu$ of a random variable $X$ with an estimator $\widehat{\mu}_n$ computed with $n$ i.i.d.\,samples from $X$, the error $\abs{\mu-\widehat{\mu}_n}$ scales with $\sqrt{1/n}$ by Hoeffding's inequality, which explains the shape of the bonuses used in previous works.
However, instead of bounding the error $\abs{\mu-\widehat{\mu}_n}$, our concentration inequalities based on the KL divergence give us a bound on the quadratic term $(\mu-\widehat{\mu}_n)^2\!\!,$ which scales with~$1/n$. This allows us to use a Bellman-type equation for the variance of the value functions and reduce the sample complexity by a factor of $H$, similarly to previous work on regret minimization \citep{azar2017minimax}. The main challenge in our case is that, in reward free exploration, we need to upper bound the sum of variances for \emph{any possible value function}, which makes this technique considerably more challenging to analyze than for the regret minimization.

\subsection{Proof sketch}

We first sketch the proof of Lemma~\ref{lem:upper_bound_error_W}. We begin as it is done in the analysis of \citet{kaufmann2020adaptive}. For a fixed policy $\pi$ and an arbitrary reward function $r$, we  decompose the estimation error of the Q-value function of $\pi$ at the state-action pair $(s,a)$ as, for all reward function $r$,
\begin{align*}
\he_h^{\,t,\pi}(s,a ; r)&\leq \big|\hQ_h^{\,t,\pi}(s,a ; r)-Q_h^\pi(s,a; r)\big| \leq \big|(\hp_h^{\,t}-p_h)V_{h+1}^{\pi}(s,a)\big|+\hp_h^{\,t}|\hV^{t,\pi}_{h+1}-V_{h+1}^\pi|(s,a)\\
&=\big|(\hp_h^{\,t}-p_h)V_{h+1}^{\pi}(s,a)\big|+ \hp_h^{\,t} \pi_{h+1}^{\,t} \he_{h+1}^{\,t,\pi}(s,a ; r).
\end{align*}
Similarly to \citet{azar2017minimax} and \citet{zanette2019tighter}, to obtain the optimal dependency with respect to the horizon $H$, we would like to apply the Bernstein inequality to control the first term. Since we need to do it for all value functions of all policies, we could use a covering of this function space and conclude with a union bound, see \citet{domingues2020regret}. Instead we show, via Lemma~\ref{lem:Bernstein_via_kl} in Appendix~\ref{app:Bernstein}, that from a control of the deviations of the empirical transition probabilities such that
\[
\KL\left(\hp_h^{\,t}(s,a),p_h(s,a)\right) \leq \frac {\beta(n_h^t(s,a), \delta)} {n_h^t(s,a)}
\]
with high probability, we deduce an empirical Bernstein inequality,
\begin{align*}
  \he_h^{\,t,\pi}(s,a ; r)\leq &\ 3\sqrt{ \frac{\Var_{\hp_h^{\,t}}(\hV_{h+1}^{t,\pi})(s,a ; r)}{H^2} \left(\frac{H^2\beta(n_h^t(s,a),\delta)}{n_h^t(s,a)}\wedge 1\right)} +15H^2\frac{\beta(n_h^t(s,a),\delta)}{n_h^t(s,a)}\\
  &+\left(1+\frac{1}{H}\right)\hp_h^{\,t} \hpi_{h+1}^{\,t} \he_{h+1}^{\,t,\pi}(s,a ; r),
\end{align*}
where the variance of $\hV_{h+1}^{t,\pi}$,\emph{ in particular with respect to} $\hp_h^{\,t}(\cdot|s,a)$ is defined as
\begin{align*}
	\Var_{\hp_h^{\,t}}(\hV_{h+1}^{t,\pi})(s,a ; r)
	= \sum_{s'}  \hp_h^{\,t}(s'|s,a)
	\left(
	    \hV_{h+1}^{t,\pi}(s' ; r)
	    -
	    \E_{z \sim \hp_h^{\,t}(\cdot|s,a)}\left[\hV_{h+1}^{t,\pi}(z ; r) \right]
	\right)^{\!2}.
\end{align*}

\noindent Therefore, defining  $Z_{H+1}^{t,\pi}(s,a ; r) \triangleq 0$ and recursively the functions
\begin{align*}
  Z_h^{t,\pi}(s,a ; r) &\triangleq\min\!\Bigg(\!H, 3\sqrt{ \frac{\Var_{\hp_h^{\,t}}(\hV_{h+1}^{t,\pi})(s,a ; r)}{H^2} \left(\frac{H^2\beta(n_h^t(s,a),\delta)}{n_h^t(s,a)}\wedge 1\right)} +15H^2\frac{\beta(n_h^t(s,a),\delta)}{n_h^t(s,a)}\\
 &\qquad\qquad+\left(1+\frac{1}{H}\right)\hp_h^{\,t} \pi_{h+1} Z_{h+1}^{t,\pi}(s,a ; r)\Bigg)\CommaBin\end{align*}
we prove by induction that for all $(s,a,h),$
 \begin{equation}
   \label{eq:he_lower_W_main}
 \he_h^{\,t,\pi}(s,a ; r)\leq Z_h^{t,\pi}(s,a ; r).
\end{equation}
We now \emph{split} $Z^{t,\pi}$ in two terms.  The first term is the one with the bonus in $\sqrt{1/n}$ and the second one with the bonus in $1/n$. Precisely, for all $(s,a),$ we define recursively two other quantities $Y_{H+1}^{t,\pi}(s,a ; r) \triangleq W_{H+1}^{t,\pi}(s,a) \triangleq 0$ and
\begin{align*}
  Y_h^{t,\pi}(s,a ; r) &\triangleq 3\sqrt{ \frac{\Var_{\hp_h^{\,t}}(\hV_{h+1}^{t,\pi})(s,a ; r)}{H^2} \left(\frac{H^2\beta(n_h^t(s,a),\delta)}{n_h^t(s,a)}\wedge 1\right)} + \left(1+\frac{1}{H}\right)\hp_h^{\,t} \pi_{h+1} Y_{h+1}^{t,\pi}(s,a ; r)\\
  W_h^{t,\pi}(s,a) &\triangleq\min\left(\!H, 15H^2 \frac{\beta(n_h^t(s,a),\delta)}{n_h^t(s,a)} +\left(1+\frac{1}{H}\right)\hp_h^{\,t} \pi_{h+1} W_{h+1}^{t,\pi}(s,a) \right).
\end{align*}
We then prove by induction (Appendix~\ref{app:proofs_RF}, Step 2 of the proof of Lemma \ref{lem:upper_bound_error_W}) that for all $h,s,a,$
\begin{align}
  \label{eq:Z_lower_YW_main}
Z_h^{t,\pi}(s,a ; r) \leq Y_h^{t,\pi}(s,a ; r)+W_h^{t,\pi}(s,a).
\end{align}
Note that although $Z_h^{t,\pi}(\cdot ; r)$ is a high-probability upper bound on $\widehat{e}_h^{\,t,\pi}(\cdot ; r)$, we cannot use it to build a sampling rule reducing the errors as it still depends on the reward function $r$ through the empirical variance term, and this knowledge is only available in the planning phase. To obtain an upper bound on $Z_h^{t,\pi}(\cdot ; r)$ which does not depend on $r$, we now further upper-bound $Y^{t,\pi}(\cdot ; r)$. The key tool for this purpose is to use the Bellman equation for the variances, see Appendix~\ref{app:Bellman_variance}. We denote by $\hp_h^{\,t,\pi}(s,a)$ the probability of reaching the state-action pair $(s,a)$ at step $h$ under the policy~$\pi$ in the empirical MDP at time $t$. Using Cauchy-Schwarz inequality, Lemma~\ref{lem:law_of_total_variance} in Appendix~\ref{app:Bellman_variance}, and the fact that that variance of the sum of reward is upper bounded by $H^2$, we get
\begin{align*}
  \pi_1 Y_1^{t,\pi}(s_1; r) &= 2\sum_{s,a}\sum_{h=1}^H \hp_h^{t,\pi}(s,a) \left(1+\frac{1}{H}\right)^{h-1}  \sqrt{ \frac{\Var_{\hp_h^{\,t}}(\hV_{h+1}^{t,\pi})(s,a; r)}{H^2} \left(\frac{H^2\beta(n_h^t(s,a),\delta)}{n_h^t(s,a)}\wedge 1\right)}\\
  &\leq 3e \sqrt{\sum_{s,a}\sum_{h=1}^H \hp_h^{\,t,\pi}(s,a)\frac{ \Var_{\hp_h^{\,t}}(\hV_{h+1}^{t,\pi})(s,a ; r)}{H^2}}\sqrt{\sum_{s,a}\sum_{h=1}^H \hp_h^{\,t,\pi}(s,a) \left(\frac{H^2\beta(n_h^t(s,a),\delta)}{n_h^t(s,a)}\wedge 1\right)}\\
  &\leq 3e \sqrt{\frac{1}{H^2}  \E_{\pi, \hp_h^{\,t}}\!\!\left[ \left(\sum_{h=1}^H r_h(s_h,a_h) -\hV_1^{\pi}(s_1; r)\right)^2\right]  }\sqrt{\sum_{s,a}\sum_{h=1}^H \hp_h^{\,t,\pi}(s,a) \left(\frac{H^2\beta(n_h^t(s,a),\delta)}{n_h^t(s,a)}\wedge 1\right)}\\
  &\leq 3e\sqrt{\sum_{s,a}\sum_{h=1}^H \hp_h^{\,t,\pi}(s,a) \left(\frac{H^2\beta(n_h^t(s,a),\delta)}{n_h^t(s,a)}\wedge 1\right)}\leq 3e\sqrt{W_1^{t,\pi}(s_1)},
\end{align*}
where the last inequality is proved in Appendix~\ref{app:proofs_RF} (Step 3 of the proof of Lemma \ref{lem:upper_bound_error_W}).

\noindent
Combining inequality $\pi_1 Y_1^{t,\pi}(s_1; r) \leq 3e\sqrt{W_1^{t,\pi}(s_1)}$ with~\eqref{eq:he_lower_W_main} and~\eqref{eq:Z_lower_YW_main} yields that, for all $\pi$,
\[
\he_1^{\,t,\pi}(s_1,\pi_1(s_a) ; r) \leq  3e \sqrt{ \pi_1 W_1^{t,\pi} (s_1)} + \pi_1 W_1^{t,\pi}(s_1).
\]
Finally, we note that by construction, $\pi_1 W_1^{t,\pi}(s_1)\leq \max_{a\in\cA} W_1^{t}(s_1,a)$, which allows us to conclude the proof of Lemma~\ref{lem:upper_bound_error_W}.

Next, we sketch the proof of Theorem~\ref{thm:sample_complexity_RF}. The fact that $\OurAlgorithmRF$ is $(\epsilon,\delta)$-PAC is a simple consequence of Lemma~\ref{lem:upper_bound_error_W}. Indeed, on an event of probability at least $1-\delta$, if the algorithm stops at time $\tau$ we know that for all policy $\pi$ and for all reward function $r$,
\begin{align*}
  \frac{\epsilon}{2} &\geq 3e \sqrt{\max_{a\in\cA} W_1^{\tau} (s_1,a)} + \max_{a\in\cA} W_1^{\tau}(s_1,a)
  \geq \he_1^{\,\tau,\pi}(s_1,\pi_1(s_1) ; r) = | \hV^{\tau,\pi}_1(s_1 ; r) - V_1^\pi(s_1 ; r)|.
\end{align*}
Therefore, still on the same event it holds that
\begin{align*}
V_1^{\star}(s_1 ; r) - V_1^{\hpi^{\star}}(s_1 ; r) &= V_1^{\star}(s_1 ; r) - \hV_1^{\tau, \pistar_r}(s_1 ; r) + \hV_1^{\tau,\pistar_r}(s_1 ; r) - \hV_1^{\tau, \hpi^{\star}_r}(s_1 ; r) + \hV_1^{\tau, \hpi^{\star}_r}(s_1 ; r)- V_1^{\hpi^{\star}_r}(s_1 ; r) \\
&\leq |V_1^{\pi^\star_r}(s_1 ; r) - \hV_1^{\tau, \pistar_r}(s_1 ; r)| + |\hV_1^{\tau, \hpi^{\star}_r}(s_1 ; r)- V_1^{\hpi^{\star}_r}(s_1 ; r)| \leq \epsilon\,.
\end{align*}
The proof of the bound on the sample complexity is close to the one of a regret bound. We fix a time $t<\tau$. We start by proving an upper-bound on $W_1^t(s_1,\pi^{t+1}(s_1))$. For that using again the empirical Bernstein inequality of Lemma~\ref{lem:Bernstein_via_kl}, with high probability, it holds that
\begin{equation*}
  W_h^t(s,a) \leq 21 H^2 \left(\frac{\beta(n_h^t(s,a),\delta)}{n_h^t(s,a)}\wedge 1\right) +\left(1+\frac{3}{H}\right)p_h \pi_{h+1}^{t+1} W_{h+1}^{t}(s,a).
\end{equation*}
We denote by $p_h^{t}(s,a)$ the probability to reach the state-action pair $(s,a)$ at step $h$ under policy $\pi^{t}$ in the true MDP. Unfolding the previous inequality and switching to the pseudo-counts, defined by $\bar n_h^t(s,a) \triangleq \sum_{\ell = 1}^t p_h^{\ell}(s,a)$, by Lemma~\ref{lem:cnt_pseudo} proved in Appendix~\ref{app:count} we get
\begin{equation}
  \label{eq:upper_bound_W1_main}
  \pi_1^{t+1} W_1^t(s_1) \leq 84 e^3 H^2\sum_{h=1}^H \sum_{s,a} p_h^{t+1}(s,a)
  \frac{\beta(\bn_h^t(s,a),\delta)}{\bn_h^t(s,a)\vee 1}\cdot
\end{equation}
Since $t<\tau$ we know that due to stopping rule
\[
 \epsilon \leq 3e \sqrt{\pi_1^{t+1} W_1^{t} (s_1)} + \pi_1^{t+1} W_1^{t}(s_1).
\]
Summing the previous inequalities for $0\leq t< \tau$ then using the Cauchy-Schwarz inequality we obtain
\begin{align*}
\tau \epsilon &\leq \sum_{t=0}^{\tau-1} 3e \sqrt{\pi_1^{t+1} W_1^{t} (s_1)} + \pi_1^{t+1} W_1^{t}(s_1)
\leq 3e \sqrt{\tau \sum_{t=0}^{\tau-1} \pi_1^{t+1} W_1^{t} (s_1) } +  \sum_{t=0}^{\tau-1} \pi_1^{t+1} W_1^{t} (s_1).
\end{align*}
We  now upper-bound the sum that appears in the left-hand terms. Using successively \eqref{eq:upper_bound_W1_main}, $\beta(\cdot,\delta)$ is increasing, Lemma~\ref{lem:sum_1_over_n} of Appendix~\ref{app:count}, we have
\begin{align*}
   \sum_{t=0}^{\tau-1} \pi_1^{t+1} W_1^{t} (s_1) & \leq  84 e^3 H^2\sum_{t=0}^{\tau-1}\sum_{h=1}^H \sum_{s,a} p_h^{t+1}(s,a)
   \frac{\beta(\bn_h^t(s,a),\delta)}{\bn_h^t(s,a)\vee 1}\\
   &\leq 84 e^3 H^2 \beta(\tau-1,\delta)\sum_{h=1}^H \sum_{s,a} \sum_{t=0}^{\tau-1}
   \frac{\bn_h^{t+1}(s,a) - \bn_h^t(s,a)}{\bn_h^t(s,a)\vee 1}\\
   &\leq 336 e^3 H^3 S A \log(\tau+1) \beta(\tau-1,\delta).
\end{align*}
Therefore, combining with the above inequality with the previous one, we get
\[
\tau \epsilon \leq 55e^3 \sqrt{\tau H^3 S A \log(\tau+1) \beta(\tau-1,\delta)} +   336 e^3 H^3 S A \log(\tau+1) \beta(\tau-1,\delta).
\]
Using Lemma \ref{lem:reverse_inequality}, we invert the inequality above and obtain an upper bound on $\tau$, which allows us to conclude the proof of the theorem.

\section{Best-policy identification}
\label{sec:BPI}
Unlike in the previous section, we now consider a more standard setup in which there is a single reward function $r$ and in which the agent observes the reward at each step, during the exploration phase. To ease the presentation, we drop the dependence on the reward $r$ in all data-dependent quantities introduced in this section.

\paragraph{Best-policy identification} In BPI, the agent interacts with the MDP in a way described in Section~\ref{sec:setting}.
Notice that the difference from Section~\ref{sec:reward_free} is that the agent also observes the reward.
In each episode $t$, the agent follows a policy $\pi^t$ (the sampling rule) based only on the information collected up to and including episode $t-1$. At the end of each episode, the agent can decide to stop collecting data (we denote by $\tau$ its random stopping time) and outputs a guess $\hpi$ for the optimal policy.

A BPI algorithm is therefore made of a triple $((\pi^{t})_{t\in\N},\tau,\hpi)$. The goal is to build an $(\varepsilon,\delta)$-PAC algorithm according to the following definition, for which the \emph{sample complexity}, that is the number of exploration episodes $\tau$, is as small as possible.

\begin{definition}[PAC algorithm for BPI] \label{def:BPI} An algorithm is $(\varepsilon,\delta)$-PAC for best policy identification if
it \emph{returns a policy} $\hpi$ after some number of episodes $\tau$ that satisfies
\[\P\left(\Vstar_1(s_1) - V_1^{\hpi}(s_1)\leq \epsilon\right) \geq 1 - \delta.\]
\end{definition}

\subsection{\OurAlgorithmBPI}
\label{ss:bpiucrl}

Similarly to \citet{azar2017minimax} and \citet{zanette2019tighter}, we define upper confidence bounds on the optimal Q-value and value functions as
\begin{align*}
  \utQ_h^{t}(s,a) &\triangleq \min\Bigg(\!H, r_h(s,a) + 3\sqrt{\Var_{\hp_h^{\,t}}(\utV_{h+1}^t)(s,a) \frac{\betastar(n_h^t(s,a),\delta)}{n_h^t(s,a)}}+14H^2 \frac{\beta(n_h^t(s,a),\delta)}{n_h^t(s,a)}\\
  &\qquad+\frac{1}{H}\hp_h^{\,t} (\utV_{h+1}^{t}-\ltV_{h+1}^t)(s,a)   +\hp_h^{\,t} \utV_{h+1}^{t}(s,a)\Bigg),\\
  \utV_h^t(s) &\triangleq \max_{a\in\cA}\utQ_h^t(s,a),\\
  \utV_{H+1}^{t}(s) &\triangleq0,
\end{align*}
where $\betastar$ is some exploration rate (that does \emph{not} have a linear scaling in the number of states $S$, unlike $\beta$) and $\ltV^t$ is a lower confidence bound on the optimal value function; see Appendix~\ref{app:optimism} for a complete definition.

As in RFE, we need to build an upper confidence bound on the gap $\Vstar_1(s_1)-V_1^{\pi^{t+1}}(s_1),$ between the value of the optimal policy and the value of the current policy, to define the stopping rule. We recursively define the functions $G^{t}$ as $G_{H+1}^{\,t}(s,a) \triangleq 0$ for all $(s,a)$ and for all $(s,a,h)$ as
\begin{align*}
  G_h^{\,t}(s,a) &\triangleq \min\!\!\Bigg(\!H,  6\sqrt{\Var_{\hp_h^t}(\utV_{h+1}^t)(s,a) \frac{\betastar(n_h^t(s,a),\delta)}{n_h^t(s,a)}}+ 36 H^2 \frac{\beta(n_h^t(s,a),\delta)}{n_h^t(s,a)}\\
  &\qquad+ \left(1+\frac{3}{H}\right)\hp_h^{\,t} \pi_{h+1}^{t+1} G_{h+1}^t(s,a)\!\Bigg)\quad \text{for $h \leq H$}.
\end{align*}
We prove the following result in Appendix~\ref{app:proof_BPI}.
\begin{restatable}{lemma}{lemmaBPI}
 \label{lem:control_gap_BPI}
With probability at least $1-\delta$, for all $t$,
\[
\Vstar_1(s_1) - V_1^{\pi^{t+1}}(s_1) \leq \pi_1^{t+1}G_1^t(s_1)\,.
\]
In particular it holds on the event $\cG$ defined in Appendix~\ref{app:concentration}.
\end{restatable}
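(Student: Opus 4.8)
I would prove the lemma by backward induction on $h$, establishing on a high-probability event $\cG$ the \emph{pointwise} bound
\[
\utV_h^{t}(s) - V_h^{\pi^{t+1}}(s) \;\leq\; \pi_h^{t+1} G_h^{t}(s) \qquad \text{for all } s\in\cS,\ h\in[H+1],\ t,
\]
carried along with the companion bound $V_h^{\pi^{t+1}}(s) - \ltV_h^{t}(s) \leq \pi_h^{t+1}G_h^{t}(s)$ (needed to make the recursion self-bounding). The base case $h=H+1$ is immediate since every term vanishes. Evaluating the main bound at $h=1,\,s=s_1$ and adding the optimism property $\Vstar_1(s_1)\leq \utV_1^{t}(s_1)$ (proved in Appendix~\ref{app:optimism}) gives $\Vstar_1(s_1) - V_1^{\pi^{t+1}}(s_1)\leq \pi_1^{t+1}G_1^t(s_1)$, which is the claim; the "for all $t$" and the probability $1-\delta$ come from $\cG$, the intersection of the KL/Bernstein concentration events and the optimism--pessimism events of Appendix~\ref{app:concentration}, whose thresholds $\beta,\betastar$ already absorb the union bounds over $(s,a,h)$ and over $t$.

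\textbf{Inductive step.} Fix $s$, write $a=\pi_h^{t+1}(s)$, so $\utV_h^t(s)=\utQ_h^t(s,a)$ and $V_h^{\pi^{t+1}}(s)=Q_h^{\pi^{t+1}}(s,a)=r_h(s,a)+p_hV_{h+1}^{\pi^{t+1}}(s,a)$. Using the definition of $\utQ_h^t$ (the reward cancels) and splitting $\hp_h^t\utV_{h+1}^t - p_hV_{h+1}^{\pi^{t+1}} = (\hp_h^t-p_h)\utV_{h+1}^t + p_h(\utV_{h+1}^t - V_{h+1}^{\pi^{t+1}})$, I would bound three pieces. (i) The transition-estimation term $(\hp_h^t-p_h)\utV_{h+1}^t(s,a)$ by a Bernstein inequality at the cheaper rate $\betastar$ (as in the optimism analysis of Appendix~\ref{app:optimism}, cf.\ \citealp{azar2017minimax}), producing a second copy of $3\sqrt{\Var_{\hp_h^t}(\utV_{h+1}^t)(s,a)\,\betastar/n_h^t(s,a)}$ plus an $O(H^2\beta/n_h^t(s,a))$ remainder. (ii) The transport term $p_h(\utV_{h+1}^t-V_{h+1}^{\pi^{t+1}})(s,a)$, which by the induction hypothesis is at most $p_h(\pi_{h+1}^{t+1}G_{h+1}^t)(s,a)$; writing $p_h=\hp_h^t+(p_h-\hp_h^t)$ and using $\Var_{\hp_h^t}(g)\leq H\,\hp_h^t g$ for $0\leq g\leq H$ together with the KL-Bernstein inequality of Lemma~\ref{lem:Bernstein_via_kl} and $2\sqrt{xy}\leq x/H+Hy$, this is at most $(1+\tfrac1H)\hp_h^t(\pi_{h+1}^{t+1}G_{h+1}^t)(s,a)+O(H^2\beta/n_h^t(s,a))$. (iii) The correction term $\tfrac1H\hp_h^t(\utV_{h+1}^t-\ltV_{h+1}^t)(s,a)$, which by pessimism $\ltV_{h+1}^t\leq V_{h+1}^{\pi^{t+1}}$ plus the two induction hypotheses is at most $\tfrac2H\hp_h^t(\pi_{h+1}^{t+1}G_{h+1}^t)(s,a)$.

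\textbf{Closing the recursion.} Adding the three bounds to the explicit bonus $14H^2\beta/n_h^t(s,a)$ in $\utQ_h^t$ gives
\[
\utQ_h^t(s,a)-Q_h^{\pi^{t+1}}(s,a)\;\leq\; 6\sqrt{\Var_{\hp_h^t}(\utV_{h+1}^t)(s,a)\tfrac{\betastar}{n_h^t(s,a)}} + 36H^2\tfrac{\beta}{n_h^t(s,a)} + \Big(1+\tfrac3H\Big)\hp_h^t\pi_{h+1}^{t+1}G_{h+1}^t(s,a),
\]
i.e.\ exactly the unclipped expression defining $G_h^t(s,a)$ (the coefficients $1+\tfrac1H$ and $\tfrac2H$ combine to the $1+\tfrac3H$ of the definition, and the Bernstein constant becomes $6$). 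Since moreover $\utQ_h^t(s,a)-Q_h^{\pi^{t+1}}(s,a)\leq H$ (because $\utQ_h^t\leq H$ and $Q^{\pi^{t+1}}\geq 0$), the difference is at most $\min(H,\cdot)=G_h^t(s,a)$, so the clipping in $\utQ_h^t$ never hurts, and the induction closes; the same computation with the roles of the upper and lower bounds adjusted yields the companion bound. Summing over $a=\pi_h^{t+1}(s)$ is unnecessary here since we only evaluate at one action.

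\textbf{Main obstacle.} I expect the delicate part to be making the recursion genuinely self-bounding: the "on-policy" errors $p_h(\utV_{h+1}^t-V_{h+1}^{\pi^{t+1}})$ and the confidence-width $\utV_{h+1}^t-\ltV_{h+1}^t$ must both be converted into multiples of $\hp_h^t\pi_{h+1}^{t+1}G_{h+1}^t$ whose coefficients add up to exactly $1+\tfrac3H$ (so that the later unrolling, via $(1+3/H)^H\leq e^3$ in the sample-complexity proof, loses only a constant), which forces the companion bound on $V^{\pi^{t+1}}-\ltV$ to be threaded through the induction and the numerical constants ($14\to 36$, $3\to 6$) to be tracked carefully; a secondary subtlety is keeping every crude transport estimate at the $S$-dependent rate $\beta$ while only the value-function concentration uses the cheaper $\betastar$.
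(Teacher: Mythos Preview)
Your overall strategy—backward induction carrying the bound $\utV_h^t - V_h^{\pi^{t+1}} \leq \pi_h^{t+1}G_h^t$ together with a companion bound on $V_h^{\pi^{t+1}} - \ltV_h^t$—is sound in spirit, but step~(i) has a real gap. You cannot apply a Bernstein inequality at the cheap rate $\betastar$ directly to $(\hp_h^{\,t} - p_h)\utV_{h+1}^t$: the event $\cE^\star$ in Appendix~\ref{app:concentration} controls only $(\hp_h^{\,t} - p_h)\Vstar_{h+1}$, because $\Vstar_{h+1}$ is a \emph{fixed} function while $\utV_{h+1}^t$ is data-dependent. To use $\betastar$ one must route through $\Vstar_{h+1}$, i.e.\ write $(\hp_h^{\,t}-p_h)\utV_{h+1}^t = (\hp_h^{\,t}-p_h)\Vstar_{h+1} + (\hp_h^{\,t}-p_h)(\utV_{h+1}^t-\Vstar_{h+1})$. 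The first piece, after the variance switching of Lemma~\ref{lem:tQ_upper_bound}, produces not only the $3\sqrt{\cdot}$ term but also a $\tfrac{1}{H}\hp_h^{\,t}(\utV_{h+1}^t-\ltV_{h+1}^t)$ correction; the second piece must go through Lemma~\ref{lem:Bernstein_via_kl} at rate $\beta$ and yields yet another $\tfrac{1}{H}$-type correction. Since under your two hypotheses each $\tfrac{1}{H}\hp_h^{\,t}(\utV_{h+1}^t-\ltV_{h+1}^t)$ costs $\tfrac{2}{H}\hp_h^{\,t}\pi_{h+1}^{t+1}G_{h+1}^t$, the recursion closes at $1+c/H$ with $c>3$, not at $1+3/H$. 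Your proof would then not match the $G_h^t$ actually defined in the paper; you would need either a redefined $G$ with a larger constant (harmless for the sample complexity, but a different lemma) or a tighter accounting.

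The paper avoids this by taking a different route: it introduces a single auxiliary quantity $\rQ_h^t$ (and $\rV_h^t$) satisfying $\rQ_h^t \leq \min\big(\ltQ_h^t,\,Q_h^{\pi^{t+1}}\big)$ (Lemma~\ref{lem:lower_bound_rQ_BPI}), and proves $\utQ_h^t - \rQ_h^t \leq G_h^t$ by induction. The advantage is that every correction—$\utV_{h+1}^t - \ltV_{h+1}^t$, $\Vstar_{h+1} - \rV_{h+1}^t$, and the recursive term—is dominated by the single quantity $\utV_{h+1}^t - \rV_{h+1}^t$ (using $\rV \leq \ltV$ and $\Vstar \leq \utV$), so each $\tfrac{1}{H}$ correction contributes exactly $\tfrac{1}{H}$ rather than $\tfrac{2}{H}$; this is precisely what makes the constant land on $1+3/H$. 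The paper also decomposes the transport term as $\hp_h^{\,t}\utV_{h+1}^t - p_h\rV_{h+1}^t = \hp_h^{\,t}(\utV_{h+1}^t-\rV_{h+1}^t) + (\hp_h^{\,t}-p_h)\Vstar_{h+1} + (p_h-\hp_h^{\,t})(\Vstar_{h+1}-\rV_{h+1}^t)$, so the leading recursive piece already sits under $\hp_h^{\,t}$ and no $p_h\to\hp_h^{\,t}$ conversion is needed. In short, your two-hypothesis scheme can be made to work but at the cost of worse constants; the paper's $\rQ$ device is the trick that delivers exactly the stated $G_h^t$.
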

\noindent
We are now ready to define our \OurAlgorithmBPI algorithm.
\begin{framed}
\begin{itemize}[leftmargin=9em]
\item[\bf sampling rule:] the policy $\pi^{t+1}$ is the greedy policy with respect to $\utQ_h^{t}$,
  \[\forall s \in \cS, \forall h \in [H], \ \ \pi^{t+1}_h(s) = \argmax_{a\in\cA} \utQ_h^{t}(s,a)\]
    \vspace{-2em}
 \item[\bf  stopping rule:]  $\tau =\inf \left\{ t\in \N : \pi_1^{t+1}G_1^t(s_1)  \leq \epsilon \right\}$
 \item[\bf  prediction rule:]  $\hpi = \pi^{\tau+1}$
\end{itemize}
\end{framed}
\noindent
We provide a sample complexity bound for \OurAlgorithmBPI in the next theorem, which we prove in Appendix~\ref{app:proof_BPI}.
\begin{restatable}{theorem}{theoremBPI}\label{thm:sample_complexity_BPI} For $\delta\in(0,1)$, $\epsilon\in (0,1/S^2]$, \OurAlgorithmBPI using thresholds $\beta(n,\delta) \triangleq \log\!\big(3SAH/\delta\big) + S\log\big(8e(n+1)\big)$ and $\betastar(n,\delta) \triangleq \log(3SAH/\delta) + \log\!\big(8e(n+1)\big)$
is $(\epsilon,\delta)$-PAC for best policy exploration. Moreover, with probability $1-\delta$,
\begin{align*}
\tau \leq \frac{H^3SA}{\epsilon^2} \big( \log(3SAH/\delta) +1\big) C_1+1,
\end{align*}
where $C_1 \triangleq 5904e^{26} \log\left(e^{30} \left( \log(3SAH/\delta) +S\right) H^3SA/\epsilon\right)^2.$
\end{restatable}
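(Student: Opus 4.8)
The plan is to follow the template of the proof of Theorem~\ref{thm:sample_complexity_RF}, with the reward-free surrogate $W_1^t$ replaced by the gap bound $G_1^t$ of Lemma~\ref{lem:control_gap_BPI}. The $(\epsilon,\delta)$-PAC property would be the easy half: on the event $\cG$ of Appendix~\ref{app:concentration}, which has probability at least $1-\delta$ and on which both Lemma~\ref{lem:control_gap_BPI} and the concentration inequalities used below hold, if the algorithm stops at $\tau$ the stopping rule gives $\pi_1^{\tau+1}G_1^\tau(s_1)\le\epsilon$, so, since $\hpi=\pi^{\tau+1}$, Lemma~\ref{lem:control_gap_BPI} yields $\Vstar_1(s_1)-V_1^{\hpi}(s_1)\le\pi_1^{\tau+1}G_1^\tau(s_1)\le\epsilon$; finiteness of $\tau$ on $\cG$ then follows a posteriori from the sample-complexity bound.

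For that bound I would fix $t<\tau$, so that $\epsilon<\pi_1^{t+1}G_1^t(s_1)$ by the stopping rule, and first upper bound $\pi_1^{t+1}G_1^t(s_1)$ by bonuses weighted by the \emph{true} reaching probabilities $p_h^{t+1}(s,a)$ of $\pi^{t+1}$. Starting from the recursion defining $G^t$, I would apply the empirical Bernstein inequality of Lemma~\ref{lem:Bernstein_via_kl} together with the control $\KL(\hp_h^{\,t}(s,a),p_h(s,a))\le\beta(n_h^t(s,a),\delta)/n_h^t(s,a)$ valid on $\cG$, which lets one replace $\hp_h^{\,t}$ by $p_h$ inside the recursion at the price of enlarging the slack to $(1+3/H)$ and the numerical constants; unfolding from $h=1$ and passing to the pseudo-counts $\bn_h^t(s,a)=\sum_{\ell\le t}p_h^\ell(s,a)$ via Lemma~\ref{lem:cnt_pseudo} then gives, up to an absolute constant,
\[
\pi_1^{t+1}G_1^t(s_1)\ \lesssim\ H^2\sum_{h=1}^H\sum_{s,a}p_h^{t+1}(s,a)\,\frac{\beta(\bn_h^t(s,a),\delta)}{\bn_h^t(s,a)\vee1}\ +\ \sum_{h=1}^H\sum_{s,a}p_h^{t+1}(s,a)\,\sqrt{\Var_{\hp_h^t}(\utV_{h+1}^t)(s,a)\,\frac{\betastar(\bn_h^t(s,a),\delta)}{\bn_h^t(s,a)\vee1}}.
\]

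Then I would sum over $0\le t<\tau$ and use $\tau\epsilon<\sum_{t<\tau}\pi_1^{t+1}G_1^t(s_1)$. The $1/n$-term is handled exactly as in the RFE proof: monotonicity of $\beta(\cdot,\delta)$, the identity $p_h^{t+1}(s,a)=\bn_h^{t+1}(s,a)-\bn_h^t(s,a)$, and Lemma~\ref{lem:sum_1_over_n} give $\sum_{t<\tau}H^2\sum_{h,s,a}p_h^{t+1}(s,a)\beta(\bn_h^t,\delta)/(\bn_h^t\vee1)\lesssim H^3SA\log(\tau+1)\,\beta(\tau,\delta)$. For the variance term I would apply Cauchy--Schwarz twice: first over $(h,s,a)$, which factors out $\sqrt{\sum_{h,s,a}p_h^{t+1}(s,a)\Var_{\hp_h^t}(\utV_{h+1}^t)(s,a)}\lesssim H$ by the Bellman identity for variances (Lemma~\ref{lem:law_of_total_variance}, Appendix~\ref{app:Bellman_variance}) applied to the optimistic value $\utV^t$ along trajectories of $\pi^{t+1}$ in the empirical MDP, using that $\utV^t$ is clipped at $H$; then over $t$, which gives $\sum_{t<\tau}H\sqrt{\sum_{h,s,a}p_h^{t+1}(s,a)\betastar(\bn_h^t,\delta)/(\bn_h^t\vee1)}\le H\sqrt{\tau}\,\sqrt{\sum_{t<\tau}\sum_{h,s,a}p_h^{t+1}(s,a)\betastar(\bn_h^t,\delta)/(\bn_h^t\vee1)}\lesssim\sqrt{\tau\,H^3SA\log(\tau+1)\,\betastar(\tau,\delta)}$, again by Lemma~\ref{lem:sum_1_over_n}. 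This leaves $\tau\epsilon\lesssim\sqrt{\tau\,H^3SA\log(\tau+1)\,\betastar(\tau,\delta)}+H^3SA\log(\tau+1)\,\beta(\tau,\delta)$, which Lemma~\ref{lem:reverse_inequality} inverts into $\tau\lesssim H^3SA\betastar(\tau,\delta)/\epsilon^2+H^3SA\beta(\tau,\delta)/\epsilon$ up to poly-log factors. Since $\betastar$ carries no factor $S$ whereas $\beta$ does, and since the hypothesis $\epsilon\le1/S^2$ makes the second term (the only place the extra $S$ enters) lower order than the first, the bound collapses to $\tau\lesssim H^3SA(\log(3SAH/\delta)+1)/\epsilon^2$ times a squared-logarithm factor, i.e.\ the announced bound with $C_1\propto\log(e^{30}(\log(3SAH/\delta)+S)H^3SA/\epsilon)^2$; the self-reference through $\log(\tau+1)$, $\beta(\tau,\delta)$, $\betastar(\tau,\delta)$ is absorbed inside Lemma~\ref{lem:reverse_inequality}.

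The main obstacle I foresee is the variance summation $\sum_{h,s,a}p_h^{t+1}(s,a)\Var_{\hp_h^t}(\utV_{h+1}^t)(s,a)\lesssim H^2$: the law of total variance applies verbatim only to the value function of a fixed policy in a fixed MDP, whereas here $\utV^t$ is an optimistic, bonus-inflated and $H$-clipped object, and the conditional variance is taken under $\hp_h^{\,t}$ while the trajectory weights are the true $p_h^{t+1}$. Reconciling these — transferring the reaching probabilities from $p^{t+1}$ to the empirical $\hp^{t,t+1}$ using the $1/H$-slack built into the definition of $\utQ^t$, and controlling the variance discrepancy through the KL bound valid on $\cG$ (Appendix~\ref{app:Bellman_variance}) — is the delicate step; it is the single-reward analogue of the difficulty that in the RFE analysis had to be handled uniformly over all reward functions.
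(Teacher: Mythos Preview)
Your outline is largely on track --- the PAC part via Lemma~\ref{lem:control_gap_BPI}, unfolding the recursion for $G^t$ after moving from $\hp^t$ to $p$, summing over $t<\tau$, Cauchy--Schwarz twice, Lemma~\ref{lem:sum_1_over_n}, and finally Lemma~\ref{lem:reverse_inequality} --- and this is indeed the paper's route. But the step you yourself flag as the main obstacle is a genuine gap, and your sketched resolution would not work. Lemma~\ref{lem:law_of_total_variance} requires (i) that the function inside the variance be the value function of the policy whose trajectories you average over, and (ii) that the variance be taken under the same transition kernel as the reaching probabilities. Here $\utV^t$ is not the value of $\pi^{t+1}$ in any MDP (it contains bonuses and is clipped), the variance is under $\hp_h^{\,t}$, and the weights are the true $p_h^{t+1}$. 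Transferring the reaching probabilities from $p$ to $\hp^t$ via the $1/H$ slack fixes (ii) but not (i): even under empirical reaching probabilities, $\utV^t$ does not satisfy a Bellman equation, so the law of total variance does not collapse $\sum_h \hp_h^{\,t,\pi^{t+1}}\Var_{\hp_h^t}(\utV_{h+1}^t)$ to anything $\le H^2$.

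The paper's fix is different from what you sketch: rather than touching the reaching probabilities, it converts the variance itself, inside the recursion and \emph{before} unfolding. Using Lemma~\ref{lem:switch_variance_bis} to swap $\hp_h^{\,t}\to p_h$ and Lemma~\ref{lem:switch_variance} to swap $\utV_{h+1}^t\to V_{h+1}^{\pi^{t+1}}$, one gets
\[
\Var_{\hp_h^{\,t}}(\utV_{h+1}^t)(s,a)\ \le\ 4\,\Var_{p_h}\bigl(V_{h+1}^{\pi^{t+1}}\bigr)(s,a)\;+\;4H\,p_h\bigl(\utV_{h+1}^t-V_{h+1}^{\pi^{t+1}}\bigr)(s,a)\;+\;4H^2\,\frac{\beta(n_h^t(s,a),\delta)}{n_h^t(s,a)}\,,
\]
and the middle term is at most $4H\,p_h\pi_{h+1}^{t+1}G_{h+1}^t(s,a)$ because $\utV_{h+1}^t-V_{h+1}^{\pi^{t+1}}\le\utV_{h+1}^t-\rV_{h+1}^t\le\pi_{h+1}^{t+1}G_{h+1}^t$, an inequality established inside the proof of Lemma~\ref{lem:control_gap_BPI}. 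After plugging this back into the recursion for $G_h^t$, the extra $G_{h+1}^t$ piece is absorbed into the slack (raising $1+3/H$ to $1+13/H$) and the extra $H^2\beta/n$ goes into the lower-order term. What survives after unfolding is precisely $\Var_{p_h}\bigl(V_{h+1}^{\pi^{t+1}}\bigr)$ weighted by the true $p_h^{t+1}$, to which Lemma~\ref{lem:law_of_total_variance} applies verbatim. This variance conversion, done per step rather than globally, is the missing idea in your proposal.
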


\noindent Therefore, the rate of \OurAlgorithmBPI is of order $\tcO\big(H^3SA\log(1/\delta)/\epsilon^2\big)$ when $\epsilon$ is small
enough and matches the lower bound of $\Omega\big(H^2SA\log(1/\delta)/\epsilon^2\big)$ by \citet{dann2015sample} up to an $H$ and poly-log terms. To the best of our knowledge, \OurAlgorithmBPI is the first algorithm for BPI whose sample complexity has an optimal dependence on $S,A, \varepsilon,$ and $\delta$. In Remark \ref{remark:optimality}, we discuss the optimality of \OurAlgorithmBPI with respect to $H$.

\paragraph{From regret-minimization to BPI} The main difficulty for converting a regret-minimizing algorithm to BPI lies in high-probability prediction of an $\epsilon$-optimal policy. Indeed, assume that a regret-minimizing algorithm generates a sequence of policy $(\pi^t)_{t\in[T]}$ for a number of episodes $T\in\N$ large enough, with a controlled regret such that with probability at least $1-\delta',$
\[
\sum_{t=1}^T \Vstar(s_1)-V_1^{\pi^t}(s_1) \leq C\sqrt{H^3SA \log(1/\delta') T},
\]
for some constant $C$. Then a straightforward choice for the prediction is to choose $\hpi$ at random among  the sequence $(\pi^t)_{t\in[T]}$, as suggested by \cite{jin2018is}. Markov's inequality implies that
\[
\P\left( \Vstar_1(s_1)-V_1^{\pi^t}(s_1) \geq \epsilon \right) \leq \frac{1}{\epsilon} \E\left[\frac{1}{T}\sum_{t=1}^T \Vstar(s_1)-V_1^{\pi^t}(s_1)\right]\leq \frac{1}{\epsilon}\left(C \sqrt{\frac{H^3 S A}{T} \log(1/\delta')  }+\delta'H \right).
\]
Therefore, if we choose $\delta' \triangleq \frac{\epsilon}{H}\cdot\frac{\delta}{2}$ and
\[T \triangleq \frac{2H^3SA}{\epsilon^2 \delta^2} \log\left(\frac{2H}{\epsilon \delta}\right)\CommaBin\]
we get $\P\big( \Vstar(s_1)-V_1^{\pi^t}(s_1) \geq \epsilon \big) \leq \delta$ which implies that the described algorithm is $(\epsilon,\delta)$-PAC for BPI. Unfortunately, the choice of $T$ above gives a sample complexity that scales with $1/\delta^2\!\,$ whereas we prefer and expect $\log(1/\delta)$.

 Another natural choice for the prediction is to return the average policy.
Precisely, let us define $\bp_h^{\,t}(s,a) \triangleq \sum_{\ell=1}^t p_h^\ell(s,a)$ be the average state-action distribution, we define the average distribution as the one for which its state-action distribution is precisely $\bp^t$,
\[
\bpi_h^t(a|s) \triangleq \begin{cases}
  \frac{\bp_h^t(s,a)}{\sum_{b\in\cA}\bp_h^t(s,b)}&\text{ if }\sum_{b\in\cA}\bp_h^t(s,b) >0, \  \text{and} \\
  1/A&\text{\ otherwise.}
\end{cases}
\]
Note that the average policy $\bpi_h^t(a|s)$ is not the average of the policies $\sum_{\ell=1}^t \pi_h^\ell(a|s)/t$. The interest of $\bpi_h^t$ is that the average regret at time $T$ is \emph{equal} to the simple regret of the policy $\bpi^T$,
\[
 \Vstar_1(s_1)-V_1^{\bpi^T}(s_1) \leq \frac{1}{T}\sum_{t=1}^T \Vstar(s_1)-V_1^{\pi^t}(s_1) \leq C\sqrt{\frac{H^3SA}{T} \log(1/\delta')},
\]
with probability $1-\delta'$. Choosing $\delta'\triangleq\delta$ and $T \triangleq CH^3SA\log(1/\delta)/\epsilon^2$ and predicting policy~$\bpi^T$ would lead to an $(\epsilon,\delta)$-PAC algorithm for BPI with a minimax optimal sample complexity. However, since the agent does not know the transition kernel, it cannot compute the average policy~$\bpi^T$! Therefore, \OurAlgorithmBPI rather relies on an upper bound on the simple regret (see Lemma~\ref{lem:control_gap_BPI}) computable by the agent to detect if the policy currently used by the sampling rule is $\epsilon$-optimal.

\paragraph{Acknowledgments}
The research presented was supported by European CHIST-ERA project DELTA, French Ministry of
Higher Education and Research, Nord-Pas-de-Calais Regional Council,  French National Research Agency project BOLD (ANR19-CE23-0026-04).
Anders Jonsson is partially supported by the Spanish grants TIN2015-67959 and PCIN-2017-082.
\bibliographystyle{plainnat}
\bibliography{ActiveExplorationMDP_bib}

\newpage
\appendix

\part{Appendix}
\parttoc

\newpage
\section{Concentration events}
\label{app:concentration}
We define the following favorable events: $\cE$ the event where the empirical transition probabilities are close to the true ones, $\cE^\cnt$ the event where the pseudo-counts are close to their expectation, and $\cE^\star$ where the empirical means of the optimal value functions are close to the true ones,
\begin{align*}
 \cE &\triangleq \left\{\forall t \in \N, \forall h \in [H], \forall (s,a)\in\cS\times\cA:\ \KL\left(\widehat{p}^{\,t}_h(\cdot | s,a), p_h(\cdot | s,a)\right)\leq \frac{\beta(n_h^t(s,a),\delta)}{n_h^t(s,a)}\right\}\CommaBin \\
 \cE^{\cnt} &\triangleq  \left\{ \forall t \in \N, \forall h\in [H],\forall (s ,a)\in\cS\times\cA:\ n_h^t(s,a) \geq \frac{1}{2}\bar n_h^t(s,a)-\beta^{\cnt}(\delta)  \right\}\!\CommaBin \ \text{and}\\
 \cE^\star &\triangleq \Bigg\{\forall t \in \N, \forall h \in [H], \forall (s,a)\in\cS\times\cA:\\
  &  \big|(\hp_h^{\,t}-p_h)\Vstar_{h+1}(s,a)\big|\leq\min\left(\!H, \sqrt{2\Var_{p_h}(\Vstar_{h+1})(s,a)\frac{\betastar(n_h^t(s,a),\delta) }{n_h^t(s,a)}}+3 H \frac{\betastar(n_h^t(s,a),\delta) }{n_h^t(s,a)}\right)\Bigg\}\cdot
\end{align*}
We also introduce the intersection of these events, $\cG \triangleq \cE \cap \cE^{\cnt}\cap \cE^\star,$ and the intersection of only the first two events, $\cF \triangleq \cE \cap \cE^{\cnt}$ . Note that the event $\cF$ is independent of the reward function $r$. We  prove that for the right choice of the functions $\beta$ the above events hold with high probability.
\begin{lemma}
\label{lem:proba_master_event}
For the following choices of functions $\beta,$
\begin{align*}
  \beta(n,\delta) &\triangleq   \log(3SAH/\delta) + S\log \left(8e(n+1)\right),\\
  \beta^\cnt(\delta) &\triangleq \log\left(3SAH/\delta\right), \quad \text{and}\\
  \betastar(n,\delta) &\triangleq \log(3SAH/\delta) + \log\left(8e(n+1)\right)\,,\\
\end{align*}
it holds that
\[
\P(\cE)\geq 1-\delta, \qquad \P(\cE^{\cnt})\geq 1-\delta,  \qquad \text{and} \qquad \P(\cE^\star)\geq 1-\delta\,.
\]
In particular, $\P(\cG) \geq 1-\delta$ and $\P(\cF) \geq 1-\delta$.
\end{lemma}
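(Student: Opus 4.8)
The plan is to treat the three events $\cE$, $\cE^{\cnt}$, $\cE^\star$ separately, in each case reducing to a time-uniform deviation inequality applied at confidence level $\delta/(3SAH)$ for a single fixed triple $(s,a,h)$, and then concluding by two union bounds --- one over the $SAH$ triples inside each event, and one over the three events. The factor $3SAH$ appearing inside every threshold is chosen precisely so that $\P(\cE^c)$, $\P((\cE^{\cnt})^c)$ and $\P((\cE^\star)^c)$ are each at most $\delta/3$; summing these gives $\P(\cG)\ge 1-\delta$, which a fortiori yields $\P(\cE),\P(\cE^{\cnt}),\P(\cE^\star)\ge 1-\delta$, and likewise $\P(\cF)=\P(\cE\cap\cE^{\cnt})\ge 1-2\delta/3\ge 1-\delta$.

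For $\cE$, fix $(s,a,h)$ and look at the (random) sub-sequence of episodes in which $(s_h,a_h)=(s,a)$: along it the successor states are i.i.d.\ draws from $p_h(\cdot\,|\,s,a)$, so $\hp_h^{\,t}(\cdot\,|\,s,a)$ is exactly the empirical distribution of $n_h^t(s,a)$ i.i.d.\ samples over the $S$-element outcome space. I would then invoke the time-uniform Sanov-type bound for the multinomial obtained through the method of mixtures (in the spirit of the concentration appendix of \citet{kaufmann2020adaptive}): with probability at least $1-\delta'$, for every $n\ge 1$, $n\,\KL(\hat p_n,p)\le\log(1/\delta')+(S-1)\log\!\big(e(1+n/(S-1))\big)$, and then bound the last term crudely by $S\log\!\big(8e(n+1)\big)$. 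Taking $\delta'=\delta/(3SAH)$ and union-bounding over $(s,a,h)$ gives $\P(\cE^c)\le\delta/3$. This is the step to be most careful with: a naive union bound over the sample size $n$ would be too lossy, so the genuinely time-uniform, self-normalised form of the multinomial inequality is essential here.

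For $\cE^{\cnt}$, fix $(s,a,h)$ and set $X_\ell\triangleq\ind{\{(s_h^\ell,a_h^\ell)=(s,a)\}}$; conditionally on the history before episode $\ell$ this is a Bernoulli variable with mean $p_h^\ell(s,a)$, since the sampling rule $\pi^\ell$ is predictable. Then $n_h^t(s,a)=\sum_{\ell\le t}X_\ell$ and $\bar n_h^t(s,a)=\sum_{\ell\le t}p_h^\ell(s,a)$, and the event is exactly the conclusion of the time-uniform multiplicative Chernoff inequality for sums of $[0,1]$-valued random variables with predictable conditional means (a standard martingale Bernstein-type bound, see e.g.\ \citet{dann2017unifying}): with probability at least $1-\delta'$, for all $t$, $\sum_{\ell\le t}X_\ell\ge\tfrac12\sum_{\ell\le t}p_h^\ell(s,a)-\log(1/\delta')$. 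Taking again $\delta'=\delta/(3SAH)$ and union-bounding gives $\P((\cE^{\cnt})^c)\le\delta/3$.

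For $\cE^\star$, the key observation is that $\Vstar_{h+1}$ is a \emph{fixed}, non-data-dependent function bounded by $H$; hence along the sub-sequence of episodes visiting $(s,a)$ at step $h$, the increments $\Vstar_{h+1}(s_{h+1})-p_h\Vstar_{h+1}(s,a)$ are i.i.d., mean zero, bounded by $H$ in absolute value, with variance $\Var_{p_h}(\Vstar_{h+1})(s,a)$. A plain Bernstein inequality at a fixed sample size $n$, combined with the standard geometric peeling over $n$ --- whose cost is an additive logarithmic term that we absorb into the $\log\!\big(8e(n+1)\big)$ inside $\betastar$ --- yields, for each $(s,a,h)$, the stated bound with probability at least $1-\delta'$; the truncation at $H$ comes for free because the left-hand side never exceeds $H$. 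Taking $\delta'=\delta/(3SAH)$ and union-bounding gives $\P((\cE^\star)^c)\le\delta/3$, which completes the three estimates and hence the proof. The only genuinely delicate points are the time-uniform multinomial bound used for $\cE$ and making the variance-adapted Bernstein term interact correctly with the random count $n_h^t(s,a)$ for $\cE^\star$; everything else is bookkeeping and union bounds.
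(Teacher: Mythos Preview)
Your proposal is correct and follows essentially the same approach as the paper: prove $\P(\cE^c),\P((\cE^{\cnt})^c),\P((\cE^\star)^c)\le\delta/3$ by applying, for each fixed $(s,a,h)$ at confidence $\delta/(3SAH)$, respectively the time-uniform multinomial/KL bound (the paper's Theorem~\ref{th:max_ineq_categorical}), the predictable-Bernoulli lower bound of \citet{dann2017unifying} (Theorem~\ref{th:bernoulli-deviation}), and a time-uniform Bernstein inequality, then union-bound over $(s,a,h)$ and over the three events. The only cosmetic difference is that for $\cE^\star$ you obtain time-uniformity via geometric peeling whereas the paper invokes a method-of-mixtures Bernstein bound (Theorem~\ref{th:bernstein}); both yield the same $\betastar$ form, so this is not a substantive deviation.
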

\begin{proof}
First, by Theorem~\ref{th:max_ineq_categorical}, we have that
\[
\P(\cE)\geq 1-\frac{\delta}{3}\cdot
\]
Second, by Theorem~\ref{th:bernoulli-deviation}, we have that
\[
\P(\cE^{\cnt})\geq 1-\frac{\delta}{3}\cdot
\]
Finally, by Theorem~\ref{th:bernstein}, we have that
\[
\P(\cE^\star)\geq 1-\frac{\delta}{3}\cdot
\]
Applying a union to the above three inequalities, we conclude that
\[
\P(\cG)\geq 1-\delta \qquad \P(\cE)\geq 1-\delta\,.
\]
\noindent \emph{Remark}: Note that we can order 
  $1\leq \beta^{\cnt}(\delta)\leq \betastar(n,\delta) \leq \beta(n,\delta).$
\end{proof}

\subsection{Deviation inequality for categorical distributions}
Next, we reproduce the deviation inequality for categorical distributions by \citet[Proposition 1]{jonsson2020planning}.
 Let $(X_t)_{t\in\N^\star}$ be i.i.d.\,samples from a distribution supported on $\{1,\ldots,m\}$, of probabilities given by $p\in\Sigma_m$, where $\Sigma_m$ is the probability simplex of dimension $m-1$. We denote by $\hp_n$ the empirical vector of probabilities, i.e., for all $k\in\{1,\ldots,m\},$
 \[
 \hp_{n,k} = \frac{1}{n} \sum_{\ell=1}^n \ind\left\{X_\ell = k\right\}.
 \]
 Note that  an element $p \in \Sigma_m$ can be seen as an element of $\R^{m-1}$ since $p_m = 1- \sum_{k=1}^{m-1} p_k$. This will be clear from the context. We denote by $H(p)$ the (Shannon) entropy of $p\in\Sigma_m$,
 \[
 H(p) = \sum_{k=1}^m p_k \log\left(\frac{1}{p_k}\right)\cdot
 \]
 \begin{theorem} \label{th:max_ineq_categorical}
 For all $p\in\Sigma_m$ and for all $\delta\in[0,1]$,
 \begin{align*}
     \P\left(\exists n\in \N^\star,\, n\KL(\hp_n, p)> \log(1/\delta) + (m-1)\log\left(e(1+n/(m-1))\right)\right)\leq \delta.
 \end{align*}
\end{theorem}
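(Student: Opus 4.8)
The plan is to use the \emph{method of mixtures} (Laplace's/Robbins' method): build a nonnegative martingale that dominates $\exp\!\big(n\KL(\hp_n,p)\big)$ up to the claimed deterministic factor, then invoke Ville's maximal inequality. Throughout, write $N_{n,k} \triangleq n\hp_{n,k} = \sum_{\ell=1}^n \ind\{X_\ell = k\}$ for the count of symbol $k$, and assume without loss of generality that $p$ has full support (coordinates with $p_k=0$ satisfy $N_{n,k}=0$ almost surely and contribute $0$ to $\KL$ by convention, so the argument runs on the support of $p$; the case $m=1$ is trivial). First I would introduce, for each $q\in\Sigma_m$, the likelihood-ratio process $M_n^q \triangleq \prod_{\ell=1}^n \tfrac{q_{X_\ell}}{p_{X_\ell}} = \prod_{k=1}^m \left(\tfrac{q_k}{p_k}\right)^{N_{n,k}}$. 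Since the $X_\ell$ are i.i.d.\ from $p$, $(M_n^q)_{n\ge 0}$ is a nonnegative martingale with $M_0^q=1$ and $\E[M_n^q]=1$. Letting $\pi$ be the uniform (i.e.\ $\mathrm{Dirichlet}(1,\dots,1)$) distribution on $\Sigma_m$ and setting $\bar M_n \triangleq \int_{\Sigma_m} M_n^q\, d\pi(q)$, Tonelli's theorem shows $(\bar M_n)$ is again a nonnegative martingale with $\bar M_0=1$, so Ville's maximal inequality gives $\P\!\big(\exists n\ge 1:\ \bar M_n \ge 1/\delta\big)\le \delta$.

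It then suffices to prove that, \emph{deterministically}, $\exp\!\big(n\KL(\hp_n,p)\big)\le \big(e(1+n/(m-1))\big)^{m-1}\,\bar M_n$. To this end I would compute $\bar M_n$ in closed form via the Dirichlet integral $\int_{\Sigma_m}\prod_k q_k^{N_{n,k}}\,dq = \prod_k N_{n,k}!\,/\,(n+m-1)!$, which yields $\bar M_n = (m-1)!\,\frac{\prod_k N_{n,k}!}{(n+m-1)!}\prod_k p_k^{-N_{n,k}}$. The key algebraic step is to rewrite the factorial ratio: from $(n+m-1)! = (n+1)(n+2)\cdots(n+m-1)\cdot n!$ and $n! = \binom{n}{N_{n,1},\dots,N_{n,m}}\prod_k N_{n,k}!$, one gets $\frac{\prod_k N_{n,k}!}{(n+m-1)!} = \frac{1}{(n+1)\cdots(n+m-1)}\,\binom{n}{N_{n,1},\dots,N_{n,m}}^{-1}$. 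Since $\exp\!\big(n\KL(\hp_n,p)\big) = \prod_k (N_{n,k}/n)^{N_{n,k}}\prod_k p_k^{-N_{n,k}}$, the factors $\prod_k p_k^{-N_{n,k}}$ cancel and
\[
\frac{\exp\!\big(n\KL(\hp_n,p)\big)}{\bar M_n} \;=\; \frac{(n+1)\cdots(n+m-1)}{(m-1)!}\,\binom{n}{N_{n,1},\dots,N_{n,m}}\prod_{k=1}^m\Big(\tfrac{N_{n,k}}{n}\Big)^{N_{n,k}}.
\]
Expanding $1 = \big(\sum_k N_{n,k}/n\big)^n$ by the multinomial theorem and keeping a single nonnegative term gives $\binom{n}{N_{n,1},\dots,N_{n,m}}\prod_k (N_{n,k}/n)^{N_{n,k}}\le 1$; together with $(n+1)\cdots(n+m-1)\le (n+m-1)^{m-1}$ and the elementary bound $(m-1)!\ge ((m-1)/e)^{m-1}$ (from $N!\ge (N/e)^N$), this bounds the right-hand side by $\big(e(n+m-1)/(m-1)\big)^{m-1} = \big(e(1+n/(m-1))\big)^{m-1}$ --- exactly the required factor. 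Hence the event $\{n\KL(\hp_n,p) > \log(1/\delta) + (m-1)\log(e(1+n/(m-1)))\}$ is contained in $\{\bar M_n > 1/\delta\}$, and the theorem follows.

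The only genuinely delicate point is producing the clean factor $\big(e(1+n/(m-1))\big)^{m-1}$: a naive Laplace/Stirling estimate of the Dirichlet mixture would leave spurious $\sqrt{n}$-type terms. The trick that removes them is the factorisation of $\prod_k N_{n,k}!/(n+m-1)!$ into $\frac{1}{(n+1)\cdots(n+m-1)}\binom{n}{N_{n,1},\dots,N_{n,m}}^{-1}$ combined with the multinomial inequality $\binom{n}{N_{n,1},\dots,N_{n,m}}\prod_k(N_{n,k}/n)^{N_{n,k}}\le 1$ --- this is where the bound $\binom{n+m-1}{m-1}\le\big(e(1+n/(m-1))\big)^{m-1}$ on the number of types is implicitly used. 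Everything else (the martingale property, Tonelli, Ville's inequality) is routine, modulo the conventions that handle the boundary of $\Sigma_m$ and the degenerate case $m=1$.
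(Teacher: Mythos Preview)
Your proof is correct and takes essentially the same approach as the paper: both build the Dirichlet$(1,\dots,1)$ mixture of likelihood-ratio martingales, compute it in closed form, and bound the ratio $\exp(n\KL(\hp_n,p))/\bar M_n$ via the type-counting inequality $\binom{n}{N_{n,1},\dots,N_{n,m}}\le e^{nH(\hp_n)}$ together with an estimate of $\binom{n+m-1}{m-1}$. The only cosmetic difference is that the paper parametrizes the martingales through the natural parameter $\lambda$ of the categorical exponential family (using its Lemma~\ref{lem:martingale_to_diff_kl} to rewrite $M_n^\lambda$ as a KL difference) before arriving at the same closed form you obtain directly from Dirichlet--multinomial conjugacy; your route is a bit more elementary but otherwise identical.
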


 \begin{proof}
 We apply the method of mixtures with a Dirichlet prior on the mean parameter of the exponential family formed by the set of categorical distribution on  $\{1,\ldots,m\}$. Letting
 \[\phi_p(\lambda) \triangleq \log \E_{X\sim p}\left[e^{\lambda X}\right] =\log\left(\!p_m+\sum_{k=1}^{m-1}p_k e^{\lambda_k}\!\right)\]
 be the log-partition function, we have that  the following $M_n^\lambda$  is a martingale,
 \[
 M_n^\lambda \triangleq e^{n \langle \lambda,\hp_n\rangle - n \phi_p(\lambda)}.
 \]
 We set a Dirichlet prior $q\sim \Dir(\alpha)$ with $\alpha \in {\R^\star_+}^m$ and for $\lambda_q \triangleq (\nabla\phi_p)^{-1}(q),$  consider the integrated martingale
 \begin{align*}
     M_n &= \int M_n^{\lambda_q} \frac{\Gamma\left(\sum_{k=1}^m \alpha_k\right)}{\prod_{k=1}^m\Gamma(\alpha_k)} q_k^{\alpha_k-1}\diff q\\
     &= \int e^{n\left(\KL(\hp_n,p)-\KL(\hp_n,q)\right)}\frac{\Gamma\left(\sum_{k=1}^m \alpha_k\right)}{\prod_{k=1}^m\Gamma(\alpha_k)} q_k^{\alpha_k-1}\diff q\\
     &=  e^{n\KL(\hp_n,p) + n H(\hp_n)} \int \frac{\Gamma\left(\sum_{k=1}^m \alpha_k\right)}{\prod_{k=1}^m\Gamma(\alpha_k)} q_k^{n\hp_{n,k}+\alpha_k-1}\diff q\\
     &= e^{n\KL(\hp_n,p) + n H(\hp_n)}\frac{\Gamma\left(\sum_{k=1}^m \alpha_k\right)}{\prod_{k=1}^m\Gamma(\alpha_k)} \frac{\prod_{k=1}^m\Gamma(\alpha_k + n \hp_{n,k})}{\Gamma\left(\sum_{k=1}^m \alpha_k + n\right)}\CommaBin
 \end{align*}
 where in the second inequality we used Lemma~\ref{lem:martingale_to_diff_kl}. Next, we choose the uniform prior $\alpha= (1,\ldots,1)$ to obtain
 \begin{align*}
 M_n &= e^{n\KL(\hp_n,p) + n H(\hp_n)} (m-1)! \frac{\prod_{k=1}^m\Gamma(1 + n \hp_{n,k})}{\Gamma( m + n)}\\
 &= e^{n\KL(\hp_n,p) + n H(\hp_n)} (m-1)! \frac{\prod_{k=1}^m(n \hp_{n,k})!}{n!} \frac{n!}{(m+n-1)!}\\
 & = e^{n\KL(\hp_n,p) + n H(\hp_n)} \frac{1}{\binom{n}{n \hp_n}} \frac{1}{\binom{m+n-1}{m-1}}\cdot
 \end{align*}
 Theorem~11.1.3 by \citet{cover2006elements} shows us how to upper-bound the multinomial coefficient. In particular, for $M\in \N^\star$ and $x\in \{0,\ldots,M\}^m$ such that $\sum_{k=1}^m x_k = M,$
 \[
 \binom{M}{x} = \frac{M!}{\prod_{k=1}^m x_k!} \leq e^{M H(x/M)}.
 \]
 Using this inequality we obtain
 \begin{align*}
 M_n &\geq e^{n\KL(\hp_n,p) +n H(\hp_n) - n H(\hp_n) -(m+n-1) H\left((m-1)/(m+n-1)\right)} \\
 &=  e^{n\KL(\hp_n,p)-(m+n-1) H\left((m-1)/(m+n-1)\right)}.
 \end{align*}
 It remains to upper-bound entropic term which we do as follows,
 \begin{align*}
    (m+n-1) H\left((m-1)/(m+n-1)\right) &= (m-1) \log \frac{m+n-1}{m-1} + n \log \frac{m+n-1}{n}\\
    &\leq  (m-1) \log\left( 1 + n/(m-1)\right) + n \log(1+(m-1)/n)\\
    &\leq (m-1) \log\left( 1 + n/(m-1)\right) + (m-1).
 \end{align*}
 Therefore, we lower bound the martingale as
 \begin{align*}
     M_n &\geq e^{n\KL(\hp_n,p)}\left( e(1+n/(m-1))  \right)^{m-1}.
 \end{align*}
Since for any supermartingale we have that
 \begin{equation}\P\left(\exists n \in \N^\star : M_n > 1/\delta\right) \leq \delta \cdot \E[M_1],\label{eq:supermartingale}\end{equation}
 which is a well-known property of the method of mixtures \citep{de2004self}, we conclude that
 \[
 \P\left(\exists n\in \N^\star,\, n\KL(\hp_n, p)> (m-1)\log\left(e(1+n/(m-1))\right)+ \log(1/\delta)\right) \leq \delta\,.
 \]
 \end{proof}
 \begin{lemma}
 \label{lem:martingale_to_diff_kl}
 For $q,p \in \Sigma_m$ and $\lambda\in \R^{m-1}$,
 \[
 \langle \lambda,q \rangle -\phi_{p}(\lambda) = \KL(q,p) -  \KL(q,p^\lambda)\,,
 \]
 where $\phi_p(\lambda) = \log(p_m+\sum_{k=1}^{m-1}p_k e^{\lambda_k})$ and $p^\lambda = \nabla \phi_{p_0}(\lambda)$.
 \end{lemma}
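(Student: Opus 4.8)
The plan is to reduce both sides to explicit closed-form expressions in $\lambda$, $p$, $q$ and check they agree; this is the standard Bregman-divergence identity for the exponential family of categorical laws, so no clever idea is needed, only bookkeeping of the last coordinate. First I would write down the tilted distribution $p^\lambda$ associated with the natural parameter $\lambda$. Adopting the convention $\lambda_m \triangleq 0$, set
\[
p^\lambda_k \triangleq \frac{p_k e^{\lambda_k}}{p_m + \sum_{j=1}^{m-1} p_j e^{\lambda_j}} = p_k\, e^{\lambda_k - \phi_p(\lambda)}, \qquad k \in \{1,\dots,m\},
\]
which is a probability vector in $\Sigma_m$. Differentiating $\phi_p(\lambda) = \log\bigl(p_m + \sum_{j=1}^{m-1} p_j e^{\lambda_j}\bigr)$ with respect to $\lambda_k$ gives exactly $p^\lambda_k$ for $k \in \{1,\dots,m-1\}$, so this $p^\lambda$ coincides with the map $\nabla\phi_p(\lambda)$ referred to in the statement (the subscript $p_0$ there being the generic reference measure $p$).

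Next I would expand the right-hand side directly from the definition of the Kullback–Leibler divergence. Since
\[
\KL(q,p) - \KL(q,p^\lambda) = \sum_{k=1}^m q_k \log\frac{q_k}{p_k} - \sum_{k=1}^m q_k \log\frac{q_k}{p^\lambda_k} = \sum_{k=1}^m q_k \log\frac{p^\lambda_k}{p_k},
\]
and since the display above yields $\log(p^\lambda_k/p_k) = \lambda_k - \phi_p(\lambda)$ for every $k \in \{1,\dots,m\}$ (using $\lambda_m = 0$ for $k=m$), I obtain
\[
\KL(q,p) - \KL(q,p^\lambda) = \sum_{k=1}^m q_k\bigl(\lambda_k - \phi_p(\lambda)\bigr) = \sum_{k=1}^{m-1} \lambda_k q_k - \phi_p(\lambda),
\]
where the last equality uses $\sum_{k=1}^m q_k = 1$ and $\lambda_m = 0$. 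Identifying $q \in \Sigma_m$ with its first $m-1$ coordinates, the remaining sum is $\langle \lambda, q\rangle$, which gives $\langle \lambda, q\rangle - \phi_p(\lambda)$ and closes the proof.

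There is essentially no obstacle; the only points requiring care are (i) handling the $m$-th coordinate consistently via the convention $\lambda_m \triangleq 0$, so that $\log(p^\lambda_k/p_k) = \lambda_k - \phi_p(\lambda)$ holds uniformly in $k$, and (ii) keeping track of the fact that the KL sums range over all $m$ indices whereas $\langle \lambda, q\rangle$ is a sum over the $m-1$ free coordinates, since $p,q \in \Sigma_m$ are alternately viewed as elements of $\Sigma_m$ and of $\mathbb{R}^{m-1}$.
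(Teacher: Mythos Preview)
Your proof is correct and follows essentially the same route as the paper: both arguments identify $p^\lambda_k = p_k e^{\lambda_k}/e^{\phi_p(\lambda)}$ and reduce the identity to the telescoping sum $\sum_{k=1}^m q_k \log(p^\lambda_k/p_k)$. The only cosmetic difference is that you adopt the convention $\lambda_m \triangleq 0$ to treat all coordinates uniformly and work from the KL side toward $\langle\lambda,q\rangle - \phi_p(\lambda)$, whereas the paper singles out the $m$-th coordinate via $\log(p_m/p_m^\lambda)$ and proceeds in the opposite direction.
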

 \begin{proof}
 First note that
  \[
  p^\lambda_k = \frac{p_k e^{\lambda_k}}{p_m +  \sum_{\ell=1}^{m-1} p_\ell e^{\lambda_\ell}}\CommaBin
  \]
  which implies that
  \[
  p_m + \sum_{k=1}^{m-1} p_k e^{\lambda_k} = \frac{p_m}{p_m^\lambda}\CommaBin \qquad \lambda_k = \log\frac{p_k^\lambda}{p_k} + \log \frac{p_m}{p_m^\lambda}\cdot
  \]
  Therefore, we obtain
  \begin{align*}
  \langle \lambda, q\rangle -\phi_p (\lambda) &= \sum_{k=1}^{m-1} q_k \log \left(\frac{p_k^\lambda}{p_k} \frac{p_m}{p_m^\lambda}\right) - \log\left(p_m + \sum_{k=1}^{m-1} p_k e^{\lambda_k}\right)\\
  &= \sum_{k=1}^{m-1} q_k \log\frac{p_k^\lambda}{p_k} +(1-q_m) \log \frac{p_m}{p_m^\lambda} - \log\frac{p_m}{p_m^\lambda}\\
  &= \sum_{k=1}^m q_k \log\frac{p_k^\lambda}{p_k} = \KL(q,p)-\KL(q,p^\lambda)\,.
  \end{align*}
 \end{proof}

\subsection{Deviation inequality for sequence of Bernoulli random variables}

Below, we reproduce the deviation inequality for Bernoulli distributions by \citet[Lemma F.4]{dann2017unifying}.
Let $\mathcal F_t$ for $t\in\N$ be a filtration and $(X_t)_{t\in\N^\star}$ be a sequence of Bernoulli random variables with $\P(X_t = 1 | \mathcal F_{t-1}) = P_t$ with $P_t$ being $\mathcal F_{t-1}$-measurable and $X_t$ being $\mathcal F_{t}$-measurable.

\begin{theorem}
	For all $\delta>0$,
	\begin{align}
	\P \left(\exists n : \,\, \sum_{t=1}^n X_t < \sum_{t=1}^n P_t / 2 -\log\frac{1}{\delta}  \right) \leq \delta.
	\end{align}
	\label{th:bernoulli-deviation}
\end{theorem}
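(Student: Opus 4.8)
The plan is a one-parameter Chernoff / exponential-supermartingale argument, in the same spirit as the method-of-mixtures computation above (Theorem~\ref{th:max_ineq_categorical}) but using a single fixed exponential tilt rather than an integrated prior. Fix $\lambda>0$ and set, with the convention $M_0=1$,
\[
M_n \;\triangleq\; \prod_{t=1}^{n}\frac{e^{-\lambda X_t}}{\E\!\left[e^{-\lambda X_t}\mid\mathcal F_{t-1}\right]}
\;=\; \exp\!\left(-\lambda\sum_{t=1}^{n}X_t-\sum_{t=1}^{n}\log\E\!\left[e^{-\lambda X_t}\mid\mathcal F_{t-1}\right]\right).
\]
First I would check that $(M_n)_{n\ge 0}$ is a nonnegative martingale for $(\mathcal F_n)_{n\ge 0}$: since $X_t$ is $\mathcal F_t$-measurable and $P_t$ is $\mathcal F_{t-1}$-measurable, the conditional MGF $\E[e^{-\lambda X_t}\mid\mathcal F_{t-1}]=1-P_t(1-e^{-\lambda})$ lies in $[e^{-\lambda},1]$, is $\mathcal F_{t-1}$-measurable, and is bounded away from $0$, so $\E[M_n\mid\mathcal F_{n-1}]=M_{n-1}$ and $\E[M_1]=1$.

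Next I would lower-bound $M_n$ using $\log(1-x)\le -x$ (valid since $P_t(1-e^{-\lambda})\in[0,1)$), which gives $-\log\E[e^{-\lambda X_t}\mid\mathcal F_{t-1}]\ge (1-e^{-\lambda})P_t$ and hence
\[
M_n \;\ge\; \exp\!\left(-\lambda\sum_{t=1}^{n}X_t+(1-e^{-\lambda})\sum_{t=1}^{n}P_t\right).
\]
I would then pick $\lambda=1$ and use $1-e^{-1}\ge\tfrac12$ together with $\sum_{t}P_t\ge 0$ to obtain $M_n\ge\exp\!\big(-\sum_{t=1}^{n}X_t+\tfrac12\sum_{t=1}^{n}P_t\big)$. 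Consequently, on the event $\{\sum_{t=1}^{n}X_t<\tfrac12\sum_{t=1}^{n}P_t-\log(1/\delta)\}$ one has $M_n>e^{\log(1/\delta)}=1/\delta$, so the event whose probability we must bound is contained in $\{\exists n:\,M_n>1/\delta\}$. Applying the maximal inequality for nonnegative (super)martingales --- the same fact recorded in~\eqref{eq:supermartingale} --- then yields $\P(\exists n:\,M_n>1/\delta)\le \delta\,\E[M_1]=\delta$, which is the claim (the case $\delta\ge 1$ being trivial).

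The argument is conceptually routine; the only real obstacle is getting the constants to line up. One needs $\lambda$ large enough that the coefficient $\lambda$ of $\log(1/\delta)$ is at least $1$, yet small enough that $1-e^{-\lambda}\ge\lambda/2$, so that the random, unobserved term $\sum_t P_t$ enters with a nonnegative coefficient and can simply be dropped; the choice $\lambda=1$ threads this needle and is exactly what produces the constant $\tfrac12$ in front of $\sum_t P_t$ in the statement. A secondary point to keep in mind is that $P_t$ is only $\mathcal F_{t-1}$-measurable, not deterministic, which is why the predictable conditional MGF $1-P_t(1-e^{-\lambda})$ --- rather than a deterministic Hoeffding-type bound --- must be used in the normalization of $M_n$.
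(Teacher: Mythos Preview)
Your proof is correct and follows essentially the same route as the paper: build an exponential (super)martingale with tilt $\lambda=1$, show it exceeds $1/\delta$ on the bad event, and invoke the maximal inequality~\eqref{eq:supermartingale}. The only cosmetic difference is that the paper bounds the conditional MGF via the sub-Gaussian inequality $\E[e^{-\lambda(X_t-P_t)}\mid\mathcal F_{t-1}]\le e^{\lambda^2 P_t/2}$ (Exercise~2.9 of \citealp{boucheron2013concentration}) to obtain the supermartingale $e^{\sum_t(-X_t+P_t/2)}$ directly, whereas you compute the exact Bernoulli MGF and then bound it with $\log(1-x)\le -x$ and $1-e^{-1}\ge 1/2$; your version is slightly more self-contained but otherwise identical in structure and constants.
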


\begin{proof}
	$P_t - X_t$ is a martingale difference sequence with respect to the filtration $\mathcal F_t$.
	Since $X_t$ is nonnegative and has finite second moment, we have for any $\lambda > 0$ that
	$\E\left[e^{-\lambda (X_t - P_t)} | \mathcal F_{t-1} \right] \leq e^{\lambda^2 P_t / 2}$ (Exercise 2.9, \citealp{boucheron2013concentration}).
	Hence, we have
	\begin{align}
	\E\left[ e^{\lambda(P_t - X_t) - \lambda^2 P_t / 2} | \mathcal F_{t-1} \right] \leq 1
	\end{align}
	and by setting $\lambda \triangleq 1$, we see that
	\begin{align}
	M_n = e^{\sum_{t=1}^n (-X_t + P_t / 2)}
	\end{align}
	is a supermartingale.
	Therefore by Markov's inequality,
	\begin{align}
	\P \left( \sum_{t=1}^n (-X_t + P_t / 2) \geq \log\frac{1}{\delta} \right)
	=
	\P \left( M_n \geq \frac{1}{\delta} \right)
	\leq {\delta} \E[M_n] \leq \delta
	\end{align}
	which gives us
	\begin{align}
	\P \left( \sum_{t=1}^n X_t \leq \sum_{t=1}^n P_t / 2 -\log\frac{1}{\delta}\right)
	\leq \delta
	\end{align}
	for a fixed $n$.
	We define now the stopping time $\tau \triangleq \min\{ t \in \mathbb N \, : \, M_t > \frac{1}{\delta} \}$ and the sequence $\tau_n = \min\{ t \in \mathbb N \, : \, M_t > \frac{1}{\delta} \vee t \geq n \}$.
	Applying the convergence theorem for nonnegative supermartingales
	(Theorem~5.2.9 by \citealp{durrett2010probability}), we get that $\lim_{t \rightarrow
		\infty} M_t$ is well-defined almost surely. Therefore, $M_\tau$ is
	well-defined even when $\tau = \infty$.
	By the optional stopping theorem for nonnegative supermartingales (Theorem
	5.7.6 by \citealp{durrett2010probability}), we have $\E[M_{\tau_n}] \leq \E[M_0] \leq
	1$ for all $n$ and applying Fatou's lemma, we obtain
	$\E[M_\tau] = \E[\lim_{n \rightarrow \infty} M_{\tau_n}] \leq \lim \inf_{n \rightarrow \infty} \E[M_{\tau_n}] \leq 1$.
	Using Markov's inequality, we can finally bound
	\begin{align}
	\P\left( \exists n:\,\, \sum_{t=1}^n X_t < \frac 1 2 \sum_{t=1}^n P_t - \log\frac{1}{\delta} \right)
	\leq
	\P ( \tau < \infty) \leq \P ( M_{\tau} > \frac{1}{\delta})
	\leq \delta \E[M_{\tau}] \leq \delta.
	\end{align}
\end{proof}

\subsection{Deviation inequality for bounded distributions}
Below, we reproduce the self-normalized Bernstein-type inequality by \citet{domingues2020regret}. Let $(Y_t)_{t\in\N^\star}$, $(w_t)_{t\in\N^\star}$ be two sequences of random variables adapted to a filtration $(\cF_t)_{t\in\N}$. We assume that the weights are in the unit interval $w_t\in[0,1]$ and predictable, i.e. $\cF_{t-1}$ measurable. We also assume that the random variables $Y_t$  are bounded $|Y_t|\leq b$ and centered $\EEc{Y_t}{\cF_{t-1}} = 0$.
Consider the following quantities
\begin{align*}
		S_t \triangleq \sum_{s=1}^t w_s Y_s, \quad V_t \triangleq \sum_{s=1}^t w_s^2\cdot\EEc{Y_s^2}{\cF_{s-1}}, \quad \mbox{and} \quad W_t \triangleq \sum_{s=1}^t w_s
\end{align*}
and let $h(x) \triangleq (x+1) \log(x+1)-x$ be the Cramér transform of a Poisson distribution of parameter~1.

\begin{theorem}[Bernstein-type concentration inequality]
  \label{th:bernstein}
	For all $\delta >0$,
	\begin{align*}
		\PP{\exists t\geq 1,   (V_t/b^2+1)h\left(\!\frac{b |S_t|}{V_t+b^2}\right) \geq \log(1/\delta) + \log\left(4e(2t+1)\!\right)}\leq \delta.
	\end{align*}
  The previous inequality can be weakened to obtain a more explicit bound: with probability at least $1-\delta$, for all $t\geq 1$,
 \[
 |S_t|\leq \sqrt{2V_t \log\left(4e(2t+1)/\delta\right)}+ 3b\log\left(4e(2t+1)/\delta\right)\,.
 \]
\end{theorem}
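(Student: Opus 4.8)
I would prove this by the method of mixtures, the standard route to self-normalized Bernstein inequalities. By rescaling $(Y_t,w_t)\mapsto (Y_t/b,w_t)$, which divides $S_t$ by $b$ and $V_t$ by $b^2$ and leaves both claimed bounds invariant, it suffices to treat $b=1$; I reinstate $b$ at the very end. Write $\phi(\lambda)\triangleq e^{\lambda}-1-\lambda$, so that $h$ is exactly its convex conjugate, $h(y)=\sup_{\lambda\in\mathbb{R}}\big(\lambda y-\phi(\lambda)\big)=(y+1)\log(y+1)-y$ for $y>-1$, and hence $\sup_{\lambda}\big(\lambda S-\phi(|\lambda|)V\big)=V\,h(|S|/V)$ for any $V>0$.

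\textbf{Exponential supermartingales and mixing.} For each $\lambda\in\mathbb{R}$, set $M_t^{\lambda}\triangleq\exp\big(\lambda S_t-\phi(|\lambda|)\,V_t\big)$, $M_0^\lambda=1$. I would check it is an $(\mathcal{F}_t)$-supermartingale: its increment $w_tY_t$ is, given $\mathcal{F}_{t-1}$, centered (because $w_t$ is predictable), bounded by $|w_tY_t|\le 1$, with conditional second moment $w_t^2\,\mathbb{E}[Y_t^2\mid\mathcal{F}_{t-1}]$; the one-sided Bennett bound ``$\mathbb{E}[e^{\mu X}\mid\mathcal{F}_{t-1}]\le\exp(\phi(\mu)\,\mathbb{E}[X^2\mid\mathcal{F}_{t-1}])$ for $X\le 1$ centered and $\mu\ge 0$'' applied to $X=w_tY_t$ (and to $X=-w_tY_t$ when $\lambda<0$) with $\mu=|\lambda|$ gives $\mathbb{E}[e^{\lambda w_tY_t}\mid\mathcal{F}_{t-1}]\le\exp\big(\phi(|\lambda|)\,w_t^2\,\mathbb{E}[Y_t^2\mid\mathcal{F}_{t-1}]\big)$, i.e. $\mathbb{E}[M_t^\lambda\mid\mathcal{F}_{t-1}]\le M_{t-1}^\lambda$. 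Then I would fix a probability density $g$ on $\mathbb{R}$ that is symmetric about $0$ and set $\overline{M}_t\triangleq\int_{\mathbb{R}}M_t^{\lambda}\,g(\lambda)\,\mathrm{d}\lambda$; by Tonelli and the previous step $\overline{M}_t$ is again a nonnegative supermartingale with $\mathbb{E}[\overline{M}_t]\le\overline{M}_0=1$, so the maximal inequality for nonnegative supermartingales (cf.\ \eqref{eq:supermartingale}) gives $\mathbb{P}\big(\exists t\ge 1:\overline{M}_t\ge 1/\delta\big)\le\delta$.

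\textbf{Lower bounding the mixture --- the crux.} On the complement of this event, $\log\overline{M}_t<\log(1/\delta)$ for all $t$, so the first inequality of the theorem reduces to showing $\log\overline{M}_t\ge (V_t+1)\,h\big(|S_t|/(V_t+1)\big)-\log\big(4e(2t+1)\big)$. This is the step I expect to be hardest. One restricts the integral defining $\overline{M}_t$ to a neighbourhood of the maximiser $\lambda^\star=\argmax_\lambda\big(\lambda S_t-\phi(|\lambda|)V_t\big)$, where the exponent equals $V_t\,h(|S_t|/V_t)$ and the symmetric prior captures the sign of $S_t$; the prior must be chosen so that (i) its quadratic spread upgrades $V_t\,h(|S_t|/V_t)$ to the regularised Cramér rate $(V_t+1)\,h(|S_t|/(V_t+1))$ --- the ``$+1$'' being the prior's effective pseudo-variance --- and (ii) the leftover multiplicative factor is at most $4e(2t+1)$, bounded using $V_t=\sum_s w_s^2\,\mathbb{E}[Y_s^2\mid\mathcal{F}_{s-1}]\le\sum_s w_s^2\le t$. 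Extracting simultaneously the exact conjugate form $(V_t/b^2+1)\,h\big(b|S_t|/(V_t+b^2)\big)$ and the precise logarithmic constant from this integral is the delicate point; an elementary alternative that sidesteps the Laplace computation is a peeling argument --- a union bound of the $M_t^\lambda$ over a geometric grid of $\lambda$ --- at the price of a slightly worse constant.

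\textbf{The explicit weakening.} For the second display I would invoke the elementary lower bound $h(x)\ge\tfrac{x^2/2}{1+x/3}$ for $x\ge 0$. With $b=1$, applying it to $x=|S_t|/(V_t+1)$ and multiplying by $(V_t+1)$, the first inequality gives, on the high-probability event,
\[
\frac{S_t^2/2}{\,(V_t+1)+|S_t|/3\,}\ \le\ L\triangleq\log\!\big(4e(2t+1)/\delta\big),
\]
equivalently $S_t^2\le 2L\,(V_t+1)+\tfrac{2L}{3}|S_t|$. Solving this quadratic inequality in $|S_t|$ and bounding crudely (using $\sqrt{u+v}\le\sqrt{u}+\sqrt{v}$ and $L\ge\log(4e)>1$) yields $|S_t|\le\sqrt{2L\,V_t}+3L$; reinstating $b$ turns this into $|S_t|\le\sqrt{2V_t\log(4e(2t+1)/\delta)}+3b\log(4e(2t+1)/\delta)$, as claimed.
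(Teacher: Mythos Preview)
Your plan is the paper's plan: the same one-sided Bennett supermartingales $M_t^{\lambda,\pm}=\exp(\lambda(\pm S_t)-\phi(\lambda)V_t)$, the same mixture-and-maximal-inequality scheme, and the same $h(x)\ge \tfrac{x^2}{2(1+x/3)}$ weakening at the end. The only substantive piece you leave open is the one you flag as the crux --- the choice of prior $g$ and the Laplace lower bound --- and since the statement fixes both the exact Cram\'er form $(V_t/b^2+1)\,h(\cdot)$ and the exact constant $4e(2t+1)$, a generic symmetric prior (e.g.\ Gaussian) will not land on these; so this is a real, if localized, gap.

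The paper's trick is to take $\lambda=\log(1+x)$ with $x\sim\mathrm{Exp}(1)$ on $[0,\infty)$ (the symmetrization is handled by the $\pm$ pair). After the change of variable, $e^{-x}\,\mathrm{d}x=e^{-(e^{\lambda}-1)+\lambda}\,\mathrm{d}\lambda$, so the mixture integrand becomes exactly
\[
\exp\big(\lambda(|S_t|+V_t+1)-(V_t+1)(e^{\lambda}-1)\big):
\]
the prior's exponential tail merges with $-V_t(e^{\lambda}-1)$ to produce the ``$+1$'' on the variance side, and the Jacobian $e^{\lambda}$ supplies the ``$+1$'' on the linear side --- this is precisely your requirement (i), obtained for free rather than approximated. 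For requirement (ii), one restricts $\lambda$ to $[\log(1+|S_t|/(V_t{+}1)),\,\log(1+(|S_t|{+}1)/(V_t{+}1))]$: on it the exponent is at least $(V_t+1)h(|S_t|/(V_t+1))-1$, and its length is $\log\!\big(1+1/(|S_t|+V_t+1)\big)\ge 1/(2(2t+1))$, using both $V_t\le t$ \emph{and} $|S_t|\le t$ (you mention only the former). Collecting the $\tfrac12$ from symmetrization, the $e^{-1}$, and $1/(2(2t+1))$ gives the factor $1/(4e(2t+1))$, hence the stated constant. Your peeling alternative would work but, as you note, loses this exact constant.
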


\begin{proof}
	By homogeneity we can assume that $b=1$ to prove the first part. First note that for all $\lambda > 0$,
	\[
	 e^{\lambda w_t Y_t} -\lambda w_t Y_t -1 \leq (w_t Y_t)^2(e^{\lambda}-\lambda -1)\,,
	\]
	because the function $y \to (e^y-y-1)/y^2$ (extended by continuity at zero) is non-decreasing. Taking the expectation yields
	\[
	\EEc{e^{\lambda w_t Y_t}}{\cF_{t-1}} -1 \leq w_t^2\EEc{ Y_t^2 }{ \cF_{t-1}}(e^{\lambda}-\lambda -1)\,,
	\]
	therefore using $y+1\leq e^y$ we get
	\[
	\EEc{e^{\lambda (w_t Y_t)}}{\cF_{t-1}} \leq e^{w_t^2\EEc{ Y_t^2 }{ \cF_{t-1}}(e^{\lambda}-\lambda -1)}\,.
	\]
	We just proved that the following quantity is a supermartingale with respect to the filtration $(\cF_t)_{t\geq 0}$,
	\[
	M_t^{\lambda,+} = e^{\lambda(S_t+V_t) - V_t (e^\lambda - 1)}\,.
	\]
	Similarly, using that the same inequality holds for $-X_t$, we have
	\[
	\EEc{e^{-\lambda w_t Y_t}}{\cF_{n-1}} \leq e^{w_t^2\EEc{ Y_t^2 }{ \cF_{t-1}}(e^{\lambda}-\lambda -1)}\,,
	\]
	therefore, we can also define the supermartingale
	\[
	M_t^{\lambda,-} = e^{\lambda(-S_t+V_t) - V_t (e^\lambda - 1)}\,.
	\]
	We now choose the prior over $\lambda_x = \log(x+1)$ with $x\sim \Exponential(1)$, and consider the (mixture) supermartingale
	\[
	M_t = \frac{1}{2} \int_{0}^{+\infty} e^{\lambda_x(S_t+V_t) - V_t (e^\lambda_x - 1)} e^{-x} \mathrm{d} x +\frac{1}{2} \int_{0}^{+\infty} e^{\lambda_x(-S_t+V_t) - V_n (e^\lambda_x - 1)} e^{-x} \mathrm{d} x \,.
	\]
	Note that by construction it holds $\EE{M_t} \leq 1$. We now apply the method of mixtures to that super martingale therefore we need to lower bound it with the quantity of interest. To this aim we lower bound the integral by the one only around the maximum of the integrand. Using the change of variable $\lambda = \log(1+x)$, we obtain
	\begin{align*}
	  M_t &\geq \frac{1}{2} \int_{0}^{+\infty} e^{\lambda_x (|S_t|+V_t) - V_t (e^{\lambda_x} - 1)} e^{-x} \mathrm{d} x
	  \geq \frac{1}{2} \int_{0}^{+\infty}  e^{\lambda(|S_t| + V_t +1) - (V_t +1)(e^\lambda-1)} \mathrm{d} \lambda \\
	  &\geq  \frac{1}{2} \int_{\log\left(|S_t|/(V_t+1) + 1\right)}^{\log\left(|S_t|/(V_t+1) + 1 + 1/(V_t+1)\right)}  e^{\lambda(|S_t| + V_t +1) - (V_t +1)(e^\lambda-1)} \mathrm{d} \lambda\\
	  &\geq \frac{1}{2} \int_{\log\left(|S_t|/(V_t+1) + 1\right)}^{\log\!\left(|S_t|/(V_t+1) + 1 + 1/(V_t+1)\right)}  e^{\log\left(|S_t|/(V_t+1) + 1\right)(|S_t| + V_t +1) - |S_t| - 1} \mathrm{d} \lambda\\
	  &= \frac{1}{2e}e^{(V_t+1)h\left(|S_t|/(V_t+1)\right)} \log\left( 1+\frac{1}{|S_t|+V_t+1}\right)
	  \geq \frac{1}{4e(2 t+1)}e^{(V_t+1)h\left(|S_t|/(V_t+1)\right)}\,,
	\end{align*}
	where in the last line we used $\log(1+1/x)\geq 1/(2x)$ for $x \geq 1$ and the trivial bounds $|S_t| \leq 1$, $V_t \leq t$. The method of mixtures, see \cite{de2004self}, allows us to conclude for the first inequality of the lemma. The second inequality is a straightforward consequence of the previous one. Indeed, using that (see Exercise 2.8 of \citealp{boucheron2013concentration}) for $x\geq 0$
	  \[
	  h(x) \geq \frac{x^2}{2(1+x/3)},
	  \]
	we get
	\[
	\frac{|S_t|/b}{V_t/b^2+1} \leq \sqrt{\frac{2\log\left(4e(2t +1)/\delta\right)}{V_t/b^2+1}} + \frac{2}{3}\frac{\log\left(4e(2t+1)/\delta\right)}{V_t/b^2+1}\cdot
	\]
Multiplying by $b(V_t/b^2+1)$ the previous inequality allows us to conclude,
\begin{align*}
  |S_t| &\leq \sqrt{2(V_t+b)\log\left(4e(2t +1)/\delta\right)}+\frac{2b}{3}\log\left(4e(2t+1)/\delta\right)\\
  &\leq \sqrt{2V_t\log\left(4e(2t +1)/\delta\right)}+3b\log\left(4e(2t+1)/\delta\right).
\end{align*}
\end{proof}

\newpage
\section{Confidence intervals on values}
\label{app:optimism}
In this section, we define confidence regions for the Q-value and value functions.
To define these confidence regions, we first introduce a confidence region for the transition probabilities,
\[\cC_h^t(s,a) \triangleq \left\{q\in \Sigma_{S} :  \KL\!\big(\hp_h^{\,t}(s,a),q(s,a)\big) \leq \frac {\beta(n_h^t(s,a), \delta)} {n_h^t(s,a)}\right\}\]
for each policy $\pi$ and define the confidence regions after $t$ episodes as
\begin{align*}
 \uQ_h^{t,\pi}(s,a) &\triangleq (r_h + \max_{\up_h\in\cC_h^t(s,a)}\up_h \uV_{h+1}^{t,\pi} ) (s,a) &
   \lQ_h^{t,\pi}(s,a) &\triangleq (r_h +  \min_{\lp_h\in\cC_h^t(s,a)}\lp_h \lV_{h+1}^{t,\pi} ) (s,a) \\
 \uV_h^{t,\pi}(s) &\triangleq \pi \uQ_h^{t,\pi}(s) &  \lV_h^{t,\pi}(s) &\triangleq \pi \lQ_h^{t,\pi}(s)\\
 \uV_{H+1}^{t,\pi}(s) &\triangleq 0 &  \lV_{H+1}^{t,\pi}(s) &\triangleq 0\\
 \up_h^{t,\pi}(s,a) &\in \argmax_{\up\in\cC_h^t(s,a)} \up_h \uV_{h+1}^{t,\pi} (s,a) &
 \lp_h^{t,\pi}(s,a) &\in \argmin_{\lp\in\cC_h^t(s,a)} \lp_h \lV_{h+1}^{t,\pi} (s,a),
\end{align*}
where we use the notation $p_hf(s,a) = \E_{s'\sim p_h(.|s,a)}f(s')$ for the expectation operator and $\pi g(s) = g\big(s,\pi(s)\big)$ for the policy operator.
We also define upper and lower confidence bounds on the optimal value and Q-value functions as
\begin{align*}
 \uQ_h^{t}(s,a) &\triangleq (r_h +  \max_{\up_h\in\cC_h^t(s,a)}\up_h \uV_{h+1}^{t} ) (s,a) &
   \lQ_h^{t}(s,a) &\triangleq (r_h +  \min_{\lp_h\in\cC_h^t(s,a)}\lp_h \lV_{h+1}^{t} ) (s,a) \\
 \uV_h^{t}(s) &\triangleq \max_{a} \uQ_h^{t}(s,a) &  \lV_h^{t}(s) &\triangleq \max_{a} \lQ_h^{t}(s,a)\\
 \uV_{H+1}^{t}(s) &\triangleq 0 &  \lV_{H+1}^{t}(s) &\triangleq  0\\
 \up_h^{t}(s,a) &\in \argmax_{\up\in\cC_h^t(s,a)} \up_h \uV_{h+1}^{t} (s,a) &
 \lp_h^{t}(s,a) &\in \argmin_{\lp\in\cC_h^t(s,a)} \lp_h \lV_{h+1}^{t} (s,a)\\
 \upi_h^{t}(s,a) &\in \argmax_{a\in\cA} \uQ_{h}^{t} (s,a) &
 \lpi_h^{t}(s,a) &\in \argmax_{a\in\cA} \lQ_{h}^{t} (s,a).
\end{align*}
We now build upper confidence bounds that \emph{are not obtained by controlling the deviation of the Kullback-Leibler divergence between the empirical transition probabilities and the true transition probabilities} but only when these empirical transition probabilities are applied to the optimal value function. Precisely, similarly to \citet{azar2017minimax} and \citet{zanette2019tighter} we define upper-bounds that take into account the variance as
\begin{align*}
  \utQ_h^{t}(s,a) &\triangleq \min\Bigg(\!H, r_h(s,a) + 3\sqrt{\Var_{\hp_h^{\,t}}(\utV_{h+1}^t)(s,a) \frac{\betastar(n_h^t(s,a),\delta)}{n_h^t(s,a)}}+14H^2 \frac{\beta(n_h^t(s,a),\delta)}{n_h^t(s,a)}\\ &\qquad+\frac{1}{H}\hp_h^{\,t} (\utV_{h+1}^{t}-\ltV_{h+1}^t)(s,a)   +\hp_h^{\,t} \utV_{h+1}^{t}(s,a)\!\Bigg)\\
  \utV_h^t(s) &\triangleq \max_{a\in\cA}\utQ_h^t(s,a)\\
  \utV_{H+1}^{t}(s) &\triangleq0, \quad \text{and}\\
   \ltQ_h^{t}(s,a) &\triangleq \max\Bigg(\!0, r_h(s,a) - 3\sqrt{\Var_{\hp_h^{\,t}}(\utV_{h+1}^t)(s,a) \frac{\betastar(n_h^t(s,a),\delta)}{n_h^t(s,a)}}-14H^2 \frac{\beta(n_h^t(s,a),\delta)}{n_h^t(s,a)}\\ &\qquad-\frac{1}{H}\hp_h^{\,t} (\utV_{h+1}^{t}-\ltV_{h+1}^t)(s,a)   +\hp_h^{\,t} \ltV_{h+1}^{t}(s,a)\!\Bigg)\\
  \ltV_h^t(s) &\triangleq \max_{a\in\cA}\ltQ_h^t(s,a)\\
  \ltV_{H+1}^{t}(s) &\triangleq0.
\end{align*}
Note that, since the Bernstein inequality is not very sharp, we rather use a loose confidence bound in order to obtain simpler proof. The following confidence bounds are valid with high probability.
\begin{lemma}\label{lem:tQ_upper_bound}
  If $\betastar(n,\delta)\leq \beta(n,\delta)$ then on event $\cG,$ we have that for all $t$, all $h\in[H]$, and all $(s,a),$
  \begin{align*}
    \ltQ_h^t(s,a) &\leq \Qstar_h(s,a) \leq \utQ_h^t(s,a) \quad \text{and} \\
     \ltV_h^t(s) &\leq \Vstar_h(s) \leq \utV_h^t(s) .
  \end{align*}
\end{lemma}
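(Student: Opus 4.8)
The plan is to prove Lemma~\ref{lem:tQ_upper_bound} by backward induction on $h$, from $h = H+1$ down to $h = 1$, establishing simultaneously the two-sided bounds $\ltQ_h^t(s,a) \le \Qstar_h(s,a) \le \utQ_h^t(s,a)$ (and hence the same for the value functions, since $\utV_h^t(s) = \max_a \utQ_h^t(s,a) \ge \max_a \Qstar_h(s,a) = \Vstar_h(s)$, and $\ltV_h^t(s) = \max_a \ltQ_h^t(s,a) \le \max_a \Qstar_h(s,a) = \Vstar_h(s)$ — note the lower bound uses $\max$ not $\min$, so we only get $\le$, which is exactly what is claimed). We work throughout on the event $\cG = \cE \cap \cE^{\cnt} \cap \cE^\star$. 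The base case $h = H+1$ is immediate since all quantities are $0$.

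For the inductive step, assume the bounds hold at $h+1$. Focus first on the upper bound $\Qstar_h(s,a) \le \utQ_h^t(s,a)$. If the $\min$ in the definition of $\utQ_h^t$ is attained at $H$, we are done since $\Qstar_h \le H$. Otherwise, write $\Qstar_h(s,a) = r_h(s,a) + p_h \Vstar_{h+1}(s,a)$ and compare with the non-clipped expression for $\utQ_h^t$. The difference decomposes as
\begin{align*}
\utQ_h^t(s,a) - \Qstar_h(s,a) &= \hp_h^{\,t}\utV_{h+1}^t(s,a) - p_h\Vstar_{h+1}(s,a) + 3\sqrt{\Var_{\hp_h^{\,t}}(\utV_{h+1}^t)(s,a)\tfrac{\betastar}{n_h^t(s,a)}} \\
&\quad + 14H^2\tfrac{\beta}{n_h^t(s,a)} + \tfrac{1}{H}\hp_h^{\,t}(\utV_{h+1}^t - \ltV_{h+1}^t)(s,a).
\end{align*}
Now split $\hp_h^{\,t}\utV_{h+1}^t - p_h\Vstar_{h+1} = (\hp_h^{\,t} - p_h)\Vstar_{h+1}(s,a) + \hp_h^{\,t}(\utV_{h+1}^t - \Vstar_{h+1})(s,a)$. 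The second term is nonnegative by the induction hypothesis. For the first term, on $\cE^\star$ we control $|(\hp_h^{\,t} - p_h)\Vstar_{h+1}(s,a)|$ by $\sqrt{2\Var_{p_h}(\Vstar_{h+1})(s,a)\betastar/n_h^t(s,a)} + 3H\betastar/n_h^t(s,a)$. The key remaining task is to dominate this by the empirical-variance bonus $3\sqrt{\Var_{\hp_h^{\,t}}(\utV_{h+1}^t)(s,a)\betastar/n_h^t(s,a)}$ plus the $1/n$ terms: one relates $\Var_{p_h}(\Vstar_{h+1})$ to $\Var_{\hp_h^{\,t}}(\utV_{h+1}^t)$ by first passing from the true transition to the empirical transition (using the concentration on $\cE$, i.e. the KL bound, together with a variance-transfer lemma of the type $|\Var_p f - \Var_{\hat p} f| \lesssim \ldots$), and then from $\Vstar_{h+1}$ to $\utV_{h+1}^t$ using $0 \le \utV_{h+1}^t - \Vstar_{h+1} \le \utV_{h+1}^t - \ltV_{h+1}^t$ and the elementary inequality $\Var_q(f) \le 2\Var_q(g) + 2 q(f-g)^2$. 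This is where the $\frac{1}{H}\hp_h^{\,t}(\utV_{h+1}^t - \ltV_{h+1}^t)(s,a)$ correction term and the generous constant $14H^2$ in front of $\beta/n$ get absorbed — they are there precisely to soak up these lower-order cross terms and the discrepancy between true and empirical variances. The lower bound $\ltQ_h^t(s,a) \le \Qstar_h(s,a)$ is handled symmetrically: if the $\max$ with $0$ is active it is trivial, and otherwise the same chain of inequalities applies with signs reversed, again using that $\ltV_{h+1}^t \le \Vstar_{h+1}$ by induction.

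The main obstacle I anticipate is the variance-transfer step: carefully turning the KL-type control on $\cE$ into a bound of the form $\Var_{p_h}(\Vstar_{h+1}) \le \Var_{\hp_h^{\,t}}(\utV_{h+1}^t) + (\text{terms of order } H^2\beta/n \text{ and } H\cdot\hp_h^{\,t}(\utV_{h+1}^t-\ltV_{h+1}^t))$, and then checking that the numerical constants ($3$ vs $\sqrt 2$, the $14H^2$, the $1/H$) actually close the induction. One must also be slightly careful with the degenerate case $n_h^t(s,a) = 0$, where the bonuses are (effectively) vacuous but the clipping to $H$ and the definition $\hp_h^{\,t}(\cdot|s,a) = 1/S$ make the inequalities hold trivially since $\Qstar_h, \Vstar_h \in [0,H]$. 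Everything else — the backward induction skeleton, the handling of the clipping, and the propagation from $Q$ to $V$ — is routine. Assuming $\betastar(n,\delta) \le \beta(n,\delta)$ as in the hypothesis lets us freely upgrade any $\betastar/n$ term into a $\beta/n$ term whenever convenient during the constant-chasing.
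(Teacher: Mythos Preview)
Your proposal is correct and follows essentially the same route as the paper's proof: backward induction on $h$, the same decomposition $\hp_h^{\,t}\utV_{h+1}^t - p_h\Vstar_{h+1} = (\hp_h^{\,t}-p_h)\Vstar_{h+1} + \hp_h^{\,t}(\utV_{h+1}^t-\Vstar_{h+1})$, the Bernstein control on $\cE^\star$, and then the two-step variance transfer (true to empirical measure via the KL bound on $\cE$, then $\Vstar_{h+1}$ to $\utV_{h+1}^t$ using $0\le \utV_{h+1}^t-\Vstar_{h+1}\le \utV_{h+1}^t-\ltV_{h+1}^t$). The paper carries out the constant-chasing you flag as the ``main obstacle'' explicitly, arriving at $\Var_{p_h}(\Vstar_{h+1})\le 4\Var_{\hp_h^{\,t}}(\utV_{h+1}^t)+4H\,\hp_h^{\,t}(\utV_{h+1}^t-\ltV_{h+1}^t)+4H^2\beta/n$ and then closing with $\sqrt{x+y}\le\sqrt x+\sqrt y$ and $\sqrt{xy}\le x+y$; your sketch anticipates exactly these ingredients.
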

\begin{proof}
We proceed by induction. For $h=H+1$ the result is trivially true. Assume the inequalities hold for $h'>h$. We prove only the upper bounds, the proof is very similar for the lower bounds. Fix $(s,a)$ and assume that $\utQ_h^t(s,a)<H,$ otherwise the we trivially have that  $\utQ_h^t(s,a)\geq \Qstar_h(s,a)$. Note that it implies that $n_h^t(s,a)>0$. In such case, by construction we have that
  \begin{align}
  \utQ_h(s,a) - \Qstar_h(s,a) &\geq 3\sqrt{\Var_{\hp_h^{\,t}}(\utV_{h+1}^t)(s,a) \frac{\betastar(n_h^t(s,a),\delta)}{n_h^t(s,a)}}+ 10 H^2 \frac{\beta(n_h^t(s,a),\delta)}{n_h^t(s,a)}\nonumber \\
  &\qquad +\frac{1}{H}\hp_h^{\,t} (\utV_{h+1}^{t}-\ltV_{h+1}^t)(s,a) + (\hp_h^{\,t}-p_h) \Vstar_{h+1}(s,a)\nonumber\\
  &\qquad + \hp_h^{\,t} (\utV_{h+1}^{t}-\Vstar_{h+1})(s,a)\label{eq:tq_minus_qstar_bpi}.
  \end{align}
 Then, by definition of event $\cG$ and in particular $\cE^\star,$ we know that
  \begin{align*}
    \big|(\hp_h^{\,t}-p_h) \Vstar_{h+1}(s,a)\big| \leq \sqrt{2\Var_{p_h}(\Vstar_{h+1})(s,a)\frac{\betastar(n_h^t(s,a),\delta) }{n_h^t(s,a)}}+3 H \frac{\betastar(n_h^t(s,a),\delta) }{n_h^t(s,a)}\cdot
  \end{align*}
  Using Lemma~\ref{lem:switch_variance_bis} and Lemma~\ref{lem:switch_variance}, the definition of $\cG$ and the induction hypothesis, we get
    \begin{align*}
    \Var_{p_h}(\Vstar_{h+1})(s,a) &\leq 2\Var_{\hp_h^{\,t}}(\Vstar_{h+1})(s,a) + 4 H^2 \frac{\beta(n_h^t(s,a),\delta)}{n_h^t(s,a)}\\
    &\leq  4\Var_{\hp_h^{\,t}}(\utV_{h+1}^t)(s,a) + 4 H \hp_h^{\,t}(\utV_{h+1}^t- \Vstar_{h+1})(s,a) + 4H^2 \frac{\beta(n_h^t(s,a),\delta)}{n_h^t(s,a)}\\
    &\leq 4\Var_{\hp_h^{\,t}}(\utV_{h+1}^t)(s,a) + 4 H \hp_h^{\,t} (\utV_{h+1}^t-\ltV_{h+1}^t)(s,a) + 4H^2 \frac{\beta(n_h^t(s,a),\delta)}{n_h^t(s,a)}\cdot
  \end{align*}
Therefore using $\betastar(n,\delta) \leq \beta(n,\delta)$, $\sqrt{x+y}\leq \sqrt{x}+\sqrt{y}$ and $\sqrt{xy}\leq x+y$ for $x,y\geq 0$, we obtain
\begin{align*}
  \big|(\hp_h^{\,t}-p_h) \Vstar_{h+1}(s,a)\big| &\leq 2\sqrt{2}\sqrt{\Var_{\hp_h^{\,t}}(\utV_{h+1}^t)(s,a)\frac{\betastar(n_h^t(s,a),\delta) }{n_h^t(s,a)}} \\
  &\qquad+ \sqrt{\frac{1}{H}\hp_h^{\,t}(\utV_{h+1}^t-\ltV_{h+1}^t)(s,a)  8H^2\frac{\beta(n_h^t(s,a),\delta) }{n_h^t(s,a)}}+ (\sqrt{8}+3) H^2 \frac{\beta(n_h^t(s,a),\delta)}{n_h^t(s,a)}\\
  &\leq 3\sqrt{\Var_{\hp_h^{\,t}}(\utV_{h+1}^t)(s,a)\frac{\betastar(n_h^t(s,a),\delta) }{n_h^t(s,a)}} + 14 H^2 \frac{\beta(n_h^t(s,a),\delta)}{n_h^t(s,a)}\\
  &\qquad + \frac{1}{H} \hp_h^{\,t}(\utV_{h+1}^t-\ltV_{h+1}^t)(s,a) .
\end{align*}
Now going back to~\eqref{eq:tq_minus_qstar_bpi}, due to the inequality above, we arrive to
\begin{align*}
\utQ_h(s,a) - \Qstar_h(s,a) \geq  \hp_h^{\,t} (\utV_{h+1}^{t}-\Vstar_{h+1})(s,a) \geq 0,
\end{align*}
where we used the induction hypothesis. Therefore, we have that for all $(s,a)$, $\utQ_h^t(s,a)\geq \Qstar_h(s,a)$ and then by definition $\utV_h^t(s) = \max_{a}\utQ_h^t(s,a) \geq \Vstar_h(s)$.
\end{proof}

\newpage
\section{Proofs of reward-free exploration results}
\label{app:proofs_RF}

\lemmaWRF*
\begin{proof}
Assume that we are on event $\cF$ and we fix a policy $\pi$.

\paragraph{Step 1: Empirical Bernstein inequality} From Lemma~\ref{lem:Bernstein_via_kl} we know that for any state-action pair $(s,a),$ if $n_h^t(s,a)>0$, then {\small
\begin{align}
  \he_h^{\,t,\pi}(s,a; r) &= \big|\hQ_h^{t,\pi}(s,a; r)-Q_h^\pi(s,a; r)\big| = \big|(\hp_h^{\,t}-p_h)V_{h+1}^{\pi}(s,a ; r)\big|+\hp_h^{\,t}|\hV^{t,\pi}_{h+1}-V_{h+1}^\pi|(s,a)\nonumber\\
  &\leq \sqrt{2 \Var_{p_h}(V_{h+1}^\pi)(s,a ; r) \frac{\beta(n_h^t(s,a),\delta)}{n_h^t(s,a)}}+\frac{2}{3}H\frac{\beta(n_h^t(s,a),\delta)}{n_h^t(s,a)} +   \hp_h^{\,t}(\hV_{h+1}^{t,\pi}-V_{h+1}^\pi)(s,a)\label{eq:initial_bound_H3}.
\end{align}}
\hspace{-0.2cm}Now notice that by Lemma~\ref{lem:switch_variance_bis} with the fact that $0\leq V_{h+1}^\pi \leq H$
\begin{align*}
  \Var_{p_h}(V_{h+1}^\pi)(s,a; r) &\leq 2\Var_{\hp_h^{\,t}}(V_{h+1}^\pi)(s,a; r) +4 H^2\KL(\hp_h^{\,t}(s,a),p_h(s,a))\\
  &\leq 2\Var_{\hp_h^{\,t}}(V_{h+1}^\pi)(s,a; r) +4H^2\frac{\beta(n_h^t(s,a),\delta)}{n_h^t(s,a)}\cdot
\end{align*}
Using Lemma~\ref{lem:switch_variance}, we replace the value function by its estimate to get
\[
\Var_{\hp_h^{\,t}}(V_{h+1}^\pi)(s,a; r)  \leq 2 \Var_{\hp_h^{\,t}}(\hV_{h+1}^{t,\pi})(s,a; r)+2H\hp_{h}^{\,t}|V_{h+1}^\pi -\hV_{h+1}^{t,\pi}|(s,a).
\]
Combining these two inequalities and using $\sqrt{xy} \leq x+y$ for $x,y\geq 0$, we arrive to
\begin{align*}
  \sqrt{2 \Var_{p_h}(V_{h+1}^\pi)(s,a; r) \frac{\beta(n_h^t(s,a),\delta)}{n_h^t(s,a)}} &\leq 2\sqrt{2}\sqrt{ \Var_{\hp_h^{\,t}}(\hV_{h+1}^{t,\pi})(s,a; r) \frac{\beta(n_h^t(s,a),\delta)}{n_h^t(s,a)}}+\sqrt{8}H \frac{\beta(n_h^t(s,a),\delta)}{n_h^t(s,a)}\\
  &\qquad+ \sqrt{\frac{1}{H} \hp_h^{\,t}|V_{h+1}^\pi -\hV_{h+1}^{t,\pi}|(s,a) 8H^2\frac{\beta(n_h^t(s,a),\delta)}{n_h^t(s,a)}}\\
  &\leq 3\sqrt{ \Var_{\hp_h^{\,t}}(\hV_{h+1}^{t,\pi})(s,a; r) \frac{\beta(n_h^t(s,a),\delta)}{n_h^t(s,a)}} \\
  &
  \qquad+ (\sqrt{8}H+8H^2)\frac{\beta(n_h^t(s,a),\delta)}{n_h^t(s,a)}
  + \frac{1}{H}\hp_{h}^{\,t}|V_{h+1}^\pi -\hV_{h+1}^{t,\pi}|(s,a).
\end{align*}
Injecting the above inequality in the initial bound~\eqref{eq:initial_bound_H3} of the estimation error yields
\begin{align*}
  \he_h^{t,\pi}(s,a; r) &\leq 3\sqrt{ \frac{\Var_{\hp_h^{\,t}}(\hV_{h+1}^{t,\pi})(s,a; r)}{H^2} \frac{H^2\beta(n_h^t(s,a),\delta)}{n_h^t(s,a)}} + 12H^2\frac{\beta(n_h^t(s,a),\delta)}{n_h^t(s,a)}\\
  &\qquad\qquad+\left(1+\frac{1}{H}\right)\hp_h^{\,t} |V_{h+1}^\pi -\hV_{h+1}^{t,\pi}|(s,a)\\
  &\leq  3\sqrt{ \frac{\Var_{\hp_h^{\,t}}(\hV_{h+1}^{t,\pi})(s,a; r)}{H^2} \left(\frac{H^2\beta(n_h^t(s,a),\delta)}{n_h^t(s,a)}\wedge 1\right)} +15H^2\frac{\beta(n_h^t(s,a),\delta)}{n_h^t(s,a)}\\
  &\qquad\qquad+\left(1+\frac{1}{H}\right)\hp_h^{\,t} |V_{h+1}^\pi -\hV_{h+1}^{t,\pi}|(s,a),
\end{align*}
where in the last inequality we used that if $H^2\beta(n_h^t(s,a),\delta)/n_h^t(s,a)\geq 1$ then
\[
3\sqrt{ \frac{\Var_{\hp_h^{\,t}}(\hV_{h+1}^{t,\pi})(s,a; r)}{H^2} \frac{H^2\beta(n_h^t(s,a),\delta)}{n_h^t(s,a)}} \leq 3\sqrt{\frac{H^2\beta(n_h^t(s,a),\delta)}{n_h^t(s,a)}} \leq 3 \frac{H^2\beta(n_h^t(s,a),\delta)}{n_h^t(s,a)}\cdot
\]
We therefore obtain the following bound on the error of estimation of the Q-value function at a state-action pair $(s,a)$ in the case when $n_h^t(s,a)>0$,
\begin{align*}
  \he_h^{t,\pi}(s,a; r) &\leq 3\sqrt{ \frac{\Var_{\hp_h^{\,t}}(\hV_{h+1}^{t,\pi})(s,a; r)}{H^2} \left(\frac{H^2\beta(n_h^t(s,a),\delta)}{n_h^t(s,a)}\wedge 1\right)} +15H^2\frac{\beta(n_h^t(s,a),\delta)}{n_h^t(s,a)}\\
  &\qquad\qquad+\left(1+\frac{1}{H}\right)\hp_h^{\,t} \pi_{h+1} \he_{h+1}^{t,\pi}(s,a; r),
\end{align*}
where we recall the operator notation $\pi_{h} g (s') \triangleq g(s',\pi_h(s'))$. Defining recursively the functions $Z$ as $Z_{H+1}^{t,\pi}(s,a; r) \triangleq 0$ and for $h\leq h$ as
\begin{align*}
  Z_h^{t,\pi}(s,a; r) &\triangleq \min\Bigg(\!H, 3\sqrt{ \frac{\Var_{\hp_h^{\,t}}(\hV_{h+1}^{t,\pi})(s,a; r)}{H^2} \left(\frac{H^2\beta(n_h^t(s,a),\delta)}{n_h^t(s,a)}\wedge 1\right)} +15H^2\frac{\beta(n_h^t(s,a),\delta)}{n_h^t(s,a)}\\
 &\qquad\qquad+\left(1+\frac{1}{H}\right)\hp_h^{\,t} \pi_{h+1} Z_{h+1}^{t,\pi}(s,a; r)\!\Bigg)
\end{align*}
and noting that $  \he_h^{\,t,\pi}(s,a; r) \leq H$ we can
show by induction that for all $(s,a,h),$
\begin{equation}
  \label{eq:he_lower_Z_H3}
\he_h^{\,t,\pi}(s,a; r)\leq Z_h^{t,\pi}(s,a; r).
\end{equation}

\paragraph{Step 2: Law of total variance} For all $(s,a),$ we now recursively define $Y$ and $W$. In particular, we set $Y_{H+1}^{t,\pi}(s,a; r) \triangleq W_{H+1}^{t,\pi}(s,a) \triangleq 0$ and
\begin{align*}
  Y_h^{t,\pi}(s,a; r) & \triangleq  3\sqrt{ \frac{\Var_{\hp_h^{\,t}}(\hV_{h+1}^{t,\pi})(s,a; r)}{H^2} \left(\frac{H^2\beta(n_h^t(s,a),\delta)}{n_h^t(s,a)}\wedge 1\right)} + \left(1+\frac{1}{H}\right)\hp_h^{\,t} \pi_{h+1} Y_{h+1}^{t,\pi}(s,a; r)\\
  W_h^{t,\pi}(s,a) &\triangleq \min\left(\!H, 15H^2 \frac{\beta(n_h^t(s,a),\delta)}{n_h^t(s,a)} +\left(1+\frac{1}{H}\right)\hp_h^{\,t} \pi_{h+1} W_{h+1}^{t,\pi}(s,a) \! \right).
\end{align*}
Again by induction we show that for all $h,s,a$
\[
Z_h^{t,\pi}(s,a; r) \leq Y_h^{t,\pi}(s,a; r)+W_h^{t,\pi}(s,a).
\]
Indeed, the case $h=H+1$ is trivially true and if we assume the inequality is true at step $h+1$ then using that $\min(x,y+z)\leq \min(x,y)+\min(x,z)$ for $x,y,z\geq 0$,
\begin{align*}
  Z_h^{t,\pi}(s,a; r) &\leq  \min\Bigg(H, 3\sqrt{ \frac{\Var_{\hp_h^{\,t}}(\hV_{h+1}^{t,\pi})(s,a; r)}{H^2} \left(\frac{H^2\beta(n_h^t(s,a),\delta)}{n_h^t(s,a)}\wedge 1\right)} +15H^2\frac{\beta(n_h^t(s,a),\delta)}{n_h^t(s,a)}\\
 &\qquad\qquad+\left(1+\frac{1}{H}\right)\hp_h^{\,t} \pi_{h+1} Y_{h+1}^{t,\pi}(s,a; r)+\left(1+\frac{1}{H}\right)\hp_h^{\,t} \pi_{h+1} W_{h+1}^{t,\pi}(s,a)\Bigg)\\
 &\leq  Y_h^{t,\pi}(s,a; r)+W_h^{t,\pi}(s,a).
\end{align*}
Now using~\eqref{eq:he_lower_Z_H3} we have
\begin{align}\label{eq:he_lower_YpW_H3}
\pi_1 \he_1^{\,t,\pi}(s_1; r) \leq \pi_1 Y_1^{t,\pi}(s_1; r)+\pi_1 W_1^{t,\pi}(s_1).
\end{align}
Next, we upper-bound the $Y_h^{t,\pi}$ term to \emph{remove the dependency on the empirical variance} of the the value function the policy $\pi$.
We let $\hp_h^{\,t,\pi}(s,a)$ be the probability of reaching the sate-action pair $(s,a)$ at step $h$ under policy $\pi$ with the empirical transitions at round $t$. By successive use of the definition of $Y_h^{t,\pi}$, the Cauchy-Schwarz inequality, and Lemma~\ref{lem:law_of_total_variance} in the empirical MDP, we get
\begin{align*}
  \pi Y_1^{t,\pi}(s_1; r) &= 3\sum_{s,a}\sum_{h=1}^H \hp_h^{t,\pi}(s,a) \left(1+\frac{1}{H}\right)^{h-1}  \sqrt{ \frac{\Var_{\hp_h^{\,t}}(\hV_{h+1}^{t,\pi})(s,a; r)}{H^2} \left(\frac{H^2\beta(n_h^t(s,a),\delta)}{n_h^t(s,a)}\wedge 1\right)}\\
  &\leq 3e \sqrt{\sum_{s,a}\sum_{h=1}^H \hp_h^{t,\pi}(s,a)\frac{ \Var_{\hp_h^{\,t}}(\hV_{h+1}^{t,\pi})(s,a; r)}{H^2}}\sqrt{\sum_{s,a}\sum_{h=1}^H \hp_h^{t,\pi}(s,a) \left(\frac{H^2\beta(n_h^t(s,a),\delta)}{n_h^t(s,a)}\wedge 1\right)}\\
  &\leq 3e \sqrt{\frac{1}{H^2}  \E_{\pi, \hp_h^{\,t}}\left[ \left(\sum_{h=1}^H r_h(s_h,a_h) -\hV_1^{\pi}(s_1; r)\right)^2\right]  }\sqrt{\sum_{s,a}\sum_{h=1}^H \hp_h^{t,\pi}(s,a) \left(\frac{H^2\beta(n_h^t(s,a),\delta)}{n_h^t(s,a)}\wedge 1\right)}\\
  &\leq 3e\sqrt{\sum_{s,a}\sum_{h=1}^H \hp_h^{t,\pi}(s,a) \left(\frac{H^2\beta(n_h^t(s,a),\delta)}{n_h^t(s,a)}\wedge 1\right)}.
\end{align*}

\paragraph{Step 3: Clipping} For this step we recursively define $\tW^{t,\pi}$ by  $\tW_{H+1}^{t,\pi}(s,a) \triangleq 0$ and
\[
\tW_{h}^{t,\pi}(s,a) \triangleq \left(\frac{H^2\beta(n_h^t(s,a),\delta)}{n_h^t(s,a)}\wedge 1\right) + \hp_h^{t,\pi} \pi_{h+1} \tW_{h+1}^{t,\pi}(s,a),
\]
such that by construction
\[
\sum_{s,a}\sum_{h=1}^H \hp_h^{t,\pi}(s,a) \left(\frac{H^2\beta(n_h^t(s,a),\delta)}{n_h^t(s,a)}\wedge 1\right) = \pi_1 \tW_1^{t,\pi}(s_1).
\]
By induction we can prove that $\tW_h^{t,\pi}\leq W_h^{t,\pi}$. Indeed, the inequality is true for $h=H+1$ and if we assume it is true for step $h+1$ then using that by construction $\tW_{h}^{t,\pi}(s,a)\leq H$, for all $(s,a)$,
\begin{align*}
  \tW_{h}^{t,\pi}(s,a) &=\min\left(H, \left(\frac{H^2\beta(n_h^t(s,a),\delta)}{n_h^t(s,a)}\wedge 1\right) + \hp_h^{t,\pi} \pi_{h+1} \tW_{h+1}^{t,\pi}(s,a)\right)\\
  &\leq \min\left(H, \frac{H^2\beta(n_h^t(s,a),\delta)}{n_h^t(s,a)} + \hp_h^{t,\pi} \pi_{h+1} W_{h+1}^{t,\pi}(s,a)\right)\\
  &\leq W_{h}^{t,\pi}(s,a).
\end{align*}
Therefore, we just proved that $ \pi_1 Y_1^{t,\pi}(s_1; r)\leq 3e \sqrt{\pi_1 W_1^{t,\pi}(s_1)}$ and going back to~\eqref{eq:he_lower_YpW_H3} we get
\begin{align}
  \label{eq:he_lower_W_H3}
  \pi_1 \he_1^{t,\pi}(s_1; r) \leq  3e \sqrt{\pi_1 W_1^{t,\pi} (s_1)} + \pi_1 W_1^{t,\pi}(s_1).
\end{align}

\paragraph{Conclusion} To finish the proof it remains to note that for all $\pi$, for all $(s,a,h)$ we have that
\[
W^{t,\pi}_h(s,a) \leq W^t_h(s,a)\qquad
\text{and}
\qquad
 \pi_h W_h^{t,\pi}(s) \leq \max_{a\in\cA}W^t_h(s) = \pi_{h}^{t+1}W^t_h(s,a).
\]
\end{proof}

\theoremRF*

\begin{proof}
We first prove that \OurAlgorithmRF is $(\epsilon,\delta)$-PAC. This is a simple consequence of Lemma~\ref{lem:upper_bound_error_W}. Indeed if the algorithm stops at round $\tau$ we know that on  event $\cF$ for any policy $\pi,$
\begin{align*}
  \frac{\epsilon}{2} &\geq 2e \sqrt{\max_{a\in\cA} W_1^{\tau} (s_1,a)} + \max_{a\in\cA} W_1^{\tau}(s_1,a)
  \geq \pi_1 \he_1^{\,\tau,\pi}(s_1; r) = | \hV^{\pi,\tau}_1(s_1; r) - V_1^\pi(s_1; r)|.
\end{align*}
Recalling with are still on event $\cF,$ we have
\begin{align*}
V_1^{\star}(s_1; r) - V_1^{\hpi^{\star,\tau}}(s_1; r) &= V_1^{\star}(s_1; r) - \hV_1^{\tau, \pistar}(s_1; r) + \hV_1^{\tau,\pistar}(s_1; r) - \hV_1^{\tau, \hpi^{\star,\tau}}(s_1; r) \\ &\quad + \hV_1^{\tau, \hpi^{\star,\tau}}(s_1; r)- V_1^{\hpi^{\star,\tau}}(s_1; r) \\
&\leq |V_1^{\star}(s_1; r) - \hV_1^{\tau, \pistar}(s_1; r)| + |\hV_1^{\tau, \hpi^{\star,\tau}}(s_1; r)- V_1^{\hpi^{\star,\tau}}(s_1; r)| \leq \epsilon.
\end{align*}
We can conclude the first part of the theorem by noting that by Lemma~\ref{lem:proba_master_event}, $\P(\cF)\geq 1-\delta$.

\medskip
\noindent
It remains to prove the \emph{bound on the sample complexity}. In the rest of the proof we again assume that event $\cF$ holds. We start by fixing a round $T<\tau$.

\paragraph{Step 1: Upper bound on $W_1^t$} We first provide an upper bound on $W_h^t(s,a)$ for all $(s,a,h)$ and $t\leq T$. By definition, if $n_h^t(s,a)>0,$  then
\begin{align*}
  W_h^t(s,a) &\leq 15H^2 \frac{\beta(n_h^t(s,a),\delta)}{n_h^t(s,a)} +\left(1+\frac{1}{H}\right)\sum_{s'}\hp_h^{\,t}(s'|s,a) \max_{a'} W_{h+1}^{t}(s',a')\\
  &=15H^2\frac{\beta(n_h^t(s,a),\delta)}{n_h^t(s,a)} +\left(1+\frac{1}{H}\right)(\hp_h^{\,t}-p_h) \pi_{h+1}^{t+1} W_{h+1}^{t}(s,a)\\
  &\qquad+\left(1+\frac{1}{H}\right) p_h \pi_{h+1}^{t+1} W_{h+1}^{t}(s,a).
\end{align*}
Using Lemma~\ref{lem:Bernstein_via_kl} we apply Bernstein inequality to get
\begin{align*}
(\hp_h^{\,t}-p_h) \pi_{h+1}^{t+1} W_{h+1}^{t}(s,a) \leq \sqrt{2\Var_{p_h}(\pi_{h+1}^{t+1}W_{h+1}^{t})(s,a)  \frac{\beta(n_h^t(s,a),\delta)}{n_h^t(s,a)} } + \frac{2}{3}H \frac{\beta(n_h^t(s,a),\delta)}{n_h^t(s,a)}\cdot
\end{align*}
Now since the variance is bounded by $\Var_{p_h}(\pi_{h+1}^{t+1}W_{h+1}^{t})(s,a) \leq H p_h  \pi_{h+1}^{t+1}W_{h+1}^{t} (s,a), $ we use the fact that $\sqrt{x y}\leq x+y$ for $x,y\geq 0$ and split the square-root term into two other terms
\begin{align*}
  \sqrt{2\Var_{p_h}(\pi_{h+1}^{t+1}W_{h+1}^{t})(s,a)  \frac{\beta(n_h^t(s,a),\delta)}{n_h^t(s,a)} }  &\leq \sqrt{\frac{1}{H} p_h  \pi_{h+1}^{t+1}W_{h+1}^{t} (s,a) 2H^2 \frac{\beta(n_h^t(s,a),\delta)}{n_h^t(s,a)}}\\
  &\leq \frac{1}{H} p_h  \pi_{h+1}^{t+1}W_{h+1}^{t} (s,a)  + 2H^2 \frac{\beta(n_h^t(s,a),\delta)}{n_h^t(s,a)}\cdot
\end{align*}
We plug this upper bound in the Bernstein inequality above to obtain
 \begin{align*}
   \left(1+\frac{1}{H}\right)(\hp_h^{\,t}-p_h) \pi_{h+1}^{t+1} W_{h+1}^{t}(s,a) \leq  \frac{16}{3} H^2 \frac{\beta(n_h^t(s,a),\delta)}{n_h^t(s,a)} + \frac{2}{H} p_h  \pi_{h+1}^{t+1}W_{h+1}^{t} (s,a).
 \end{align*}
 Next, we go back to the initial upper bound and note that by  our construction, $W_h^t(s,a)\leq H\leq H^2$  we get for all $n_h^t(s,a)\geq 0$,
\begin{equation}
  \label{eq:upper_bound_Wt}
  W_h^t(s,a) \leq 21 H^2 \left(\frac{\beta(n_h^t(s,a),\delta)}{n_h^t(s,a)}\wedge 1\right) +\left(1+\frac{3}{H}\right)p_h \pi_{h+1}^{t+1} W_{h+1}^{t}(s,a).
\end{equation}
Unfolding~\eqref{eq:upper_bound_Wt} and using that $(1+3/H)^H\leq e^3$ yields
\begin{equation*}
  \pi_1^{t+1} W_1^t(s_1) \leq 21 e^3 H^2 \sum_{h=1}^H \sum_{s,a} p_h^{t+1}(s,a)
  \left(\frac{\beta(n_h^t(s,a),\delta)}{n_h^t(s,a)}\wedge 1\right).
\end{equation*}
Using Lemma~\ref{lem:cnt_pseudo} we replace the counts by the pseudo-counts in the above inequality
  \begin{equation}
    \label{eq:upper_bound_W1}
    \pi_1^{t+1} W_1^t(s_1) \leq 84 e^3 H^2\sum_{h=1}^H \sum_{s,a} p_h^{t+1}(s,a)
    \frac{\beta(\bn_h^t(s,a),\delta)}{\bn_h^t(s,a)\vee 1}\CommaBin
  \end{equation}
where we recall that $n_h^t(s,a) \triangleq \sum_{\ell = 1}^t p_h^\ell(s,a)$.

\paragraph{Step 2: Summing over $t\leq T$}
Since $T<\tau,$ we know that for $t\leq T$ due to the stopping rule,
\[
 \epsilon \leq 3e \sqrt{\pi_1^{t+1} W_1^{t} (s_1)} + \pi_1^{t+1} W_1^{t}(s_1).
\]
Summing the over the above inequalities for $0\leq t\leq T, $ followed by Cauchy-Schwarz inequality,
\begin{align*}
(T+1)\epsilon &\leq \sum_{t=0}^T 3e \sqrt{\pi_1^{t+1} W_1^{t} (s_1)} + \pi_1^{t+1} W_1^{t}(s_1) \\
&\leq 3e \sqrt{(T+1) \sum_{t=0}^T \pi_1^{t+1} W_1^{t} (s_1) } +  \sum_{t=0}^T \pi_1^{t+1} W_1^{t} (s_1).
\end{align*}
Next,  we upper-bound the sum that appears in the left-hand terms. Using successively~\eqref{eq:upper_bound_W1}, the property that $\beta(\cdot,\delta)$ is increasing and Lemma~\ref{lem:sum_1_over_n}, we have
\begin{align*}
   \sum_{t=0}^T \pi_1^{t+1} W_1^{t} (s_1) & \leq  84 e^3 H^2\sum_{t=0}^T\sum_{h=1}^H \sum_{s,a} p_h^{t+1}(s,a)
   \frac{\beta(\bn_h^t(s,a),\delta)}{\bn_h^t(s,a)\vee 1}\\
   &\leq 84 e^3 H^2 \beta(T,\delta) \sum_{t=0}^T\sum_{h=1}^H \sum_{s,a} p_h^{t+1}(s,a)
   \frac{1}{\bn_h^t(s,a)\vee 1}\\
   &= 84 e^3 H^2 \beta(T,\delta)\sum_{h=1}^H \sum_{s,a} \sum_{t=0}^T
   \frac{\bn_h^{t+1}(s,a) - \bn_h^t(s,a)}{\bn_h^t(s,a)\vee 1}\\
   &\leq 336 e^3 H^3 S A \log(T+2) \beta(T,\delta).
\end{align*}
Therefore, combining just obtained inequality with the previous one, we get
\[
(T+1)\epsilon \leq 55e^3 \sqrt{(T+1) H^3 S A \log(T+2) \beta(T,\delta)} +   336 e^3 H^3 S A \log(T+2) \beta(T,\delta).
\]
We now assume that $\tau>0$ otherwise the result is trivially true. Since the above inequality is true for all $T<\tau,$ we get the functional inequality on $\tau$
\begin{equation}
  \label{eq:bound_tau_functional}
  \epsilon \tau \leq 55e^3 \sqrt{\tau H^3 S A \log(\tau+1) \beta(\tau-1,\delta)} +   336 e^3 H^3 S A \log(\tau+1) \beta(\tau-1,\delta).
\end{equation}
\textbf{Step 3: Upper bound on $\tau$} It remains to invert of~\eqref{eq:bound_tau_functional}. Due the the specific choice of $\beta$ we are able to further upper-bound $\tau$ by
\begin{align*}
  \epsilon \tau &\leq 55e^3 \sqrt{\tau H^3 S A \left( \log(3SAH/\delta) \log(8e\tau) + S\log(8e\tau)^2\right)} \\
  &\qquad+   336 e^3 H^3 S A \left( \log(3SAH/\delta) \log(8e\tau) + S\log(8e\tau)^2\right).
\end{align*}
We are now ready to use Lemma~\ref{lem:reverse_inequality} with
\[ C = 55e^3 \sqrt{H^3SA}/\epsilon,\  A = \log(3SAH/\delta),\  B=E=S,\  D = 336 e^3 H^3 S A /\epsilon,\   \text{and}\    \alpha = 8e \]
to get
\[
\tau \leq \frac{H^3SA}{\epsilon^2} \left( \log(3SAH/\delta) +S\right) C_1+1,
\]
where we chose $C_1 \triangleq 5587 e^6 \log\left(e^{18} \left( \log(3SAH/\delta) +S\right) H^3SA/\epsilon\right)^2$ and used that $\epsilon\leq 1$.

\end{proof}

\newpage
\section{Proofs of best-policy identification results}
\label{app:proof_BPI}
For this section, we  recursively define  upper bound on the gap $\Vstar_1(s_1)-V_1^{\pi^{t+1}}(s_1)$ in order to build the stopping rule. Precisely consider $G_{H+1}^t(s,a) \triangleq 0$ for all $(s,a)$ and for all $(s,a,h)$,
\[
  G_h^{\,t}(s,a) \triangleq \min\left(\!H,  6\sqrt{\Var_{\hp_h^{\,t}}(\utV_{h+1}^t)(s,a) \frac{\betastar(n_h^t(s,a),\delta)}{n_h^t(s,a)}}+ 36 H^2 \frac{\beta(n_h^t(s,a),\delta)}{n_h^t(s,a)}
  + \left(1+\frac{3}{H}\right)\hp_h^{\,t} \pi_{h+1}^{t+1} G_{h+1}^t(s)\right).
\]
\lemmaBPI*
\begin{proof}
First, we define the following quantities
\begin{align*}
  \rQ_h^t(s,a) &\triangleq \min\Bigg(\! r_h(s,a) + p_h \rV_h^t(s,a), \max\Bigg(0, r_h(s,a)  - 3\sqrt{\Var_{\hp_h^{\,t}}(\utV_{h+1}^t)(s,a) \frac{\betastar(n_h^t(s,a),\delta)}{n_h^t(s,a)}}\\
  &\qquad-14H^2 \frac{\beta(n_h^t(s,a),\delta)}{n_h^t(s,a)}-\frac{1}{H}\hp_h^{\,t} (\utV_{h+1}^{t}-\ltV_{h+1}^t)(s,a)   +\hp_h^{\,t} \rV_{h+1}^{t}(s,a)\!\Bigg)  \!\Bigg)  \\
  \rV_h^t(s) &\triangleq\pi^{t+1}_h\rQ_h^t(s,a) \\
  \rV_{H+1}^t (s) &\triangleq 0.
\end{align*}
On the event $\cG,$ using Lemma~\ref{lem:lower_bound_rQ_BPI} below, we upper-bound the gap at time $t$ as
\[
\Vstar_1\left(s_1\right)-V^{\pi^{t+1}}_1\left(s_1\right) \leq \utV_1^t\left(s_1\right) - V^{\pi^{t+1}}_1\left(s_1\right) \leq  \utV_1^t\left(s_1\right) - \rV_1^t\left(s_1\right).
\]
Next, we upper-bound the last term in the bound above. We do it by induction on $h$, showing that for all state-action pairs $(s,a)$,
\begin{align}
  \label{eq:G_lager_gap_bounds}
 \utQ_h^t(s,a) - \rQ_h^t(s,a) \leq G_h^{\,t}(s,a).
\end{align}
The inequality is trivially true for $h=H+1$. For the induction step, assume it is true for $h+1$ and fix a state-action pair $(s,a)$. Since by construction $\utQ_h^t\leq H$ and $\rQ_h^t \geq 0,$ then if $G_h^{\,t}(s,a)=H$ the inequality is trivially true. We therefore assume that $G_h^{\,t}(s,a)< H$ which implies in particular that $n_h^t(s,a)>0$. We now distinguish the two possible values of $\rQ_h^t(s,a)$.
\paragraph{Step 1: First case} If $\rQ_h^t(s,a) = r_h(s,a) + p_h \rV_h^t(s,a)$, then we have
\begin{align*}
  \utQ_h^t(s,a)-\rQ_h^t(s,a) &\leq 3\sqrt{\Var_{\hp_h^{\,t}}(\utV_{h+1}^t)(s,a) \frac{\betastar(n_h^t(s,a),\delta)}{n_h^t(s,a)}}+14H^2 \frac{\beta(n_h^t(s,a),\delta)}{n_h^t(s,a)}\\ &\qquad+\frac{1}{H}\hp_h^{\,t} (\utV_{h+1}^{t}-\ltV_{h+1}^t)(s,a)   +\hp_h^{\,t} \utV_{h+1}^{t}(s,a) -p_h \rV_{h+1}^t(s,a)\,.
\end{align*}
We upper-bound  the last term separately as
\begin{align}
\nonumber
\hp_h^{\,t} \utV_{h+1}^{t}(s,a) -p_h \rV_{h+1}^{t}(s,a) &= \hp_h^{\,t}(\utV_{h+1}^t -\rV_{h+1}^{t})(s,a) + (\hp_h^{\,t}-p_h)\Vstar_{h+1}(s,a) \\
& \quad + (p_h-\hp_h^{\,t})(\Vstar_{h+1}-\rV_{h+1}^{t})(s,a).   \label{eq:last_term_gap_BPI}
\end{align}
First, proceeding as in the proof of Lemma~\ref{lem:tQ_upper_bound} we know that
\begin{align*}
  (\hp_h^{\,t}-p_h)\Vstar_{h+1}(s,a) &\leq 3\sqrt{\Var_{\hp_h^{\,t}}(\utV_{h+1}^t)(s,a)\frac{\betastar(n_h^t(s,a),\delta) }{n_h^t(s,a)}} + 14 H^2 \frac{\beta(n_h^t(s,a),\delta)}{n_h^t(s,a)}\\
  &\qquad + \frac{1}{H} \hp_h^{\,t}\left(\utV_{h+1}^t-\ltV_{h+1}^t\right)(s,a).
\end{align*}
Second, using Lemma~\ref{lem:Bernstein_via_kl} yields
\begin{align*}
  (p_h-\hp_h^{\,t})(\Vstar_{h+1}-\rV_{h+1}^{t})(s,a) \leq \sqrt{2\Var_{p_h}(\Vstar_{h+1}-\rV_{h+1}^{t})(s,a)\frac{\beta(n_h^t(s,a),\delta) }{n_h^t(s,a)}} +  \frac{2}{3}H \frac{\beta(n_h^t(s,a),\delta)}{n_h^t(s,a)}\cdot
\end{align*}
Using Lemma~\ref{lem:switch_variance_bis}, we simplify the variance term that appears in the inequality above as
\begin{align*}
  \Var_{p_h}(\Vstar_{h+1}-\rV_{h+1}^{t})(s,a) &\leq 2\Var_{\hp_h^{\,t}}(\Vstar_{h+1}-\rV_{h+1}^{t})(s,a) + 4H^2\frac{\beta(n_h^t(s,a),\delta)}{n_h^t(s,a)} \\
  &\leq 2 H \hp_h^{\,t} (\Vstar_{h+1}-\rV_{h+1}^{t})(s,a) +4H^2 \frac{\beta(n_h^t(s,a),\delta)}{n_h^t(s,a)}\cdot
\end{align*}
Therefore, using the above inequality in the previous one in combination with $\sqrt{x+y}\leq \sqrt{x}+\sqrt{y}$ and $\sqrt{xy}\leq x+y$ leads to
\begin{align*}
  (p_h-\hp_h^{\,t})\left(\Vstar_{h+1}-\rV_{h+1}^{t}\right)(s,a) &\leq \frac{1}{H} \hp_h^{\,t} (\Vstar_{h+1}-\rV_{h+1}^{t})(s,a) + \left(4H^2 +\sqrt{8}H+\frac{2}{3}H\right) \frac{\beta(n_h^t(s,a),\delta)}{n_h^t(s,a)} \\
  &\leq  \frac{1}{H} \hp_h^{\,t} \left(\Vstar_{h+1}-\rV_{h+1}^{t}\right)(s,a) + 8H^2 \frac{\beta(n_h^t(s,a),\delta)}{n_h^t(s,a)}\cdot
\end{align*}
Going back to~\eqref{eq:last_term_gap_BPI} and applying just derived bounds and combining them with Lemma~\ref{lem:lower_bound_rQ_BPI}, we get
\begin{align*}
  \hp_h^{\,t} \utV_{h+1}^{t}(s,a) -p_h \rV_{h+1}^{t}(s,a) &\leq \hp_h^{\,t}\left(\utV_{h+1}^t -\rV_{h+1}^{t}\right)(s,a) \\
  &\qquad +3\sqrt{\Var_{\hp_h^{\,t}}(\utV_{h+1}^t)(s,a)\frac{\betastar(n_h^t(s,a),\delta) }{n_h^t(s,a)}} + 14 H^2 \frac{\beta(n_h^t(s,a),\delta)}{n_h^t(s,a)}\\
  &\qquad+\frac{1}{H} \hp_h^{\,t}(\utV_{h+1}^t-\ltV_{h+1}^t)(s,a)\\
  &\qquad+ \frac{1}{H} \hp_h^{\,t} (\Vstar_{h+1}-\rV_{h+1}^{t})(s,a) + 8H^2 \frac{\beta(n_h^t(s,a),\delta)}{n_h^t(s,a)}\\
  &\leq 3\sqrt{\Var_{\hp_h^{\,t}}(\utV_{h+1}^t)(s,a)\frac{\betastar(n_h^t(s,a),\delta) }{n_h^t(s,a)}} + 22 H^2 \frac{\beta(n_h^t(s,a),\delta)}{n_h^t(s,a)} \\
  &\qquad+\left(1+\frac{2}{H}\right) \hp_h^{\,t} (\utV_{h+1}^t-\rV_{h+1}^{t})(s,a)\,.
\end{align*}
Now, using the induction hypothesis and Lemma~\ref{lem:lower_bound_rQ_BPI} again, we obtain
\begin{align*}
\utQ_h^t(s,a)-\rQ^{t}_h(s,a) & \leq 6\sqrt{\Var_{\hp_h^{\,t}}(\utV_{h+1}^t)(s,a)\frac{\betastar(n_h^t(s,a),\delta) }{n_h^t(s,a)}} + 36 H^2 \frac{\beta(n_h^t(s,a),\delta)}{n_h^t(s,a)} \\
&\qquad +\left(1+\frac{3}{H}\right) \hp_h^{\,t} (\utV_{h+1}^t-\rV_{h+1}^{t})(s,a)\\
&\leq 6\sqrt{\Var_{\hp_h^{\,t}}(\utV_{h+1}^t)(s,a)\frac{\betastar(n_h^t(s,a),\delta) }{n_h^t(s,a)}} + 36 H^2 \frac{\beta(n_h^t(s,a),\delta)}{n_h^t(s,a)} \\
&\qquad +\left(1+\frac{3}{H}\right) \hp_h^{\,t} \pi^{t+1}_{h+1} G_{h+1}^t (s,a)\,.
\end{align*}

\paragraph{Step 2: Second case} In the alternative case, we have that
\begin{align*}
\rQ_h^t(s,a) & = \max\Bigg(0, r_h(s,a)  - 3\sqrt{\Var_{\hp_h^{\,t}}(\utV_{h+1}^t)(s,a) \frac{\betastar(n_h^t(s,a),\delta)}{n_h^t(s,a)}} -14H^2 \frac{\beta(n_h^t(s,a),\delta)}{n_h^t(s,a)}\\
&\qquad-\frac{1}{H}\hp_h^{\,t} (\utV_{h+1}^{t}-\ltV_{h+1}^t)(s,a)   +\hp_h^{\,t} \rV_{h+1}^{t}(s,a)\Bigg),
\end{align*}
and we use Lemma~\ref{lem:lower_bound_rQ_BPI} and the induction hypothesis to get
\begin{align*}
  \utQ_h^t(s,a)-\rQ^{t}_h(s,a) &\leq 6\sqrt{\Var_{\hp_h^{\,t}}(\utV_{h+1}^t)(s,a) \frac{\betastar(n_h^t(s,a),\delta)}{n_h^t(s,a)}}+24H^2 \frac{\beta(n_h^t(s,a),\delta)}{n_h^t(s,a)}\\ &\qquad+\frac{2}{H}\hp_h^{\,t} (\utV_{h+1}^{t}-\ltV_{h+1}^t)(s,a)   +\hp_h^{\,t} (\utV_{h+1}^{t}-\rV_{h+1}^{t})(s,a) \\
  &\leq 6\sqrt{\Var_{\hp_h^{\,t}}(\utV_{h+1}^t)(s,a) \frac{\betastar(n_h^t(s,a),\delta)}{n_h^t(s,a)}}+24H^2 \frac{\beta(n_h^t(s,a),\delta)}{n_h^t(s,a)}\\ &\qquad+\left(1+\frac{2}{H}\right)\hp_h^{\,t} (\utV_{h+1}^{t}-\rV_{h+1}^t)(s,a)\\
  &\leq 6\sqrt{\Var_{\hp_h^{\,t}}(\utV_{h+1}^t)(s,a) \frac{\betastar(n_h^t(s,a),\delta)}{n_h^t(s,a)}}+24H^2 \frac{\beta(n_h^t(s,a),\delta)}{n_h^t(s,a)}\\ &\qquad+\left(1+\frac{2}{H}\right)\hp_h^{\,t} \pi^{t+1}_{h+1} G_{h+1}^t(s,a)\,.
\end{align*}
\textbf{Conclusion} In both cases, since we assumed that $G_h^{\,t}(s,a)<H$ and using the definition of $G_h^{\,t}$, we proved that
\[\utQ_h^t(s,a)-\rQ^{t}_h(s,a) \leq G_{h}^t(s,a)\,.\]
In particular, for $h=1$ we conclude that
\[
\utV_1^t(s_1) - \rV_1^t(s_1) = \pi_1^{t+1}(\utQ_1^t-\rQ_1^t)(s_1) \leq \pi_1^{t+1} G_{1}^t(s_1).
\]

\end{proof}

\begin{lemma} \label{lem:lower_bound_rQ_BPI}
On  event $\cG$, we have that for all $(s,a,h)$,
\begin{align*}
  \rQ_h^t(s,a) &\leq \min\left(\ltQ_h^t(s,a),Q_h^{\pi^{t+1}}(s,a)\right)\\
  \rV_h^t(s) &\leq \min\left(\ltV_h^t(s),V_h^{\pi^{t+1}}(s) \right),
\end{align*}
where $\rQ_h^t$ and $\rV_h^t$ are defined in the proof of Lemma~\ref{lem:control_gap_BPI}.
\end{lemma}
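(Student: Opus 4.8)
The plan is to prove the two inequalities simultaneously by a downward induction on $h$, running from $h = H+1$ down to $h = 1$. The base case is immediate, since $\rV_{H+1}^t = \ltV_{H+1}^t = V_{H+1}^{\pi^{t+1}} = 0$. For the inductive step I would assume that $\rQ_{h+1}^t \le \min\!\big(\ltQ_{h+1}^t, Q_{h+1}^{\pi^{t+1}}\big)$ and $\rV_{h+1}^t \le \min\!\big(\ltV_{h+1}^t, V_{h+1}^{\pi^{t+1}}\big)$ hold pointwise, fix a state-action pair $(s,a)$, and exploit the key structural observation that the recursion defining $\rQ_h^t$ in the proof of Lemma~\ref{lem:control_gap_BPI} was designed so that each of its two arguments is dominated, summand by summand, by one of the two targets: the Bellman recursion $Q_h^{\pi^{t+1}}(s,a) = r_h(s,a) + p_h V_{h+1}^{\pi^{t+1}}(s,a)$ and the recursion defining $\ltQ_h^t$ in Appendix~\ref{app:optimism}.

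Concretely, for $\rQ_h^t(s,a) \le Q_h^{\pi^{t+1}}(s,a)$ I would bound $\rQ_h^t(s,a)$ by the first argument of its outer minimum, $r_h(s,a) + p_h \rV_{h+1}^t(s,a)$, and use that $p_h(\cdot\mid s,a)$ is a probability distribution together with the induction hypothesis $\rV_{h+1}^t \le V_{h+1}^{\pi^{t+1}}$ to get $p_h \rV_{h+1}^t(s,a) \le p_h V_{h+1}^{\pi^{t+1}}(s,a)$, hence $\rQ_h^t(s,a) \le Q_h^{\pi^{t+1}}(s,a)$ by the Bellman equation. For $\rQ_h^t(s,a) \le \ltQ_h^t(s,a)$ I would instead bound $\rQ_h^t(s,a)$ by the second argument of its outer minimum, which is \emph{exactly} the expression defining $\ltQ_h^t(s,a)$ except that the term $\hp_h^{\,t} \ltV_{h+1}^t(s,a)$ is replaced by $\hp_h^{\,t} \rV_{h+1}^t(s,a)$: the reward $r_h(s,a)$, the $\betastar/n$ variance bonus built on $\utV_{h+1}^t$, the $H^2\beta/n$ bonus, and the correction $-\tfrac1H \hp_h^{\,t}(\utV_{h+1}^t - \ltV_{h+1}^t)(s,a)$ all appear identically in the two expressions (with the same $n_h^t(s,a)$), so they cancel in the comparison, and this remains valid even when $n_h^t(s,a) = 0$. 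Then $\rV_{h+1}^t \le \ltV_{h+1}^t$ from the induction hypothesis, plus monotonicity of $\hp_h^{\,t}(\cdot\mid s,a)$-averaging and of $x \mapsto \max(0,x)$, yields $\rQ_h^t(s,a) \le \ltQ_h^t(s,a)$.

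Finally I would descend from the $Q$-level to the $V$-level by evaluating at $a = \pi_h^{t+1}(s)$: since $\rV_h^t(s) = \rQ_h^t\big(s, \pi_h^{t+1}(s)\big)$ by definition, the two $Q$-bounds give $\rV_h^t(s) \le \ltQ_h^t\big(s, \pi_h^{t+1}(s)\big) \le \max_{a\in\cA} \ltQ_h^t(s,a) = \ltV_h^t(s)$ and $\rV_h^t(s) \le Q_h^{\pi^{t+1}}\big(s, \pi_h^{t+1}(s)\big) = V_h^{\pi^{t+1}}(s)$, the latter by the policy Bellman equation $V_h^\pi = \pi_h Q_h^\pi$; this closes the induction. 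I do not expect a genuine obstacle here, as the argument is purely algebraic once the definitions are lined up; the only points requiring care are the bookkeeping that confirms $\rQ_h^t$ and $\ltQ_h^t$ differ solely in their final expectation term, and the degenerate case $n_h^t(s,a) = 0$. The event $\cG$ plays no substantive role in this particular argument and is carried along only because $\rQ_h^t$, $\rV_h^t$ are introduced inside the proof of Lemma~\ref{lem:control_gap_BPI}, which is stated on $\cG$.
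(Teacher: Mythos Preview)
Your proposal is correct and follows essentially the same approach as the paper's proof: a backward induction on $h$ that bounds $\rQ_h^t$ by each target via the corresponding argument of its defining minimum, then passes to the value functions by evaluating at $\pi_h^{t+1}(s)$ and, for $\ltV_h^t$, taking the maximum over actions. Your observation that the event $\cG$ plays no role in this particular argument is also accurate.
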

\begin{proof}
We proceed by induction on $h$. For $h = H +1$ the inequalities are trivially true. For that induction step, we assume them true for $h+1$ and therefore for all $(s,a),$ we have
\begin{align*}
  \rQ_h^t(s,a) &\leq r_h(s,a) + p_h \rV_{h+1}^t(s,a)\\
   &\leq r_h(s,a) + p_h V_{h+1}^{\pi^{t+1}}(s,a) = Q_h^{\pi^{t+1}}(s,a)\,.
\end{align*}
Similarly to the above, we obtain
\begin{align*}
    \rQ_h^t(s,a) &\leq \max\Bigg(\!0, r_h(s,a)  - 3\sqrt{\Var_{\hp_h^{\,t}}(\utV_{h+1}^t)(s,a) \frac{\betastar(n_h^t(s,a),\delta)}{n_h^t(s,a)}}\\
    &\qquad-14H^2 \frac{\beta(n_h^t(s,a),\delta)}{n_h^t(s,a)}-\frac{1}{H}\hp_h^{\,t} (\utV_{h+1}^{t}-\ltV_{h+1}^t)(s,a)   +\hp_h^{\,t} \rV_{h+1}^{t}(s,a)\!\Bigg)\\
    &\leq \max\Bigg(\!0, r_h(s,a)  - 3\sqrt{\Var_{\hp_h^{\,t}}(\utV_{h+1}^t)(s,a) \frac{\betastar(n_h^t(s,a),\delta)}{n_h^t(s,a)}}\\
    &\qquad-14H^2 \frac{\beta(n_h^t(s,a),\delta)}{n_h^t(s,a)}-\frac{1}{H}\hp_h^{\,t} (\utV_{h+1}^{t}-\ltV_{h+1}^t)(s,a)   +\hp_h^{\,t} \ltV_{h+1}^{t}(s,a)\!\Bigg) = \ltQ_h^t(s,a).
\end{align*}
It remains to prove the inequalities for the value functions. Using the inequalities with the Q-value functions we get
\begin{align*}
\rV_h^t(s) &\leq \pi_h^{t+1} Q_h^{\pi^{t+1}}(s) = V_h^{\pi^{t+1}}(s), \quad \text{and} \\
 \rV_h^t(s) &\leq \pi_h^{t+1} \ltQ_h^{t}(s) \leq \max_{a\in\cA}  \ltQ_h^{t}(s,a) = \ltV_h^t(s)\,.
\end{align*}
\end{proof}

%

\theoremBPI*
\begin{proof}

  We first prove that \OurAlgorithmBPI is $(\epsilon,\delta)$-PAC. This is a simple consequence of Lemma~\ref{lem:control_gap_BPI}. Indeed, if our algorithm stops at time $\tau$ we know that on  event $\cG$
\[
    V_1^{\hpi}(s_1) = V_1^{\pi^{\tau+1}}(s_1) \geq \Vstar_1(s_1) -\pi_1^{\tau+1} G_1^{\tau}(s_1) \geq \Vstar_1(s_1) -\epsilon\,.
 \]
  We can then conclude the first part of the theorem by noting that by Lemma~\ref{lem:proba_master_event}, $\P(\cG)\geq 1-\delta$.

\medskip
\noindent
It remains to prove the bound on the sample complexity. In the rest of the proof we assume that the event $\cG$ holds and fix a round $T<\tau$.

\paragraph{Step 1: Upper bound on $G_1^t$} We first provide an upper bound on $G_h^{\,t}(s,a)$ for all $(s,a,h)$ and $t\leq T$. If $n_h^t(s,a)>0$ by definition of $G_h^{\,t}$ we have
\begin{align}
  G_h^{\,t}(s,a) &\leq  6\sqrt{\Var_{\hp_h^{\,t}}(\utV_{h+1}^t)(s,a) \frac{\betastar(n_h^t(s,a),\delta)}{n_h^t(s,a)}}+ 36 H^2 \frac{\beta(n_h^t(s,a),\delta)}{n_h^t(s,a)} \nonumber \\
  &\quad  + \left(1+\frac{3}{H}\right)\hp_h^{\,t} \pi_{h+1}^{t+1} G_{h+1}^t(s,a).
  \label{eq:def_G_without_min}
\end{align}
Now, we replace the empirical transition probability by the true one. Using Lemma~\ref{lem:Bernstein_via_kl} and that $0\leq G_h^{\,t}\leq H,$ we get
\begin{align*}
  (\hp_h^{\,t}-p_h) \pi_{h+1}^{t+1} G_{h+1}^t(s,a) &\leq \sqrt{2\Var_{p_h}(\pi_{h+1}^{t+1} G_{h+1}^t)(s,a) \frac{\beta(n_h^t(s,a),\delta)}{n_h^t(s,a)}} + \frac{2}{3}H \frac{\beta(n_h^t(s,a),\delta)}{n_h^t(s,a)}\\
  &\leq \frac{1}{H} p_h \pi_{h+1}^{t+1} G_{h+1}^t(s,a) +  3 H^2\frac{\beta(n_h^t(s,a),\delta)}{n_h^t(s,a)}\CommaBin
\end{align*}
where in the last line we used $\Var_{p_h}(\pi_{h+1}^{t+1} G_{h+1}^t)(s,a) \leq H \pi_{h+1}^{t+1} G_{h+1}^t(s,a)$ and $\sqrt{xy}\leq x+y$ for all $x,y\geq 0$. We also need to replace the variance of the upper confidence bound under the empirical transition by the variance of the optimal value function under the true transition probability.
Using Lemma~\ref{lem:switch_variance_bis} then Lemma~\ref{lem:switch_variance}, we obtain
\begin{align*}
\Var_{\hp_h^{\,t}}(\utV_{h+1}^t)(s,a) &\leq 2\Var_{p_h}(\utV_{h+1}^t)(s,a) + 4 H^2 \frac{\beta(n_h^t(s,a),\delta)}{n_h^t(s,a)}\\
&\leq 4 \Var_{p_h}(V_{h+1}^{\pi^{t+1}})(s,a) + 4 H p_h (\utV_{h+1}^t-V_{h+1}^{\pi^{t+1}})(s,a)+4H^2 \frac{\beta(n_h^t(s,a),\delta)}{n_h^t(s,a)}\\
&\leq 4 \Var_{p_h}(V_{h+1}^{\pi^{t+1}})(s,a) + 4 H p_h \pi_{h+1}^{t+1}G_{h+1}^{t}(s,a)+4H^2 \frac{\beta(n_h^t(s,a),\delta)}{n_h^t(s,a)}
\end{align*}
where we used \eqref{eq:G_lager_gap_bounds} from the proof of Lemma~\ref{lem:control_gap_BPI} in the last inequality. Next, using $\sqrt{x+y}\leq \sqrt{x}+\sqrt{y}$, $\sqrt{xy}\leq x+y,$ and $\betastar(n, \delta)\leq \beta(n,\delta)$ leads to
\begin{align*}
\sqrt{\Var_{\hp_h^{\,t}}(\utV_{h+1}^t)(s,a) \frac{\betastar(n_h^t(s,a),\delta)}{n_h^t(s,a)}} &\leq 2\sqrt{ \Var_{p_h}(V_{h+1}^{\pi^{t+1}})(s,a) \frac{\betastar(n_h^t(s,a),\delta)}{n_h^t(s,a)}}+ (2 H +4H^2) \frac{\beta(n_h^t(s,a),\delta)}{n_h^t(s,a)}\\
&\qquad+ \frac{1}{H} p_h \pi_{h+1}^{t+1}G_{h+1}^{t}(s,a)\\
& \leq 2\sqrt{ \Var_{p_h}(V_{h+1}^{\pi^{t+1}})(s,a) \frac{\betastar(n_h^t(s,a),\delta)}{n_h^t(s,a)}}+ 6H^2 \frac{\beta(n_h^t(s,a),\delta)}{n_h^t(s,a)}\\
&+ \frac{1}{H} p_h \pi_{h+1}^{t+1} G_{h+1}^t(s,a).
\end{align*}
Combining these two inequalities with~\eqref{eq:def_G_without_min} yields
\begin{align*}
  G_h^{\,t}(s,a) &\leq 12\sqrt{ \Var_{p_h}(V_{h+1}^{\pi^{t+1}})(s,a) \frac{\betastar(n_h^t(s,a),\delta)}{n_h^t(s,a)}}+ 36H^2 \frac{\beta(n_h^t(s,a),\delta)}{n_h^t(s,a)}\\
  &\qquad+ \frac{6}{H} p_h \pi_{h+1}^{t+1} G_{h+1}^t(s,a) + 36 H^2 \frac{\beta(n_h^t(s,a),\delta)}{n_h^t(s,a)}\\
  &\qquad+\left(1+\frac{3}{H}\right) \frac{1}{H} p_h \pi_{h+1}^{t+1} G_{h+1}^t + \left(1+\frac{3}{H}\right) 3 H^2\frac{\beta(n_h^t(s,a),\delta)}{n_h^t(s,a)}\\
  &\qquad+  \left(1+\frac{3}{H}\right)p_h \pi_{h+1}^{t+1} G_{h+1}^t(s,a)\\
  &\leq 12 \sqrt{ \Var_{p_h}(V_{h+1}^{\pi^{t+1}})(s,a) \frac{\betastar(n_h^t(s,a),\delta)}{n_h^t(s,a)}} + 84 H^2 \frac{\beta(n_h^t(s,a),\delta)}{n_h^t(s,a)}\\
  &\qquad +  \left(1+\frac{13}{H}\right)p_h \pi_{h+1}^{t+1} G_{h+1}^t(s,a).
\end{align*}
Since by construction, $G_h^{\,t}(s,a)\leq H,$ we have that for all $n_h^t(s,a)\geq 0,$
\begin{align*}
  G_h^{\,t}(s,a) & \leq 12 \sqrt{ \Var_{p_h}(V_{h+1}^{\pi^{t+1}})(s,a) \left(\frac{\betastar(n_h^t(s,a),\delta)}{n_h^t(s,a)} \wedge 1 \right)} + 84 H^2 \left(\frac{\beta(n_h^t(s,a),\delta)}{n_h^t(s,a)}\wedge 1\right)\\
  &\qquad +  \left(1+\frac{13}{H}\right)p_h \pi_{h+1}^{t+1} G_{h+1}^t(s,a).
\end{align*}
Unfolding the previous inequality and using $(1+13/H)^H\leq e^{13}$ we get
\begin{align*}
  \pi_1 G_1^t(s_1) & \leq 12e^{13} \sum_{h=1}^H \sum_{s,a} p_h^{t+1}(s,a) \sqrt{ \Var_{p_h}(V_{h+1}^{\pi^{t+1}})(s,a) \left(\frac{\betastar(n_h^t(s,a),\delta)}{n_h^t(s,a)} \wedge 1 \right)}\\
   &\qquad + 84  e^{13} H^2 \sum_{h=1}^H \sum_{s,a} p_h^{t+1}(s,a) \left(\frac{\beta(n_h^t(s,a),\delta)}{n_h^t(s,a)}\wedge 1\right),
\end{align*}
where we recall that $p_h^{t+1}(s,a)$ is the probability of reaching a state-action pair $(s,a)$ at step $h$ under  policy $\pi^{t+1}$. Using Lemma~\ref{lem:cnt_pseudo} we replace the counts by the pseudo-counts in the previous inequality as
\begin{align}
  \pi_1 G_1^t(s_1) & \leq 24e^{13} \sum_{h=1}^H \sum_{s,a} p_h^{t+1}(s,a) \sqrt{ \Var_{p_h}(V_{h+1}^{\pi^{t+1}})(s,a) \frac{\betastar(\bn_h^t(s,a),\delta)}{\bn_h^t(s,a)\vee 1} }\nonumber\\
   &\qquad + 336 e^{13} H^2 \sum_{h=1}^H \sum_{s,a} p_h^{t+1}(s,a) \frac{\beta(\bn_h^t(s,a),\delta)}{\bn_h^t(s,a)\vee 1  }\cdot \label{eq:upper_bound_G1}
\end{align}

\paragraph{Step 2: Law of total variance.} Using Lemma~\ref{lem:law_of_total_variance} we further upper-bound the first sum in~\eqref{eq:upper_bound_G1}. In particular, by Cauchy-Schwarz inequality, we obtain
\begin{align*}
  \sum_{h=1}^H \sum_{s,a} p_h^{t+1}(s,a) &\sqrt{ \Var_{p_h}(V_{h+1}^{\pi^{t+1}})(s,a) \frac{\betastar(\bn_h^t(s,a),\delta)}{\bn_h^t(s,a)\vee 1} }  \\
  &\leq
  \sqrt{  \sum_{h=1}^H \sum_{s,a} p_h^{t+1}(s,a) \Var_{p_h}(V_{h+1}^{\pi^{t+1}})(s,a) } \sqrt{  \sum_{h=1}^H \sum_{s,a} p_h^{t+1}(s,a) \frac{\betastar(\bn_h^t(s,a),\delta)}{\bn_h^t(s,a)\vee 1} }\\
  &\leq \sqrt{ \E_{\pi^{t+1}}\!\left[ \left(\sum_{h=1}^H r_{h}( s_{h},a_{h}) - V_1^{\pi^{t+1}}(s_1)\right)^2 \right] } \sqrt{  \sum_{h=1}^H \sum_{s,a} p_h^{t+1}(s,a) \frac{\betastar(\bn_h^t(s,a),\delta)}{\bn_h^t(s,a)\vee 1} }\\
  &\leq H  \sqrt{  \sum_{h=1}^H \sum_{s,a} p_h^{t+1}(s,a) \frac{\betastar(\bn_h^t(s,a),\delta)}{\bn_h^t(s,a)\vee 1} }\cdot
\end{align*}
Therefore, going back to~\eqref{eq:upper_bound_G1}, we obtain
\begin{align}
  \pi_1 G_1^t(s_1) & \leq 24e^{13} H  \sqrt{  \sum_{h=1}^H \sum_{s,a} p_h^{t+1}(s,a) \frac{\betastar(\bn_h^t(s,a),\delta)}{\bn_h^t(s,a)\vee 1} } \nonumber \\ &\quad + 336 e^{13} H^2 \sum_{h=1}^H \sum_{s,a} p_h^{t+1}(s,a) \frac{\beta(\bn_h^t(s,a),\delta)}{\bn_h^t(s,a)\vee 1}\cdot\label{eq:bound_G1_with_H}
\end{align}

\paragraph{Step 3: Summing over $t\leq T$} Since $T<\tau,$ we know that for $t\leq T,$ using due to the design of the stopping rule,
\[
 \epsilon \leq \pi_1^{t+1} G_1^t(s_1)\,.
\]
Therefore, summing the previous inequalities in combination with \eqref{eq:bound_G1_with_H}, yields
\begin{align*}
  (T+1) \epsilon &\leq 24e^{13} H \sum_{t=1}^T \sqrt{  \sum_{h=1}^H \sum_{s,a} p_h^{t+1}(s,a) \frac{\betastar(\bn_h^t(s,a),\delta)}{\bn_h^t(s,a)\vee 1} }
  + 336 e^{13} H^2 \sum_{t=1}^T \sum_{h=1}^H \sum_{s,a} p_h^{t+1}(s,a) \frac{\beta(\bn_h^t(s,a),\delta)}{\bn_h^t(s,a)\vee 1}\\
  &\leq 24e^{13} H\sqrt{T} \sqrt{ \sum_{t=1}^T \sum_{h=1}^H \sum_{s,a} p_h^{t+1}(s,a) \frac{\betastar(\bn_h^t(s,a),\delta)}{\bn_h^t(s,a)\vee 1} } + 336 e^{13} H^2 \sum_{t=1}^T \sum_{h=1}^H \sum_{s,a} p_h^{t+1}(s,a) \frac{\beta(\bn_h^t(s,a),\delta)}{\bn_h^t(s,a)\vee 1}\cdot
\end{align*}
Using successively that $\beta(.,\delta)$ and $\betastar(.,\delta)$ are increasing, and then Lemma~\ref{lem:sum_1_over_n}, we have
\begin{align*}
  \sum_{t=1}^T \sum_{h=1}^H \sum_{s,a} p_h^{t+1}(s,a) \frac{\betastar(\bn_h^t(s,a),\delta)}{\bn_h^t(s,a)\vee 1} &\leq \betastar(T,\delta) \sum_{t=1}^T \sum_{h=1}^H \sum_{s,a} p_h^{t+1}(s,a) \frac{1}{\bn_h^t(s,a)\vee 1} \\
  &\leq \betastar(T,\delta) \sum_{t=1}^T \sum_{h=1}^H \sum_{s,a} \frac{\bn_h^{t+1}(s,a)-\bn_h^t(s,a)}{\bn_h^t(s,a)\vee 1}\\
  &\leq 4 H S A \betastar(T,\delta) \log(T+2).
\end{align*}
Similarly, for the second sum we get
\begin{align*}
  \sum_{t=1}^T \sum_{h=1}^H \sum_{s,a} p_h^{t+1}(s,a) \frac{\beta(\bn_h^t(s,a),\delta)}{\bn_h^t(s,a)\vee 1} &\leq 4 H S A \beta(T,\delta) \log(T+2).
\end{align*}
Therefore, we obtain that
\begin{align*}
  (T+1) \epsilon &\leq  48e^{13} \sqrt{T}\sqrt{H^3 S A \betastar(T,\delta) \log(T+2)} + 1344 e^{13} H^3 S A \beta(T,\delta) \log(T+2).
\end{align*}
We assume that $\tau>0$ otherwise the result is trivially true. Since the above inequality is true for all $T<\tau,$ we get the functional inequality for $\tau$
\begin{equation}
  \label{eq:bound_tau_functional_BPI}
  \epsilon \tau \leq 48e^{13} \sqrt{ \tau H^3 S A \betastar(\tau -1,\delta) \log(\tau+1)} + 1344 e^{13} H^3 S A \beta(\tau-1,\delta) \log(\tau+1).
\end{equation}

\paragraph{Step 3: Upper bound on $\tau$} It remains to invert~\eqref{eq:bound_tau_functional_BPI}. Using the definition of $\beta$ and $\betastar$ yields
\begin{align*}
  \epsilon \tau &\leq 48e^{13} \sqrt{ \tau H^3 S A  \big( \log(3SAH/\delta) \log(8e\tau) + \log(8e\tau)^2\big)} \\
  &\qquad + 1344 e^{13} H^3 S A  \big( \log(3SAH/\delta) \log(8e\tau) + S\log(8e\tau)^2\big).
\end{align*}
We can now use Lemma~\ref{lem:reverse_inequality} with
\[ C = 48e^{13} \sqrt{H^3SA}/\epsilon,\, A = \log(3SAH/\delta) ,\, B = 1,\,E= S,\, D = 1344 e^{13} H^3 S A /\epsilon, \, \text{\ and\ } \alpha = 8e \]
to get
\[
\tau \leq \frac{H^3SA}{\epsilon^2} \big( \log(3SAH/\delta) +S\big) C_1+1,
\]
with $C_1 \triangleq 5904e^{26} \log\left(e^{30} \left( \log(3SAH/\delta) +S\right) H^3SA/\epsilon\right)^2$ and used that $\epsilon\leq 1/S^2$.
\end{proof}

\newpage
\section{Technical lemmas}

\subsection{A Bellman-type equation for the variance}
\label{app:Bellman_variance}
For a deterministic policy $\pi$ we define Bellman-type equations for the variances as follows
\begin{align*}
  \Qvar_h^\pi(s,a) &\triangleq \Var_{p_h}{V_{h+1}^\pi}(s,a) + p_h \Vvar^\pi_{h+1}(s,a)\\
  \Vvar_h^\pi(s) &\triangleq \Qvar^\pi_h (s, \pi(s))\\
  \Vvar_{H+1}^\pi(s)&\triangleq0,
\end{align*}
where $\Var_{p_h}(f)(s,a) \triangleq \E_{s' \sim p_h(\cdot | s, a)} \big[(f(s')-p_h f(s,a))^2\big]$ denotes the \emph{variance operator.}
 In particular, the function $s \mapsto \Vvar_1^\pi(s)$ represents the average sum of the local variances $\Var_{p_h}{V_{h+1}^\pi}(s,a)$ over a trajectory following the policy $\pi$, starting from $(s, a)$. Indeed, the definition above implies that
 \[\Vvar_1^\pi(s_1) = \sum_{h=1}^H\sum_{s,a} p_h^\pi(s,a) \Var_{p_h}(V_{h+1}^\pi)(s,a).
 \]
 The lemma below shows that we can relate the global variance of the cumulative reward over a trajectory to the average sum of local variances.
\begin{lemma}[Law of total variance]  For any deterministic policy $\pi$ and for all $h\in[H]$,
  \[
  \E_\pi\!\left[  \left(\sum_{h'=h}^H r_{h'}( s_{h'},a_{h'}) - Q_h^\pi(s_h,a_h)\right)^{\!\!2}\middle| (s_h,a_h)=(s,a) \right] = \Qvar_h^\pi(s,a).
  \]
In particular,
\[
\E_\pi\!\left[ \left(\sum_{h=1}^H r_{h}( s_{h},a_{h}) - V_1^\pi(s_1)\right)^{\!\!2} \right] = \Vvar_1^\pi(s_1) = \sum_{h=1}^H\sum_{s,a} p_h^\pi(s,a) \Var_{p_h}(V_{h+1}^\pi)(s,a).
\]
\label{lem:law_of_total_variance}
\end{lemma}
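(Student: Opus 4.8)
The plan is to prove the first identity by \emph{backward induction} on $h$, and then to read off the ``in particular'' statement. Write $G_h \triangleq \sum_{h'=h}^H r_{h'}(s_{h'},a_{h'})$ for the random return from step $h$ under $\pi$, so that $G_h = r_h(s_h,a_h) + G_{h+1}$ with $G_{H+1}=0$. For $h=H+1$ both sides of the claimed identity vanish, since $G_{H+1}=0$, $Q_{H+1}^\pi\triangleq0$, and $\Qvar_{H+1}^\pi\triangleq0$, which gives the base case.

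For the induction step, fix $h\le H$ and suppose the identity holds at step $h+1$. The deterministic reward $r_h(s_h,a_h)$ cancels in $G_h-Q_h^\pi(s_h,a_h)$, so, using $V_{h+1}^\pi(s_{h+1})=Q_{h+1}^\pi(s_{h+1},a_{h+1})$ (because $a_{h+1}=\pi_{h+1}(s_{h+1})$), I would write
\[
G_h-Q_h^\pi(s_h,a_h) = \underbrace{\big(G_{h+1}-Q_{h+1}^\pi(s_{h+1},a_{h+1})\big)}_{A} + \underbrace{\big(V_{h+1}^\pi(s_{h+1})-p_hV_{h+1}^\pi(s_h,a_h)\big)}_{B}.
\]
Expanding $(A+B)^2=A^2+2AB+B^2$ and taking $\E_\pi[\,\cdot\mid(s_h,a_h)=(s,a)]$, I would treat the three terms via the tower rule, conditioning on $(s_{h+1},a_{h+1})$ and invoking the Markov property: (i) $B$ is a function of $s_{h+1}$ with conditional mean zero, so $\E_\pi[B^2\mid(s_h,a_h)=(s,a)]=\Var_{p_h}(V_{h+1}^\pi)(s,a)$; (ii) conditionally on $(s_{h+1},a_{h+1})=(s',a')$ the variable $A$ has mean $\E_\pi[G_{h+1}\mid(s_{h+1},a_{h+1})=(s',a')]-Q_{h+1}^\pi(s',a')=0$ by the definition of $Q_{h+1}^\pi$, while $B$ is measurable with respect to this conditioning, so the cross term has conditional expectation $0$; (iii) by the induction hypothesis $\E_\pi[A^2\mid(s_{h+1},a_{h+1})=(s',a')]=\Qvar_{h+1}^\pi(s',a')$, and averaging over $s'\sim p_h(\cdot\mid s,a)$, $a'=\pi_{h+1}(s')$ gives $\E_\pi[A^2\mid(s_h,a_h)=(s,a)]=p_h\Vvar_{h+1}^\pi(s,a)$. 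Summing the three contributions yields $\Var_{p_h}(V_{h+1}^\pi)(s,a)+p_h\Vvar_{h+1}^\pi(s,a)=\Qvar_h^\pi(s,a)$, which closes the induction.

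To conclude, I would apply the identity at $h=1$ with $(s,a)=(s_1,\pi_1(s_1))$ and use $Q_1^\pi(s_1,\pi_1(s_1))=V_1^\pi(s_1)$ to get $\E_\pi[(\sum_{h=1}^H r_h(s_h,a_h)-V_1^\pi(s_1))^2]=\Qvar_1^\pi(s_1,\pi_1(s_1))=\Vvar_1^\pi(s_1)$, and then unroll the recursion $\Vvar_h^\pi(s)=\Var_{p_h}(V_{h+1}^\pi)(s,\pi_h(s))+p_h\Vvar_{h+1}^\pi(s,\pi_h(s))$ from $h=1$ to $H+1$, recognizing $p_h^\pi(s,a)$ as the step-$h$ visitation probability of $(s,a)$ under $\pi$, to obtain $\Vvar_1^\pi(s_1)=\sum_{h=1}^H\sum_{s,a}p_h^\pi(s,a)\Var_{p_h}(V_{h+1}^\pi)(s,a)$. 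The only delicate point — the \emph{main obstacle} — is the conditioning bookkeeping: one must use the Markov property to reduce conditioning on the whole past to conditioning on $(s_{h+1},a_{h+1})$, which is exactly what makes the cross term vanish (it is a centered, martingale-type increment) and lets the induction hypothesis apply at step $h+1$; everything else is routine algebra.
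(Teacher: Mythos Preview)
Your proof is correct and follows essentially the same approach as the paper: backward induction on $h$, the same two-term decomposition $A+B$ (the paper writes it with $Q_{h+1}^\pi(s_{h+1},a_{h+1})-p_hV_{h+1}^\pi$ and $\sum_{h'=h+1}^H r_{h'}-Q_{h+1}^\pi$), expansion of the square, killing the cross term via the tower property, and applying the induction hypothesis to the $A^2$ term. Your treatment of the ``in particular'' part (unrolling the recursion to recover the occupancy-measure sum) is slightly more explicit than the paper's, which simply states it, but there is no substantive difference.
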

\begin{proof}
	We proceed by induction. The statement in Lemma~\ref{lem:law_of_total_variance} is trivial for $h=H+1$. We now assume that it holds for $h+1$ and prove that it also holds for $h$. For this purpose, we compute
	\begin{align*}
		&
		\E_\pi\left[\!
		 \left(\sum_{h'=h}^H r_{h'}( s_{h'},a_{h'}) - Q_h^\pi(s_h,a_h)\right)^{\!\!2}
		 \middle| (s_h,a_h) \right] \\
		& =
		\E_\pi\left[\! \left( Q_{h+1}^\pi(s_{h+1},a_{h+1}) - p_h V_{h+1}^\pi(s_h,a_h) + \sum_{h'=h+1}^H r_{h'}( s_{h'},a_{h'}) - Q_{h+1}^\pi(s_{h+1},a_{h+1})\right)^{\!\!2}
		\middle| (s_h,a_h) \right] \\
		& =
		\E_\pi\left[\!
			\left( Q_{h+1}^\pi(s_{h+1},a_{h+1}) - p_h V_{h+1}^\pi(s_h,a_h) \right)^{\!\!2}
		\middle| (s_h,a_h) \right] \\
		&
		+ \E_\pi\left[\!
		\left( \sum_{h'=h+1}^H r_{h'}( s_{h'},a_{h'}) - Q_{h+1}^\pi(s_{h+1},a_{h+1}) \right)^{\!\!2}
		\middle| (s_h,a_h) \right] \\
		& + 2 \E_\pi\left[\!
			\left( \sum_{h'=h+1}^H r_{h'}( s_{h'},a_{h'}) - Q_{h+1}^\pi(s_{h+1},a_{h+1}) \right)
			\left( Q_{h+1}^\pi(s_{h+1},a_{h+1}) - p_h V_{h+1}^\pi(s_h,a_h) \right)
		\middle| (s_h,a_h) \right].
	\end{align*}
	The definition of  $Q_{h+1}^\pi(s_{h+1},a_{h+1})$ implies that \[\E_\pi\!\left[ \sum_{h'=h+1}^H r_{h'}( s_{h'},a_{h'}) - Q_{h+1}^\pi(s_{h+1},a_{h+1})	\middle| (s_{h+1},a_{h+1}) \right] = 0.\]
	Therefore, the law of total expectation gives us
	\begin{align*}
		&\E_\pi\left[\!
		 \left(\sum_{h'=h}^H r_{h'}( s_{h'},a_{h'}) - Q_h^\pi(s_h,a_h)\right)^{\!\!2}
		 \middle| (s_h,a_h) \right] \\ & = \E_\pi\!\left[
		\left( V_{h+1}^\pi(s_{h+1}) - p_h V_{h+1}^\pi(s_h,a_h) \right)^2
		\middle| (s_h,a_h) \right]
		+ \E_\pi\!\left[
		\left( \sum_{h'=h+1}^H \!\! \!r_{h'}( s_{h'},a_{h'}) - Q_{h+1}^\pi(s_{h+1},a_{h+1}) \right)^2
		\middle| (s_h,a_h) \right] \\
		& = \Var_{p_h}{V_{h+1}^\pi}(s_h,a_h) \\ & \hspace{0.3cm} + \!\!\!\!\!\!\sum_{(s_{h+1},a_{h+1})}\!\!\!\!\!\! p_h(s_{h+1} | s_h,a_h)\ind_{\left(a_{h+1} = \pi(s_{h+1})\right)} \E_\pi\left[\left( \sum_{h'=h+1}^H \!\!\! r_{h'}( s_{h'},a_{h'}) - Q_{h+1}^\pi(s_{h+1},a_{h+1}) \right)^2
		\middle| (s_{h+1},a_{h+1})\right] \\
		&  = \Var_{p_h}{V_{h+1}^\pi}(s_h,a_h) + p_h \Vvar^\pi_{h+1}(s_h,a_h) \\
		& = \sigma Q_h^\pi(s_h,a_h)
	\end{align*}
	where in the third equality we used the inductive hypothesis and the definition of $\sigma V_{h+1}^{\pi}$. 
\end{proof}

\subsection{Counts to pseudo-counts}
\label{app:count}
\begin{lemma}\label{lem:cnt_pseudo} On event $\cE^{\text{\normalfont cnt}}$, $\forall  h \in [H], (s,a) \in \cS \times \cA$,
\[ \forall t \in \N^\star, \ \frac{\beta(n_h^t(s,a),\delta)}{n_h^t(s,a)}\wedge 1 \leq 4 \frac{\beta(\bar n_h^t(s,a),\delta)}{\bar n_h^t(s,a)\vee 1}\cdot\]
\end{lemma}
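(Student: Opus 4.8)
The plan is a case split on the pseudo-count $\bar n_h^t(s,a)$ against $\beta^{\cnt}(\delta)$, combined with two elementary facts about the threshold: $n\mapsto\beta(n,\delta)$ is non-decreasing, and $x\mapsto\beta(x,\delta)/x$ is strictly decreasing on $(0,\infty)$. The latter holds because $\beta(x,\delta)=\log(3SAH/\delta)+S\log(8e(x+1))$, so if $g(x):=\beta(x,\delta)/x$ then $x^2 g'(x)=Sx/(x+1)-\beta(x,\delta)<S-S=0$, using $Sx/(x+1)<S$ and $\beta(x,\delta)\ge S\log(8e)>S$. I will also use the ordering noted after Lemma~\ref{lem:proba_master_event}, $1\le\beta^{\cnt}(\delta)\le\beta(n,\delta)$ for all $n\ge 0$, and the definition of $\cE^{\cnt}$, on which $n_h^t(s,a)\ge\tfrac12\bar n_h^t(s,a)-\beta^{\cnt}(\delta)$.

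Fix $h,(s,a),t$ and abbreviate $n=n_h^t(s,a)$, $\bar n=\bar n_h^t(s,a)$, $\beta(\cdot)=\beta(\cdot,\delta)$, $\beta^{\cnt}=\beta^{\cnt}(\delta)$. \emph{Case 1: $\bar n\le 4\beta^{\cnt}$.} Since $\beta^{\cnt}\ge 1$ we have $\bar n\vee 1\le 4\beta^{\cnt}$, so the right-hand side is at least $4\beta(\bar n)/(4\beta^{\cnt})=\beta(\bar n)/\beta^{\cnt}\ge 1$, the last step using $\beta(\bar n)\ge\beta^{\cnt}$. As the left-hand side is at most $1$ by the clipping, the claim holds; note this also handles $n=0$ (which on $\cE^{\cnt}$ forces $\bar n\le 2\beta^{\cnt}$, hence Case~1) and $\bar n=0$. \emph{Case 2: $\bar n> 4\beta^{\cnt}$.} On $\cE^{\cnt}$ we get $n\ge\tfrac12\bar n-\beta^{\cnt}>\tfrac12\bar n-\tfrac14\bar n=\tfrac14\bar n>\beta^{\cnt}\ge 1$, hence $n\ge 1$, $n\vee 1=n$, $\bar n\vee 1=\bar n$, and the left-hand side is at most $\beta(n)/n$. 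It remains to show $\beta(n)/n\le 4\beta(\bar n)/\bar n$: if $n\ge\bar n$ this is immediate from $g$ being decreasing (both arguments exceed $1$), and if $\tfrac14\bar n<n<\bar n$ then $\beta(n)\le\beta(\bar n)$ together with $1/n<4/\bar n$ gives $\beta(n)/n<4\beta(\bar n)/\bar n$.

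The only mildly delicate point is the monotonicity of $x\mapsto\beta(x,\delta)/x$, which the derivative estimate above settles; alternatively one can check directly that $\log(8e(x+1))/x$ is non-increasing for $x\ge 1$ by comparing consecutive integer values, which suffices since in Case~2 both $n$ and $\bar n$ exceed $1$. Everything else is routine bookkeeping with the $\wedge 1$ clipping and with the defining inequality of the event $\cE^{\cnt}$.
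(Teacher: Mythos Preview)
Your proof is correct and follows essentially the same approach as the paper: a case split on whether $\bar n$ exceeds $4\beta^{\cnt}(\delta)$, then using the monotonicity of $\beta$ and of $\beta(\cdot,\delta)/\cdot$ together with the defining inequality of $\cE^{\cnt}$. The only cosmetic difference is that in the ``large $\bar n$'' case the paper routes through the single intermediate point $\tfrac14\bar n$ rather than sub-splitting on $n\ge\bar n$ versus $n<\bar n$, and the paper asserts the monotonicity of $x\mapsto\beta(x,\delta)/x$ where you verify it by differentiation.
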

\begin{proof}
As event $\cE^{\mathrm{\normalfont cnt}}$ holds, we know that for all $t < \tau$,
\begin{align*}n_{\ell}^{t}(s,a) \geq \frac{1}{2}\bar n_{\ell}^{t}(s,a) - \beta^{\cnt}(\delta).
\end{align*}
We now distinguish two cases. First, if $\beta^{\cnt}(\delta) \leq \tfrac{1}{4}\bar n_{\ell}^{t}(s,a)$, then \[\frac {\beta(n_{\ell}^t(s,a), \delta)} {n_{\ell}^t(s,a)}\wedge 1 \leq \frac {\beta(n_{\ell}^t(s,a), \delta)} {n_{\ell}^t(s,a)}\leq \frac {\beta\left(\tfrac{1}{4}\bar n_{\ell}^{t}(s,a), \delta\right)} {\tfrac{1}{4}\bar n_{\ell}^{t}(s,a)} \leq 4   \frac {\beta\left(\bar n_{\ell}^{t}(s,a), \delta\right)} {\bar n_{\ell}^{t}(s,a) \vee 1}\CommaBin\]
where we used that $x \mapsto \beta(x,\delta)/x$ is non-increasing for $x\geq 1$,  $x \mapsto \beta(x,\delta)$ is non-decreasing, and $\beta^{\cnt}(\delta) \geq 1$. Second,
if $\beta^{\cnt}(\delta) > \tfrac{1}{4}\bar n_{\ell}^{t}(s,a)$, a simple derivation gives that
\[\frac {\beta(n_{\ell}^t(s,a), \delta)} {n_{\ell}^t(s,a)}\wedge 1\leq   1 < 4 \frac{\beta^{\cnt}(\delta)}{\bar n_{\ell}^{t}(s,a) \vee 1} \leq 4 \frac{\beta(\bar n_{\ell}^{t}(s,a), \delta)}{\bar n_{\ell}^{t}(s,a) \vee 1}\CommaBin\]
where we used that $1 \leq \beta^{\cnt}(\delta)\leq \beta(0,\delta)$ and $x \mapsto \beta(x,\delta)$ is non-decreasing.
\end{proof}

\begin{lemma}
	\label{lem:sum_1_over_n}
	 For $T\in\N^\star$ and $(u_t)_{t\in\N^\star},$ for a sequence where  $u_t\in[0,1]$ and $U_t \triangleq \sum_{l=1}^t u_\ell$, we get
	\[
		\sum_{t=0}^T \frac{u_{t+1}}{U_t\vee 1} \leq 4\log(U_{T+1}+1).
	\]
\end{lemma}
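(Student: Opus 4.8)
The plan is to compare the discrete sum $\sum_{t=0}^T u_{t+1}/(U_t\vee 1)$ with an integral of $1/x$, exploiting that the increments $u_{t+1}=U_{t+1}-U_t$ lie in $[0,1]$. First I would split off the early terms: as long as $U_t<1$ the denominator is $1$, and since each increment is at most $1$ we accumulate at most $\sum u_{t+1}\le U_{t_0+1}\le 2$ on that initial segment (here $t_0$ is the last index with $U_{t_0}<1$), which is already bounded by, say, $4\log(U_{T+1}+1)$ once we check the base case $T$ small separately; alternatively one can absorb this into the general bound by the standard trick of comparing $U_t\vee 1$ with $(U_{t+1}+1)/2$ or similar. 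The cleaner route: for every $t$ with $U_t\ge 1$ we have $U_t\vee 1 = U_t \ge (U_{t+1})/2$ is false in general, so instead use that on $[U_t,U_{t+1}]$ the function $x\mapsto 1/x$ satisfies $\int_{U_t}^{U_{t+1}} \frac{dx}{x} \ge \frac{U_{t+1}-U_t}{U_{t+1}} = \frac{u_{t+1}}{U_{t+1}}$; this suggests we should aim for a denominator $U_{t+1}$ rather than $U_t$.

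To convert $U_t\vee 1$ in the denominator into something like $U_{t+1}$ (or $U_{t+1}+1$), note that $U_{t+1}=U_t+u_{t+1}\le U_t+1\le 2(U_t\vee 1)$, hence $\frac{u_{t+1}}{U_t\vee 1}\le \frac{2u_{t+1}}{U_{t+1}} = \frac{2u_{t+1}}{U_{t+1}}$, and since $U_{t+1}\ge u_{t+1}$ and $U_{t+1}+1 \ge U_{t+1}$ we may further write $\frac{u_{t+1}}{U_t\vee 1}\le \frac{2u_{t+1}}{U_{t+1}} \le \frac{4u_{t+1}}{U_{t+1}+1}$ using $U_{t+1}+1\le 2U_{t+1}$ when $U_{t+1}\ge 1$ and handling the single step where $U_{t+1}<1$ directly (there $u_{t+1}/( U_t\vee 1)=u_{t+1}\le \log 2 \cdot 4 \cdot$ something — cleaner to just check $u_{t+1}\le 4\log((U_{t+1}+1)/(U_t+1))$ when $U_t = 0$, i.e. $u_1\le 1\le 4\log 2$, true). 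Then I would bound each term by an integral: $\frac{u_{t+1}}{U_{t+1}+1} = \frac{(U_{t+1}+1)-(U_t+1)}{U_{t+1}+1}\le \log\frac{U_{t+1}+1}{U_t+1}$, using the elementary inequality $1-x\le -\log x$ i.e. $\frac{b-a}{b}\le \log(b/a)$ for $0<a\le b$. Summing telescopes:
\[
\sum_{t=0}^T \frac{u_{t+1}}{U_t\vee 1} \le 4\sum_{t=0}^T \log\frac{U_{t+1}+1}{U_t+1} = 4\log\frac{U_{T+1}+1}{U_0+1} = 4\log(U_{T+1}+1),
\]
since $U_0=0$.

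The only genuinely delicate point is the bookkeeping around indices where $U_t<1$, where the factor $U_t\vee 1$ does not behave like $U_t$; the estimate $U_{t+1}\le U_t+1\le 2(U_t\vee 1)$ is precisely what rescues this uniformly (it is valid whether or not $U_t\ge 1$), so I expect no real obstacle — the argument is essentially the standard "sum of $1/n$ is logarithmic" estimate, carried out with pseudo-counts and with the constant $4$ coming from the two successive doublings. I would present it in the order: (1) the inequality $U_t\vee 1\ge U_{t+1}/2$ and hence $U_t\vee 1\ge (U_{t+1}+1)/4$; (2) the per-term logarithmic bound via $\frac{b-a}{b}\le\log(b/a)$; (3) telescoping.
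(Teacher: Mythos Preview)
Your proposal is correct and essentially identical to the paper's proof: the paper also shows $U_t\vee 1\ge (U_{t+1}+1)/4$ (in two steps, via $U_t\vee 1\ge (U_t+1)/2$ and $2(U_t+1)\ge U_{t+1}+1$), then bounds $\frac{u_{t+1}}{U_{t+1}+1}\le \int_{U_t}^{U_{t+1}}\frac{dx}{x+1}$ and telescopes, which is exactly your step~(2)--(3) in integral rather than logarithmic form. Note that your ``hence'' in step~(1) is justified uniformly without any case split, since $4(U_t\vee 1)\ge 2(U_t\vee 1)+2\ge U_{t+1}+2$, so the hand-wringing about indices with $U_{t+1}<1$ is unnecessary.
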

\begin{proof}
	Notice that	\begin{align*}
		\sum_{t=0}^T \frac{u_{t+1}}{U_t\vee 1} &\leq 4 \sum_{t=0}^T \frac{u_{t+1} }{2U_t + 2} \\
		&\leq  4\sum_{t=0}^T \frac{U_{t+1}-U_{t}}{U_{t+1} + 1}\\
		&\leq 4\sum_{t=0}^T \int_{U_t}^{U_{t+1}} \frac{1}{x+1} \mathrm{d}x\\
		& = 4\log(U_{T+1}+1).
	\end{align*}
\end{proof}

\subsection{On Bernstein inequality}
\label{app:Bernstein}
We reproduce a Bernstein-type inequality similar to the one by \citet{talebi2018variance}.
\begin{lemma}\label{lem:Bernstein_via_kl}
Let $p,q\in\Sigma_S,$ where $\Sigma_S$ denotes the probability simplex of dimension $S-1$. For all $\alpha> 0 $, for all functions $f$ defined on $\cS$ with $0\leq f(s) \leq b$, for all $s\in\cS$, if $\KL(p,q)\leq \alpha$ then
\begin{align*}
	|p f - q f |&\leq  \sqrt{2\Var_{q}(f)\alpha}+\frac{2}{3}b \alpha,
\end{align*}
where use the expectation operator defined as $pf \triangleq \E_{s\sim p} f(s)$ and the variance operator defined as
$\Var_p(f) \triangleq \E_{s\sim p} \big(f(s)-\E_{s'\sim p}f(s')\big)^2 = p(f-pf)^2.$
\end{lemma}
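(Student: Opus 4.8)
The plan is to obtain this Bernstein-type transport inequality by combining the variational (Donsker--Varadhan) lower bound for the Kullback--Leibler divergence with the classical Bernstein control of the log-moment generating function of a bounded random variable, and then plugging in an explicit value of the dual parameter. Two preliminary remarks first: since $\KL(p,q)\le\alpha<\infty$, the support of $p$ is contained in that of $q$; and $0\le qf\le b$ because $0\le f\le b$. If $\Var_q(f)=0$, then $f$ is constant (equal to $qf$) on the support of $q$, hence also on that of $p$, so $pf=qf$ and the claim is trivial; I therefore assume $\Var_q(f)>0$ below.

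First I would prove that, for every $\lambda\in\R$,
\[
\lambda\,(pf-qf)\ \le\ \KL(p,q)+\log \E_{s\sim q}\!\left[e^{\lambda\,(f(s)-qf)}\right].
\]
This is one application of Jensen's inequality: with $g(s)\triangleq\lambda(f(s)-qf)$,
\[
\log\sum_{s} q_s e^{g(s)}\ =\ \log\!\!\sum_{s:\,p_s>0}\! p_s\,\frac{q_s e^{g(s)}}{p_s}\ \ge\ \sum_{s:\,p_s>0}\! p_s\log\frac{q_s e^{g(s)}}{p_s}\ =\ \lambda(pf-qf)-\KL(p,q).
\]
Next, the random variable $Y\triangleq f(s)-qf$, with $s\sim q$, is centered, bounded above by $b$, and satisfies $\E_{s\sim q}[Y^2]=\Var_q(f)$; by Bennett's inequality together with the elementary bound $e^x-1-x\le\frac{x^2/2}{1-x/3}$ for $0\le x<3$ (see \citealp{boucheron2013concentration}), one has $\log\E_{s\sim q}[e^{\lambda Y}]\le\frac{\lambda^2\Var_q(f)/2}{1-\lambda b/3}$ for all $0<\lambda<3/b$. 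Combining the two bounds and dividing by $\lambda>0$ gives, for every $\lambda\in(0,3/b)$,
\[
pf-qf\ \le\ \frac{\alpha}{\lambda}+\frac{\lambda\,\Var_q(f)/2}{1-\lambda b/3}.
\]

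To conclude I would take $\lambda^\star\triangleq\big(\tfrac{b}{3}+\sqrt{\Var_q(f)/(2\alpha)}\big)^{-1}$, which lies in $(0,3/b)$. A short computation gives $\alpha/\lambda^\star=\tfrac{b\alpha}{3}+\sqrt{\alpha\,\Var_q(f)/2}$ and $\frac{\lambda^\star\Var_q(f)/2}{1-\lambda^\star b/3}=\sqrt{\alpha\,\Var_q(f)/2}$, so that
\[
pf-qf\ \le\ \sqrt{2\,\Var_q(f)\,\alpha}+\frac{b\alpha}{3}\ \le\ \sqrt{2\,\Var_q(f)\,\alpha}+\frac{2}{3}b\alpha.
\]
Running the same three steps with $-f$ in place of $f$ (equivalently, using $qf-f(s)\le qf\le b$, which has the same $q$-variance) bounds $qf-pf$ by the identical quantity, and taking the maximum of the two directions yields the stated bound on $|pf-qf|$.

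The only delicate points are bookkeeping: using $\KL$ with the right argument so that the exponential moments appearing via Donsker--Varadhan are taken under $q$ (the second argument of $\KL$), checking that both $Y$ and $-Y$ are bounded above by $b$ so that the same variance-dependent MGF bound covers both tails, and verifying the arithmetic for $\lambda^\star$. I do not anticipate any genuine obstacle here; the result is a routine entropy-to-moment (transportation) inequality, in the spirit of \citet{talebi2018variance}.
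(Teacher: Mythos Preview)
Your proof is correct and follows essentially the same approach as the paper: both combine the Donsker--Varadhan variational formula for $\KL(p,q)$ with a Bennett/Bernstein control of the log-moment generating function of $f-qf$ under $q$, and then optimize over the dual parameter $\lambda$. The only cosmetic difference is in how that optimization is carried out: the paper uses the Bennett form $\phi(\lambda)=e^\lambda-\lambda-1$, passes to its Legendre conjugate $h(u)\ge u^2/(2(1+u/3))$, and solves the resulting quadratic in $qf-pf$, whereas you use the already-weakened Bernstein form $\tfrac{\lambda^2\sigma^2/2}{1-\lambda b/3}$ and plug in an explicit $\lambda^\star$; both routes land on $\sqrt{2\Var_q(f)\alpha}+O(b\alpha)$ and the constants match.
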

\begin{proof}
We only prove that
\begin{align*}
	q f - p f &\leq  \sqrt{2\Var_{q}(f)\alpha}+\frac{2}{3}b \alpha
\end{align*}
as an upper bound on $p f - q f$ can be obtained analogically. We assume that $q f > p f,$ otherwise the inequality is trivially true. Furthermore, without loss of generality, we consider that $0 \leq f(s) \leq 1$ for all $s\in\cS$: when $f$ is bounded by $b$, we can apply the inequality to $f/b$ and by homogeneity, we get the desired general version.
%
Using the variational formula for the Kullback-Leibler divergence, we have
\begin{align*}
	\KL(p,q) &= \sup_{g \in \R^S} p g - \log(q e^{g})\\
	&\geq \sup_{\lambda\geq 0} \lambda(q f-pf) - \log\left(q e^{\lambda(q f -f)}\right),
\end{align*}
where we chose $g \triangleq \lambda (q f -f)$ and used that $pqf = qf$ since $qf$ is a scalar. Given  that the mapping $u \to (e^u -u -1)/u^2$ is non-decreasing in $u$ and that $q f -f(s) \leq 1,$ we obtain
\[
e^{\lambda \big(q f -f(s)\big)} - \lambda\big( q f -f(s) \big) - 1 \leq \big(q f - f(s)\big)^2 (e^\lambda -\lambda-1).
\]
Furthermore, by taking the expectation of the previous inequality with respect to $q$, using the property $\log(1+x)\leq x,$ and define function $\phi(\lambda) \triangleq e^\lambda -\lambda -1,$ we get
\begin{align*}
	\log(q e^{\lambda(q f-f)}) \leq \log\big(1+  \Var_{q}(f) \phi(\lambda)\big)\leq  \Var_{q}(f) \phi(\lambda).
\end{align*}
Now using the previous inequality in the variational formula above with the fact that the convex conjugate of $\phi$ is for $u\geq 0$, $\sup_{\lambda\geq 0}\lambda u -\phi(\lambda) \triangleq h(u) \geq u^2/\left(2(1+u/3)\right)$, yields
\begin{align*}
	\KL(p,q) &\geq \sup_{\lambda\geq 0} \lambda(q f-pf) -  \Var_{q}(f) \phi(\lambda)\\
	&=  \Var_{q}(f) h\left( \frac{q f-pf}{ \Var_{q}(f)}  \right)\\
	&\geq \frac{(q f-pf)^2}{2\left(\Var_{q}(f)+(q f-pf)/3\right)}\cdot
\end{align*}
Since by assumption $\KL(p,q) \leq \alpha$, we get
\[
2\alpha\left(\Var_{q}(f)+(q f-pf)/3\right)- (q f-pf)^2\geq 0.
\]
Hence $qf - pf$ is upper bounded by the positive root of the polynomial in the left hand side,
\[\frac{\alpha}{3} + \sqrt{2\alpha \Var_q(f) + \frac{\alpha^2}{9}} \leq \sqrt{2\alpha \Var_q(f)} + \frac{2}{3}\alpha,\]
using that $\sqrt{x+y} \leq \sqrt{x} + \sqrt{y}$.
\end{proof}

\begin{lemma}
\label{lem:switch_variance_bis}
Let $p,q\in\Sigma_S$ and  $f$ is a function defined on $\cS$ such that $0\leq f(s) \leq b$ for all $s\in\cS$. If $\KL(p,q)\leq \alpha$ then
\begin{align*}
  \Var_q(f) &\leq 2\Var_p(f) +4b^2 \alpha \quad \text{and}\\
  \Var_p(f) &\leq 2\Var_q(f) +4b^2 \alpha.
\end{align*}
\end{lemma}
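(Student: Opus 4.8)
The plan is to exploit the elementary bound $\Var_p(f)\le p(f-c)^2$, valid for every constant $c$, together with the Bernstein-type transfer inequality of Lemma~\ref{lem:Bernstein_via_kl}: since $\KL(p,q)\le\alpha$, for any function $g$ with $0\le g\le b'$ it gives $|pg-qg|\le\sqrt{2\Var_q(g)\alpha}+\tfrac23 b'\alpha$. For the second inequality $\Var_p(f)\le 2\Var_q(f)+4b^2\alpha$, I would take $c=qf$ and set $g\triangleq(f-qf)^2$, which takes values in $[0,b^2]$ because $|f(s)-qf|\le b$. Then $\Var_p(f)\le pg=qg+(p-q)g$ with $qg=\Var_q(f)$, so everything reduces to controlling $(p-q)g$.

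For that term I would apply Lemma~\ref{lem:Bernstein_via_kl} to $g$ with bound $b^2$, giving $(p-q)g\le\sqrt{2\Var_q(g)\alpha}+\tfrac23 b^2\alpha$, and then use the self-bounding bound $\Var_q(g)\le qg^2\le b^2\,qg=b^2\Var_q(f)$ (valid since $g\le b^2$). Hence $(p-q)g\le b\sqrt{2\Var_q(f)\alpha}+\tfrac23 b^2\alpha$, and applying $\sqrt{xy}\le\tfrac12(x+y)$ with $x=\Var_q(f)$, $y=2b^2\alpha$ yields $(p-q)g\le\tfrac12\Var_q(f)+\tfrac53 b^2\alpha$. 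Plugging back, $\Var_p(f)\le\tfrac32\Var_q(f)+\tfrac53 b^2\alpha\le 2\Var_q(f)+4b^2\alpha$.

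For the first inequality $\Var_q(f)\le 2\Var_p(f)+4b^2\alpha$, the argument is structurally the same, but now I would center at $c=pf$ and set $h\triangleq(f-pf)^2\in[0,b^2]$, so that $\Var_q(f)\le qh=ph+(q-p)h$ with $ph=\Var_p(f)$. Applying Lemma~\ref{lem:Bernstein_via_kl} to $h$ and using $\Var_q(h)\le qh^2\le b^2\,qh$ gives $qh\le\Var_p(f)+b\sqrt{2(qh)\alpha}+\tfrac23 b^2\alpha$. The one extra wrinkle here is that $qh$ reappears on the right; absorbing $b\sqrt{2(qh)\alpha}\le\tfrac12 qh+b^2\alpha$ and solving the resulting linear inequality in $qh$ gives $qh\le 2\Var_p(f)+\tfrac{10}{3}b^2\alpha$, whence $\Var_q(f)\le 2\Var_p(f)+4b^2\alpha$.

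The only real subtlety — the "main obstacle", such as it is — is that the KL hypothesis is not symmetric in $p$ and $q$, so one cannot just swap their roles to get the second bound from the first; instead the centering constant must be chosen ($qf$ in one direction, $pf$ in the other) precisely so that the $\Var_q$ term produced by Lemma~\ref{lem:Bernstein_via_kl} can be closed off, in one case directly as $\Var_q(f)$ and in the other by the self-bounding step plus a linear-inequality solve. Everything else is routine: verifying the range $[0,b^2]$ of the squared centred function, the inequality $\Var\le$ second moment, the self-bounding bound $g^2\le b^2 g$, and the AM–GM step, while tracking constants, which remain comfortably below the claimed factor $4$.
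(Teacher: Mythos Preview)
Your argument is correct. It differs from the paper's proof, which is worth noting: the paper lifts to the product space $\cS\times\cS$, sets $\tilde p = p\otimes p$, $\tilde q = q\otimes q$, and applies Lemma~\ref{lem:Bernstein_via_kl} once to the single function $g(x,y)=(f(x)-f(y))^2$, using $\KL(\tilde p,\tilde q)=2\KL(p,q)\le 2\alpha$. Because $\tilde p g = 2\Var_p(f)$ and $\tilde q g = 2\Var_q(f)$, the two-sided bound $|\tilde p g - \tilde q g|\le \tfrac12\tilde q g + \tfrac{10}{3}b^2\alpha$ immediately yields both inequalities with the same constants you obtain. The product-space trick handles the asymmetry of the KL hypothesis in one stroke, whereas you handle it by choosing two different centerings and, in one direction, solving a self-referential linear inequality in $qh$. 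Your route stays on $\cS$ and is slightly more elementary; the paper's is a bit more symmetric and economical. Either way the constants land in the same place.
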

\begin{proof}
Let $\tp$ be the distribution of the pair of random variables $(X,Y)$ where $X,Y$ are i.i.d.\,according to the distribution $p$. Similarly, let $\tq$ be the distribution of the pair of random variables $(X,Y)$ where $X,Y$ are i.i.d.\,according to distribution $q$. Since Kullback–Leibler divergence is additive for independent distributions, we know that
\[\KL(\tp,\tq) = 2\KL(p,q) \leq 2\alpha.\]
Using Lemma~\ref{lem:Bernstein_via_kl} for the function $g(x,y) = (f(x)-f(y))^2$ defined on $\cS^2$, such that  $0\leq g\leq b^2,$ we get
\begin{align*}
  |\tp g- \tq g| &\leq \sqrt{4\Var_{\tq}(g) \alpha} + \frac{4}{3}b^2 \alpha\\
  &\leq  \sqrt{4 b^2 \alpha \tq g  } + \frac{4}{3}b^2 \alpha\\
  &\leq \frac{1}{2} \tq g + \frac{10}{3} b^2 \alpha\,,
\end{align*}
where in the last line we used $2\sqrt{xy}\leq x +y$ for $x,y\geq 0$. In particular we obtain
\begin{align*}
  \tp g &\leq \frac{3}{2} \tq g + \frac{10}{3} b^2\alpha\\
  \tq g &\leq 2 \tp g + \frac{20}{3}b^2 \alpha \,.
\end{align*}
To conclude, it remains to note that
\[ \tp g = 2\Var_p(f) \text{ and } \tq g = 2\Var_q(f). \]
\end{proof}

\begin{lemma}
	\label{lem:switch_variance}
	For $p,q\in\Sigma_S$, for $f,g$ two functions defined on $\cS$ such that $0\leq g(s),f(s) \leq b$ for all $s\in\cS$, we have that
	\begin{align*}
 \Var_p(f) &\leq 2 \Var_p(g) +2 b p|f-g|\quad\text{and} \\
 \Var_q(f) &\leq \Var_p(f) +3b^2\|p-q\|_1,
\end{align*}
where we denote the absolute operator by $|f|(s)= |f(s)|$ for all $s\in\cS$.
\end{lemma}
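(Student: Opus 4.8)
The plan is to prove the two inequalities separately, each by a short direct computation, and in both cases to exploit the variational characterisation $\Var_p(f)=\min_{c\in\R}p(f-c)^2$ together with the boundedness $0\le f,g\le b$.

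For the first inequality, I would center the variance of $f$ at the constant $pg$ rather than at $pf$, which only increases the value: $\Var_p(f)\le p(f-pg)^2$. Writing $f-pg=(g-pg)+(f-g)$ and applying the pointwise bound $(x+y)^2\le 2x^2+2y^2$ gives $\Var_p(f)\le 2\,p(g-pg)^2+2\,p(f-g)^2=2\Var_p(g)+2\,p(f-g)^2$. Since $0\le f,g\le b$ we have $|f-g|\le b$, hence $(f-g)^2\le b\,|f-g|$ pointwise, and therefore $2\,p(f-g)^2\le 2b\,p|f-g|$. Combining yields $\Var_p(f)\le 2\Var_p(g)+2b\,p|f-g|$, which is exactly the claimed bound.

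For the second inequality, I would again center at $pf$: $\Var_q(f)=\min_{c\in\R}q(f-c)^2\le q(f-pf)^2$. Splitting the measure, $q(f-pf)^2=p(f-pf)^2+(q-p)(f-pf)^2=\Var_p(f)+(q-p)(f-pf)^2$. Because $0\le f\le b$ forces $f-pf\in[-b,b]$, the function $(f-pf)^2$ is bounded by $b^2$, so by Hölder's inequality $(q-p)(f-pf)^2\le b^2\,\|p-q\|_1\le 3b^2\,\|p-q\|_1$, and the conclusion follows. (The stated constant $3$ is loose — $b^2\|p-q\|_1$ already suffices — but it matches the form in the lemma and is all that is needed downstream.)

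The argument is entirely routine; there is no genuine obstacle. The only two points requiring mild care are (i) choosing the correct centering constant inside each variance ($pg$ for the first bound so that $\Var_p(g)$ appears, $pf$ for the second so that $\Var_p(f)$ appears) and (ii) using the uniform bound $b$ to trade a squared difference for $b$ times an absolute difference, or a squared deviation for $b^2$, so that the cross/remainder terms collapse to the desired $p|f-g|$ and $\|p-q\|_1$ expressions.
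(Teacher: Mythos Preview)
Your proof is correct. For the first inequality, your argument and the paper's are essentially the same: both split $f-(\text{center})$ into a $(g-pg)$ piece and a $(f-g)$ piece, apply $(x+y)^2\le 2x^2+2y^2$, and then bound $(f-g)^2\le b|f-g|$. The only cosmetic difference is that you center at $pg$ (invoking the variational characterization $\Var_p(f)=\min_c p(f-c)^2$) while the paper centers at $pf$ and obtains $2\Var_p(f-g)$ in place of your $2p(f-g)^2$; either way the last step is identical.

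For the second inequality your route is slightly different and in fact cleaner. The paper expands $\Var_q(f)=qf^2-(qf)^2$, writes this as $\Var_p(f)+(q-p)f^2+\big((pf)^2-(qf)^2\big)$, and bounds the two remainder terms by $b^2\|p-q\|_1$ and $2b^2\|p-q\|_1$ respectively, which is exactly where the constant $3$ comes from. By centering at $pf$ and bounding a single remainder $(q-p)(f-pf)^2$, you get away with constant $1$, as you observe. Both arguments are elementary; yours is a touch more economical.
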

\begin{proof}
First note that
\[
\Var_p(f) = p(f-g+ g-p g + p g- p f)^2 \leq 2 p(f-g - p f + p g)^2 +2 p(g-p g)^2 = 2\Var_p(f-g)+2\Var_p(g).
\]
From the above we can immediately conclude the proof of the first inequality with
\[
\Var_p(f-g) \leq p(f-g)^2 \leq b p|f-g|,
\]
where we used that for all $s\in\cS$, $0\leq |f(s)-g(s)| \leq b$. For the second inequality, using the Hölder inequality,
\begin{align*}
	\Var_q(f) &= pf^2 - (pf)^2 +(q-p)f^2 + (pf)^2 -(qf)^2 \\
	&\leq \Var_p(f) + b^2\|p-q\|_1 +2b^2\|p-q\|_1 \\
	&\leq \Var_p(f) +3 b^2 \|p-q\|_1.
\end{align*}
\end{proof}

\subsection{An auxiliary inequality}
\begin{lemma}\label{lem:reverse_inequality}
Let $A,B,C,D,E,$ and $\alpha$ be positive scalars such that $1\leq B\leq E$ and $\alpha\geq e$. If $\tau\geq 0$ satisfies
\begin{equation}
  \label{eq:functionnal_ineq_lemma}
\tau \leq C\sqrt{\tau \big( A \log(\alpha \tau) +B \log(\alpha\tau)^2\big)}+ D \left( A\log(\alpha\tau)+E\log(\alpha\tau)^2\right)
\end{equation}
then
\[
\tau \leq C^2 \left(A+B\right) C_1^2 + \left(D+2\sqrt{D}C\right)\left(A+E\right)C_1^2+1,
\]
where
\[
C_1 = \frac{8}{5}\log\left( 11 \alpha^2 \left(A+E\right)\left(C+D\right)\right).
\]
\end{lemma}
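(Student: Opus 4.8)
The plan is to invert the transcendental inequality by the usual three–step recipe. Throughout write $L=\log(\alpha\tau)$ and set $K=C^2(A+B)+(D+2\sqrt D C)(A+E)$, i.e.\ the coefficient appearing in the target bound (minus the "$+1$"). First dispose of the trivial case: if $\tau<1$ the claimed bound holds simply because of the additive $+1$, so from now on assume $\tau\ge 1$; together with $\alpha\ge e$ this gives $L=\log(\alpha\tau)\ge\log\alpha\ge 1$, a fact used repeatedly.

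Second, reduce the hypothesis to a clean inequality of the form $\tau\le K\log^2(\alpha\tau)$. View the hypothesis as affine–quadratic in $\sqrt\tau$: $\tau\le\beta\sqrt\tau+\gamma$ with $\beta=C\sqrt{AL+BL^2}$ and $\gamma=D(AL+EL^2)$. Solving the quadratic and bounding the discriminant by $\sqrt{\beta^2+4\gamma}\le\beta+2\sqrt\gamma$ gives $\sqrt\tau\le\beta+\sqrt\gamma$, hence $\tau\le\beta^2+2\beta\sqrt\gamma+\gamma$. Since $B\le E$ one has $\sqrt{(AL+BL^2)(AL+EL^2)}\le AL+EL^2$, so $2\beta\sqrt\gamma\le 2C\sqrt D\,(AL+EL^2)$; collecting terms and using $L\ge 1$ (so $AL\le AL^2$) yields
\[
\tau\;\le\;C^2(AL+BL^2)+(D+2\sqrt D C)(AL+EL^2)\;\le\;K L^2\;=\;K\log^2(\alpha\tau).
\]

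Third, invert $\tau\le K\log^2(\alpha\tau)$. It suffices to prove $\tau\le KC_1^2$. Consider $\phi(x)=x-K\log^2(\alpha x)$ on $[1,\infty)$. Check first that $C_1\ge\log(\alpha KC_1^2)$, equivalently $C_1-2\log C_1\ge\log(\alpha K)$; this makes $\phi(KC_1^2)=K\bigl(C_1^2-\log^2(\alpha KC_1^2)\bigr)\ge 0$, and since $C_1\ge 2$ it also gives $C_1^2\ge 2C_1\ge 2\log(\alpha KC_1^2)$. Because $x\mapsto\log(\alpha x)/x$ is nonincreasing on $[1,\infty)$ (as $\alpha x\ge e$), it follows that $\phi'(x)=1-2K\log(\alpha x)/x\ge 0$ for all $x\ge KC_1^2$. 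Hence $\phi$ is nondecreasing on $[KC_1^2,\infty)$ with $\phi(KC_1^2)\ge 0$, so the inequality $\phi(\tau)\le 0$ obtained in Step~2 forces $\tau\le KC_1^2$; adding the $+1$ gives the stated conclusion.

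The main obstacle is entirely in the numerical verification $C_1-2\log C_1\ge\log(\alpha K)$ for the prescribed $C_1=\tfrac85\log\!\bigl(11\alpha^2(A+E)(C+D)\bigr)$. Using $K\le(C+\sqrt D)^2(A+E)$ (immediate from $B\le E$), this boils down to showing $\tfrac85\log\!\bigl(11\alpha^2(A+E)(C+D)\bigr)$ dominates $\log\alpha+2\log(C+\sqrt D)+\log(A+E)$ together with the lower-order $2\log C_1\sim\log\log(\cdots)$ correction; the factor $\tfrac85$, the constant $11$, and the surplus $\tfrac{11}{5}\log\alpha$ available from $\alpha\ge e$ are exactly what is needed to absorb those remaining terms, splitting on whether $D\ge 1$ (so $C+\sqrt D\le C+D$) or $D<1$, and checking $C_1\ge 2$ (the borderline/degenerate parameter ranges being precisely those forcing $\tau<1$, already handled in Step~0). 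Everything outside this constant bookkeeping is routine.
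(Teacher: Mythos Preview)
Your route differs from the paper's in the order of operations. The paper first derives a crude a~priori bound on $\tau$ by replacing each $\log(\alpha\tau)$ with a small power of $\alpha\tau$ (via $\log x\le x^\beta/\beta$); this directly yields $\log(\alpha\tau)\le C_1$, with $C_1$ \emph{defined} to be that bound. It then substitutes $L\le C_1$ back into the original hypothesis and solves the resulting quadratic in $\sqrt\tau$, landing on $KC_1^2$ with no further verification needed. You reverse the order: solve the quadratic first with $L=\log(\alpha\tau)$ still free, obtaining the clean form $\tau\le KL^2$, and only then try to invert via the monotonicity of $\phi(x)=x-K\log^2(\alpha x)$.

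Your Step~2 reduction is correct. The gap is in Step~3: the ``routine bookkeeping'' you defer --- checking $C_1-2\log C_1\ge\log(\alpha K)$ --- does not go through for the $C_1$ in the statement. Since $K\ge C^2(A+B)$ one has $\log(\alpha K)\sim 2\log C$ as $C\to\infty$, whereas $C_1=\tfrac85\log\!\bigl(11\alpha^2(A+E)(C+D)\bigr)\sim\tfrac85\log C$; hence $C_1-2\log C_1-\log(\alpha K)\sim-\tfrac25\log C\to-\infty$, and the required inequality fails (take e.g.\ $A=B=E=D=1$, $\alpha=e$, $C$ large). In other words, once you have weakened the hypothesis to $\tau\le K\log^2(\alpha\tau)$, inversion only delivers $\tau\lesssim K\log^2(\alpha K)$, which is strictly larger than $KC_1^2$ in the regime $C\to\infty$. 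The paper's order sidesteps this because its crude bound is applied to the \emph{original} inequality, where the dominant right-hand term scales like $\sqrt\tau$ rather than like $K$; this yields a tighter a~priori control on $L$ than your derived inequality $\tau\le KL^2$ can give, and is precisely what is needed to match the prescribed $C_1$.
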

\begin{proof}
We can assume that $\tau \geq 1$ otherwise the result is trivially true. Using $\log(x)\leq x^\beta /\beta$ for $x\geq 0, \beta >0,$ and $\alpha\geq 1$ we get
\begin{align*}
  \tau \left(A\log(\alpha \tau) + B\log(\alpha \tau )^2\right)&\leq \tau \left(4A\alpha^{1/4}\tau^{1/4}+B(8\alpha^{1/8}\tau^{1/8})^2 \right)\\
  &\leq (4A\alpha^{1/4}+64B\alpha^{1/4})\tau^{5/4}\\
  &\leq 64 \alpha^{1/4}(A+B)\tau^{5/4},
\end{align*}
and consequently,
\[
C\sqrt{\tau \left(A\log(\alpha\tau)+B\log(\alpha\tau)^2\right)}\leq 8C \alpha \sqrt{A+B} \tau^{5/8}.
\]
Similarly,
\begin{align*}
  D\left(A \log(\alpha \tau) + E\log(\alpha\tau)^2\right) &\leq D \left( 8A\alpha \tau^{5/8} +E \left(\frac{16}{5} \alpha^{5/16} \tau^{5/16}\right)^{\!\!2}\right)\\
  & \leq 11 D\alpha\left(A +E\right) \tau^{5/8}.
\end{align*}
Using the above two inequalities in combination with~\eqref{eq:functionnal_ineq_lemma} we obtain a first crude upper bound
\[
\tau \leq \left(11\alpha (A+E)(C+D)\right)^{8/5}.
\]
if we let $C_1$ be the constant defined as
\[
C_1 \triangleq \frac{8}{5}\log\left( 11 \alpha^2 (A+E)(C+D)\right),
\]
then upper-bounding the log in~\eqref{eq:functionnal_ineq_lemma} with $C_1$ yields a simpler inequality verified by $\tau,$
\[
\tau \leq C\sqrt{\tau \left( A C_1 +B C_1^2\right)}+D\left((A C_1+E C_1^2\right).
\]
Next, define the constants $a  \triangleq C \sqrt{A C_1 + B C_1^2}$ and $b \triangleq D(AC_1+BC_1^2)$. Hence, $x = \sqrt{\tau}$ satisfies
\[x^2 -a x -b \leq 0,\]
which implies that $x$ is upper-bounded by the larger roots of the polynomial above
\[
x \leq \frac{a + \sqrt{a^2+4b}}{2} \leq a+\sqrt{b}.
\]
We there deduce the following bound on $\tau$
\begin{align*}
  \tau &\leq (a+\sqrt{b})^2\\
  &\leq a^2 +2\sqrt{b}a +b\\
  &\leq C^2 \left(A+B\right) C_1^2 + \left(D+2\sqrt{D}C\right)\left(A+E\right)C_1^2,
\end{align*}
where in the last inequality we used  that $C_1 \leq C_1^2$ since $\alpha\geq e$ and $B\leq E$.
\end{proof}

\end{document}